\newcommand{\pushright}[1]{\ifmeasuring@#1\else\omit\hfill$\displaystyle#1$\fi\ignorespaces}
\newcommand{\pushleft}[1]{\ifmeasuring@#1\else\omit$\displaystyle#1$\hfill\fi\ignorespaces}
\newcommand{\citep}[1]{\cite{#1}}
\def\beq{\begin{equation}}
\def\eeq{\end{equation}}
\def\beqa{\begin{eqnarray}}
\def\eeqa{\end{eqnarray}}
\def\beqan{\begin{eqnarray*}}
\def\eeqan{\end{eqnarray*}}
\def\R{{\mathbb{R}}}
\DeclareMathOperator*{\argmin}{arg\,min}
\DeclareMathOperator*{\argmax}{arg\,max}
\DeclareMathOperator{\diag}{Diag}
\def\x{\bm{x}}
\newtheorem{theorem}{Theorem}
\newtheorem{lemma}{Lemma}
\theoremstyle{definition}
\newtheorem{definition}{Definition}
\newtheorem{assumption}{Assumption}
\def\phat{\wh{\p}}
\def\qhat{\wh{\q}}
\def\xhat{\wh{\x}}
\def\zhat{\widehat{\z}}
\def\arr{\rightarrow}
\def\Exp{\mathbb{E}}
\def\Cov{\mathrm{Cov}}
\def\alphabar{\overline{\alpha}}
\def\etabar{\overline{\eta}}
\def\gammabar{\overline{\gamma}}
\def\km{k\! - \!}
\def\kp{k\! + \!}
\def\lp{\ell\! + \!}
\def\lm{\ell\! - \!}
\def\Lm{L\! - \!}
\newcommand{\zero}{\mathbf{0}}
\newcommand{\bbf}{\mathbf{b}}
\newcommand{\fbf}{\mathbf{f}}
\newcommand{\gbf}{\mathbf{g}}
\newcommand{\pbf}{\mathbf{p}}
\newcommand{\pbfhat}{\widehat{\mathbf{p}}}
\newcommand{\qbf}{\mathbf{q}}
\newcommand{\qbfhat}{\widehat{\mathbf{q}}}
\newcommand{\rbf}{\mathbf{r}}
\newcommand{\sbf}{\mathbf{s}}
\newcommand{\wbf}{\mathbf{w}}
\newcommand{\xbf}{\mathbf{x}}
\newcommand{\ybf}{\mathbf{y}}
\newcommand{\zbf}{\mathbf{z}}
\newcommand{\zbfhat}{\widehat{\mathbf{z}}}
\newcommand{\Abf}{\mathbf{A}}
\newcommand{\Bbf}{\mathbf{B}}
\newcommand{\Gbf}{\mathbf{G}}
\newcommand{\Kbf}{\mathbf{K}}
\newcommand{\Pbf}{\mathbf{P}}
\newcommand{\Qbf}{\mathbf{Q}}
\newcommand{\Rbf}{\mathbf{R}}
\newcommand{\Ubf}{\mathbf{U}}
\newcommand{\Vbf}{\mathbf{V}}
\newcommand{\Wbf}{\mathbf{W}}
\def\betabf{{\boldsymbol \beta}}
\def\lambdabar{\overline{\lambda}}
\def\Lambdabar{\overline{\Lambda}}
\def\xibf{{\boldsymbol \xi}}
\def\varphibf{{\boldsymbol \varphi}}
\def\Psibf{\boldsymbol{\Psi}}
\newcommand{\thetabar}{{\overline{\theta}}}
\newcommand{\phibf}{{\bm{\phi}}}
\newcommand{\mubar}{\overline{\mu}}
\newcommand{\tran}{^{\text{\sf T}}}
\def\eqd{\stackrel{d}{=}}
\def\PLeq{\stackrel{PL(2)}{=}}
\def\Norm{{\mathcal N}}
\def\Range{\mathrm{Range}}
\def\alphabar{\overline{\alpha}}
\newcommand*\dif{\mathop{}\!\mathrm{d}} 
\newcommand{\bkt}[1]{{\left< #1 \right>}}
\def\Gset{\mathfrak{G}}
\def\Gsetbar{\overline{\mathfrak{G}}}
\providecommand{\old}[1]{ }
\providecommand{\mmse}{\text{\sf mmse}}
\providecommand{\map}{\text{\sf map}}
\providecommand{\mest}{\text{\sf m-est}}
\providecommand{\mc}{\mathcal}
\providecommand{\ie}{\rm i.e.}
\providecommand{\T}{^\top}
\providecommand{\wb}{\overline}
\providecommand{\wt}{\widetilde}
\providecommand{\wh}{\widehat}
\providecommand{\norm}[1]{\left\|#1\right\|}
\providecommand{\Real}{\mathbb{R}}
\providecommand{\xhat}{\wh{\x}}
\providecommand{\zhat}{\wh{\z}}
\providecommand{\phat}{\wh{\p}}
\providecommand{\qhat}{\wh{\q}}
\providecommand{\i}{\bm{i}}
\providecommand{\j}{\bm{j}}
\providecommand{\p}{\mathbf{p}}
\providecommand{\q}{\mathbf{q}}
\renewcommand{\r}{\bm{r}}
\providecommand{\s}{\bm{s}}
\providecommand{\w}{\mathbf{w}}
\providecommand{\x}{\bm{x}}
\providecommand{\y}{\bm{y}}
\providecommand{\z}{\mathbf{z}}
\providecommand{\A}{\bm{A}}
\providecommand{\I}{\bm{I}}
\providecommand{\U}{\bm{U}}
\providecommand{\V}{\mathbf{V}}
\providecommand{\W}{\mathbf{W}}
\providecommand{\bbf}{\mathbf{b}}
\providecommand{\fbf}{\mathbf{f}}
\providecommand{\gbf}{\mathbf{g}}
\providecommand{\pbf}{\mathbf{p}}
\providecommand{\qbf}{\mathbf{q}}
\providecommand{\rbf}{\mathbf{r}}
\providecommand{\sbf}{\mathbf{s}}
\providecommand{\wbf}{\mathbf{w}}
\providecommand{\xbf}{\mathbf{x}}
\providecommand{\ybf}{\mathbf{y}}
\providecommand{\zbf}{\mathbf{z}}
\providecommand{\Abf}{\mathbf{A}}
\providecommand{\Bbf}{\mathbf{B}}
\providecommand{\Gbf}{\mathbf{G}}
\providecommand{\Kbf}{\mathbf{K}}
\providecommand{\Pbf}{\mathbf{P}}
\providecommand{\Qbf}{\mathbf{Q}}
\providecommand{\Rbf}{\mathbf{R}}
\providecommand{\Ubf}{\mathbf{U}}
\providecommand{\Vbf}{\mathbf{V}}
\providecommand{\Wbf}{\mathbf{W}}
\providecommand{\phibf}{\mbf{\phi}}
\providecommand{\mcN}{\mathcal{N}}
\title{Inference with Deep Generative Priors\\ in High Dimensions}
    \author{
    Parthe Pandit, Mojtaba Sahraee-Ardakan, Sundeep Rangan, Philip Schniter, and Alyson~K.~Fletcher
        \thanks{P. Pandit, M. Sahraee-Ardakan, and A.~K.~Fletcher
        (email: \{parthepandit,msahraee,akfletcher\}@ucla.edu) are with
        the Departments of Statistics and Electrical and Computer Engineering,
        the University of California, Los Angeles, CA, 90095\@.
        Their work was supported in part by the National Science Foundation under Grants 1254204 and 1738286, and the Office of Naval Research under Grant N00014-15-1-2677\@.
        S. Rangan (email: srangan@nyu.edu) is with
        the Department of Electrical and Computer Engineering,
        New York University, Brooklyn, NY, 11201\@.
        His work was supported in part by the National Science Foundation
        under Grants 1116589, 1302336, and 1547332,
        as well as the industrial affiliates of NYU WIRELESS\@.
        P. Schniter (email: schniter.1@osu.edu) is with 
        the Department of Electrical and Computer Engineering,
        The Ohio State University, Columbus, OH, 43210\@.
        His work was supported in part by the National Science Foundation
        under Grant 1716388.}
        \thanks{Portions of this paper were presented at the IEEE International Symposium on Information Theory in 2018 \cite{fletcher2018inference} and 2019 \cite{pandit2019asymptotics}.}
    }
\begin{document}

\doublespacing

\maketitle


\begin{abstract}
Deep generative priors offer powerful models for complex-structured data, such as images, audio, and text.  
Using these priors in inverse problems typically requires estimating 
the input and/or hidden signals in a multi-layer deep neural network from observation of its output. 
While these approaches have been successful in practice, rigorous performance analysis 
is complicated by the non-convex nature of the underlying optimization problems.
This paper presents a novel algorithm, 
Multi-Layer Vector Approximate Message Passing (ML-VAMP), for inference in multi-layer stochastic neural networks.
\mbox{ML-VAMP} can be configured to compute maximum a priori (MAP) or 
approximate minimum mean-squared error (MMSE) estimates for these networks.
We show that the performance of ML-VAMP can be exactly predicted 
in a certain high-dimensional random limit.
Furthermore, under certain conditions, ML-VAMP yields estimates that achieve 
the minimum (i.e., Bayes-optimal) MSE as predicted by the replica method.
In this way, ML-VAMP provides a computationally efficient method for
multi-layer inference with an exact performance characterization and 
testable conditions for optimality in the large-system limit. 
\end{abstract}


\section{Introduction}\label{sec:intro}

\subsection{Inference with Deep Generative Priors}
We consider inference in an $L$-layer  stochastic neural network of the form
\begin{subequations}  \label{eq:nntrue}
\begin{align}
    \z^0_\ell &= \Wbf_{\ell}\z^0_{\lm1} + \bbf_{\ell} + \xibf_\ell,
    \quad &\ell&=1,3,\ldots,\Lm1,
        \label{eq:nnlintrue} \\
    \z^0_{\ell} &= \phibf_\ell(\z^0_{\lm1},\xibf_{\ell}), \quad &\ell& = 2,4,\ldots,L,
        \label{eq:nnnonlintrue} 
\end{align}
\end{subequations}
where $\z^0_0$ is the network input, 
$\{\z^0_\ell\}_{\ell=1}^{\Lm1}$ are hidden-layer signals, and
$\y:=\z^0_L$ is the network output.  
The odd-indexed layers \eqref{eq:nnlintrue} are (fully connected) affine linear layers
with weights $\Wbf_\ell$, biases $\bbf_\ell$, and additive noise vectors $\xibf_\ell$.
The even-indexed layers  \eqref{eq:nnnonlintrue} involve separable and possibly nonlinear functions $\phibf_\ell$ that are randomized%
\footnote{The role of the noise $\xi_{\ell,i}$ in $\phi_\ell$ is allowed to be generic (e.g., additive, multiplicative, etc.). The relationship between $z^0_{\ell,i}$ and $z^0_{\lm1,i}$ will be modeled using the conditional density $p(z^0_{\ell,i}|z^0_{\lm1,i}) = \int \delta\big(z^0_{\ell,i}-\phi_\ell(z^0_{\lm1,i},\xi_{\ell,i})\big) p(\xi_{\ell,i}) \dif\xi_{\ell,i}$.}
by the noise vectors $\xibf_\ell$.
By ``separable,'' we mean that $[\phibf_\ell(\z,\xibf)]_i=\phi_\ell(z_i,\xi_i)~\forall i$, where $\phi_\ell$ is some scalar-valued function, such as a sigmoid or ReLU,
and where $z_i$ and $\xi_i$ represent the $i$th component of $\z$ and $\xibf$.
We assume that the input $\z^0_0$ and noise vectors $\xibf_\ell$ are mutually independent, that each contains i.i.d.\ entries, and that the number of layers, $L$, is even.
A block diagram of the network is shown in the top panel of Fig.~\ref{fig:nn_ml_vamp}.
The \emph{inference problem} is to estimate the input and hidden signals $\{\z_\ell\}_{\ell=0}^{\Lm1}$ from an observation of the network output $\y$.
That is,
\begin{align}
    \text{Estimate $\{\z_{\ell}\}_{\ell=0}^{L-1}$ given  $\y$ and $\{\Wbf_{2k-1},\bbf_{2k-1},\phibf_{2k}\}_{k=1}^{L/2}$.}
    \label{eq:problem} 
\end{align}
For inference, we will assume that network parameters (i.e., the weights $\Wbf_\ell$, biases $\bbf_\ell$, and activation functions $\phibf_\ell$) are all known, as are  the distributions of the input $\z^0_0$ and the noise terms $\xibf_\ell$.
Hence, we do \emph{not} consider the network learning problem.
The superscript ``$0$'' on $\z^0_\ell$ indicates that this is the ``true" value of $\z_\ell$, to be distinguished from the estimates of $\z_\ell$ produced during inference denoted by $\zhat_\ell$.

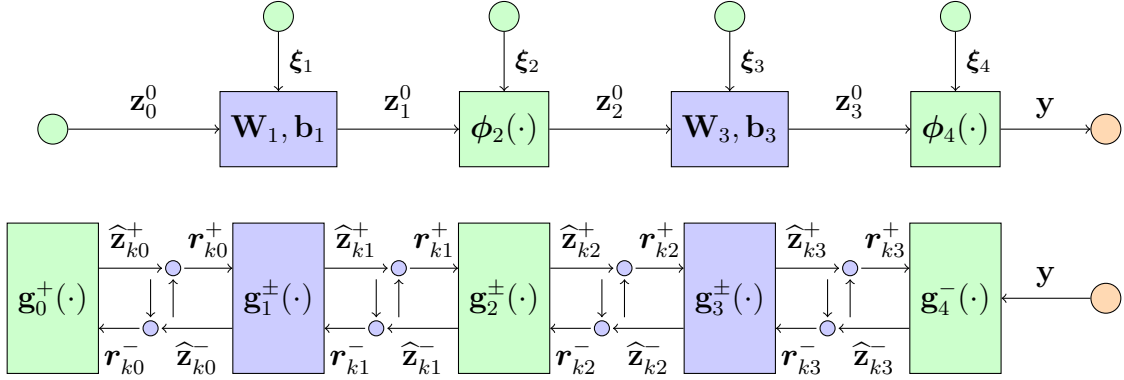
\begin{figure*}
\centering

\begin{tikzpicture}

    \pgfmathsetmacro{\sep}{3};
    \pgfmathsetmacro{\yoff}{0.4};
    \pgfmathsetmacro{\xoffa}{0.3};
    \pgfmathsetmacro{\xoffb}{0.6};

    \tikzstyle{var}=[draw,circle,fill=green!20,node distance=2.5cm];
    \tikzstyle{yvar}=[draw,circle,fill=orange!30,node distance=2.5cm];
    \tikzstyle{conn}=[draw,circle,fill=green!40,radius=0.02cm];
    \tikzstyle{linest}=[draw,fill=blue!20,minimum size=1cm,
        minimum height=2cm, node distance=2.25cm]
    \tikzstyle{nlest}=[draw,fill=green!20,minimum size=1cm,
        minimum height=2cm, node distance=2.25cm]
    \tikzstyle{linblock}=[draw,fill=blue!20, minimum size=1cm, node distance=\sep cm];
    \tikzstyle{nlblock}=[draw,fill=green!20, minimum size=1cm, node distance=\sep cm];

    \node [var] (z0) {};
    \node [linblock, right of=z0] (W1) {$\Wbf_1,\bbf_1$};
    \node [nlblock, right of=W1] (phi2) {$\phibf_2(\cdot)$};
    \node [linblock, right of=phi2] (W3) {$\Wbf_3,\bbf_3$};
    \node [nlblock, right of=W3] (phi4) {$\phibf_4(\cdot)$};
    \node [yvar,right of=phi4, node distance=2cm] (y) {};
    \node [var, above of=W1, yshift=-1cm] (xi1) {};
    \node [var, above of=phi2, yshift=-1cm] (xi2) {};
    \node [var, above of=W3, yshift=-1cm] (xi3) {};
    \node [var, above of=phi4, yshift=-1cm] (xi4) {};
    \path[->] (z0) edge  node [above] {$\z^0_0$} (W1);
    \path[->] (W1) edge  node [above] {$\z^0_1$} (phi2);
    \path[->] (phi2) edge  node [above] {$\z^0_2$} (W3);
    \path[->] (W3) edge  node [above] {$\z^0_3$} (phi4);
    \path[->] (phi4) edge  node [above] {$\ybf$} (y);
    \path[->] (xi1) edge node [right] {\small$\xibf_1$} (W1);
    \path[->] (xi2) edge node [right] {\small$\xibf_2$} (phi2);
    \path[->] (xi3) edge node [right] {\small$\xibf_3$} (W3);
    \path[->] (xi4) edge node [right] {\small$\xibf_4$} (phi4);

    \node [nlest,below of=z0] (h0) {$\gbf^+_0(\cdot)$};
    \node [linest,below of=W1] (h1) {$\gbf^{\pm}_1(\cdot)$};
    \node [nlest,below of=phi2] (h2) {$\gbf^{\pm}_2(\cdot)$};
    \node [linest,below of=W3] (h3) {$\gbf^{\pm}_3(\cdot)$};
    \node [nlest,below of=phi4] (h4) {$\gbf^-_4(\cdot)$};
    \node [yvar,right of=h4, node distance=2cm] (y2) {};
    \path [draw,->] (y2) edge node [above] {$\ybf$} (h4.east);

    \foreach \i/\j in {0/1,1/2,2/3,3/4} {
        \node [right of=h\i,xshift=\xoffa cm,yshift=\yoff cm] (conn0\i) {};
        \node [right of=h\i,xshift=\xoffb cm,yshift=\yoff cm] (conn1\i) {};
        \node [right of=h\i,xshift=\xoffa cm,yshift=-\yoff cm] (conn2\i) {};
        \node [right of=h\i,xshift=\xoffb cm,yshift=-\yoff cm] (conn3\i) {};
        \draw [fill=blue!20] (conn1\i) circle (0.1cm);
        \draw [fill=blue!20] (conn2\i) circle (0.1cm);

        \path [draw,->] (conn0\i) -- (conn2\i);
        \path [draw,->] (conn3\i) -- (conn1\i);

        \path[->] ([yshift=\yoff cm]h\i.east) edge node  [above]
            {$\widehat{\z}^+_{k\i}$} (conn1\i);
        \path[->] (conn1\i) edge node  [above]
            {$\r^+_{k\i}$} ([yshift=\yoff cm]h\j.west);
        \path[->] ([yshift=-\yoff cm]h\j.west) edge node [below]
            {$\widehat{\z}^-_{k\i}$} (conn2\i);
        \path[->] (conn2\i) edge node  [below]
            {$~\r^-_{k\i}$} ([yshift=-\yoff cm]h\i.east);

    }


\end{tikzpicture}


\caption{Top panel: Feedfoward neural network mapping an input $\z_0$ to output $\y=\z^0_{4}$ in the case of $L=4$ layers.
Bottom panel: ML-VAMP estimation functions $\gbf_\ell^\pm(\cdot)$ and estimation quantities $\r_{k\ell}^\pm$ and $\zhat_{k\ell}^\pm$ at iteration $k$.} 
\label{fig:nn_ml_vamp}
\end{figure*}

The inference problem \eqref{eq:problem} arises in the following state-of-the-art approach to inverse problems.
In general, solving an ``inverse problem" means recovering some signal $\x$ from a measurement $\y$ that depends on $\x$.
For example, in compressed sensing (CS) \cite{eldar2012compressed}, the measurements are often modeled as $\y=\Abf\x+\xibf$ with known $\Abf$ and additive white Gaussian noise (AWGN) $\xibf$, and the signal is often modeled as a sparse linear combination of elements from a known dictionary, i.e., $\x=\Psibf\zbf$ for some sparse coefficient vector $\zbf$.
To recover $\x$, one usually computes a sparse coefficient estimate $\zbfhat$ using a LASSO-type convex optimization  \cite{tibshirani1996regression} and then uses it to form a signal estimate $\xhat$, as in
\begin{equation}
\xhat = \Psibf\zbfhat 
\quad\text{for}\quad
\zbfhat = \arg\min_{\zbf} \left\{ \frac{1}{2}\|\y-\Abf\Psibf\zbf\|^2 + \lambda\|\zbf\|_1 \right\}
\label{eq:cs} ,
\end{equation}
where $\lambda>0$ is a tunable parameter.
The CS recovery approach \eqref{eq:cs} can be interpreted as a \emph{two-layer} version of the inference problem: 
the first layer implements signal generation via $\x=\Psibf\zbf$, while the second layer implements the measurement process $\y=\Abf\zbf + \xibf$. 
Equation~\eqref{eq:cs} then performs maximum a posteriori inference (see the discussion around \eqref{eq:map_estimator}) to recover estimates of $\zbf$ and $\x$.

Although CS has met with some success, it has a limited ability to exploit the complex structure of natural signals, such as images, audio, and video.
This is because the model ``$\x=\Psibf\zbf$ with sparse $\zbf$'' is overly simplistic; it is a \emph{one-layer} generative model.
Much more sophisticated modeling is possible with multi-layer priors, as demonstrated in recent works on variational autoencoders (VAEs) \citep{rezende2014stochastic,kingma2013auto},
generative adversarial networks (GANs) \cite{radford2015unsupervised,salakhutdinov2015learning},
and deep image priors (DIP) \cite{ulyanov2018deep,van2018compressed}.
These models have had tremendous success in modeling richly structured data, such as images and text. 

A typical application of solving an inverse problem using a deep generative model is shown in Fig.~\ref{fig:inpaint_mod}.
This figure considers the classic problem of \emph{inpainting} \cite{bertalmio2000image}, for which reconstruction with DIP has been particularly successful \cite{yeh2016semantic,bora2017compressed}.
Here, a noise-like signal $\zbf_0^0$ drives a three-layer generative network to produce an image $\x^0$.
The generative network would have been trained on an ensemble of images similar to the one being estimated using, e.g., VAE or GAN techniques.
The measurement process, which manifests as occlusion in the inpainting problem, is modeled using one additional layer of the network, which produces the measurement $\y$.
Inference is then used to recover the image $\x^0$ (i.e., the hidden-layer signal $\zbf_3^0$) from $\y$. 
In addition to inpainting, this deep-reconstruction approach can be applied to other \emph{linear} inverse problems (e.g., CS, de-blurring, and super-resolution) as well as  \emph{generalized-linear} \cite{mccullagh1989generalized} inverse problems (e.g., classification, phase retrieval, and estimation from quantized outputs).
We note that the inference approach provides an alternative to designing and training a separate reconstruction network, such as in \cite{mousavi2015deep,metzler2017learned,borgerding2017amp}.

\begin{figure*}
\centering


\begin{tikzpicture}

    \pgfmathsetmacro{\sep}{3};
    \pgfmathsetmacro{\yoff}{0.4};
    \pgfmathsetmacro{\xoffa}{0.3};
    \pgfmathsetmacro{\xoffb}{0.6};

    \tikzstyle{var}=[draw,circle,fill=green!20,node distance=1.5cm];
    \tikzstyle{yvar}=[draw,circle,fill=orange!30,node distance=1cm];
    \tikzstyle{encblock}=[draw,fill=blue!20, node distance=1.5 cm];
    \tikzstyle{measblock}=[draw,fill=blue!40, node distance=1.5 cm];
    \tikzstyle{infblock}=[draw,fill=orange!40, node distance=1.5 cm];

    \node [var] (z0) {};
    \node [encblock, right of=z0, minimum width=0.5cm,
        minimum height=1cm] (W1) {};
    \node [encblock, right of=W1, minimum width=0.5cm,
        minimum height=1.5cm] (W2) {};
    \node [encblock, right of=W2, minimum width=0.5cm,
        minimum height=2.5cm] (W3) {};
    \node [measblock, right of=W3, minimum width=0.5cm,
        minimum height=2cm, node distance=2.25 cm] (W4) {};
    \node [yvar,right=1 cm of W4] (y) {};
    \node [infblock,right=1 cm of y, minimum height=1cm] (inf) {
        \footnotesize Inference};
    \node [var,right=1.75cm of inf] (zhat) {};

    \path[->] (z0) edge  node [above] {$\z^0_0$} (W1);
    \path[->] (W1) edge  node [above] {$\z^0_1$} (W2);
    \path[->] (W2) edge  node [above] {$\z^0_2$} (W3);
    \path[->] (W3) edge  node (z3) [above] {$\z^0_3=\x^0$} (W4);
    \path[->] (W4) edge  node [above] {} (y);
    \path[->] (y) edge  node [above] {} (inf);
    \path[->] (inf) edge  node (zhat3) [above] {$\zhat_3 = \xhat$} (zhat);

    \node (ytext) [above of=y, yshift=-0.5 cm] {$\z_4^0=\y$};
    \node [above of=z0] {
        \begin{tabular}{c} \footnotesize Noise \\
        \includegraphics[width=1cm]{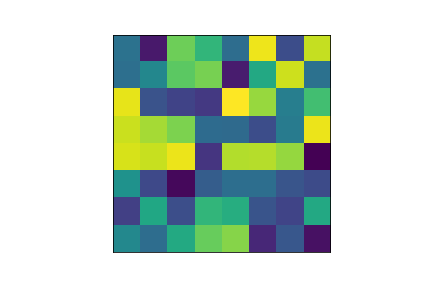}
        \end{tabular} };
    \node [above of=z3, yshift=0.25cm](Original) {
        \begin{tabular}{c} \footnotesize Original \\
        \includegraphics[width=1cm]{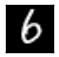}
        \end{tabular} };
    \node [ ](c) at (ytext|- Original){
        \begin{tabular}{c} \footnotesize Occluded \\
        \includegraphics[width=1cm]{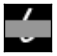}
        \end{tabular} };
    \node [above of=zhat3,yshift=0.25cm] {
    \begin{tabular}{c} \footnotesize Estimate \\
        \includegraphics[width=1cm]{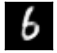}
        \end{tabular} };
        
    \node (W1below) [below left =1.25 cm and 0.1 cm of W1.west] {};
    \node (W3below) [below right =1.25 cm and 0.1 cm of W3.east] {};
    \draw  [decorate,yshift=-1cm,
    decoration={brace,amplitude=10pt,mirror}]
        (W1below) -- (W3below)  node[black,midway,yshift=-0.6cm] {\footnotesize Generative model layers};
    \node [below of=W4,yshift=-0.5cm] 
        {\footnotesize \begin{tabular}{c}
        Measurement \\ layer \end{tabular}};
\end{tikzpicture}


\caption{Motivating example:  Inference for inpainting 
\cite{yeh2016semantic,bora2017compressed}.
An image $\x^0$ is modeled as the output of a generative model
driven by white noise $\z_0^0$, and an occluded measurement $\y$ is generated by one additional layer.  Inference is then used to recover
the image $\x$ from the measurement $\y$.}
\label{fig:inpaint_mod}
\end{figure*}

When using deterministic deep generative models, the unknown signal $\x^0$ can be modeled as 
$\x^0 = \mc G(\zbf_0^0)$,
where $\mc G$ is a trained deep neural network and $\z_0^0$ is a realization 
of an i.i.d.\ random vector, typically with a Gaussian distribution.
Consequently, to recover $\x^0$ from a linear-AWGN measurement of the form 
$\y=\Abf\x^0+\xibf$, the compressed-sensing approach in \eqref{eq:cs} can be extended to 
a regularized least-squares problem~\cite{chang2017one} of the form
\begin{align}
\xhat=\mc G(\zhat_0)
\quad\text{for}\quad
\zhat_0 := \arg\min_{\z_0}\ \left\{ \frac{1}{2}\norm{\y-\A\mc G(\z_0)}^2 + \lambda \norm{\z_0}^2 \right\}
\label{eq:likelihood_min} .
\end{align}
In practice, the optimization in \eqref{eq:likelihood_min} is solved using a gradient-based method. 
This approach can be straightforwardly implemented with deep-learning software packages and has been used, with excellent results, in \cite{yeh2016semantic,bora2017compressed,hand2017global,kabkab2018task,shah2018solving,tripathi2018correction,mixon2018sunlayer}.
The minimization \eqref{eq:likelihood_min} has also been useful in interpreting the semantic meaning of hidden signals in deep networks \cite{mahendran2015understanding,yosinski2015understanding}.
VAEs \citep{rezende2014stochastic,kingma2013auto} and certain GANs \cite{dumoulin2016adversarially} can also produce decoding networks that sample from the posterior density,
and sampling methods such as Markov-chain Monte Carlo (MCMC) algorithms and Langevin diffusion \cite{cheng2018sharp, welling2011bayesian} can also be employed.

\subsection{Analysis via Approximate Message Passing (AMP)}
While reconstruction with deep generative priors has seen tremendous practical success, its performance is not fully understood.  
Optimization approaches such as \eqref{eq:likelihood_min} are typically non-convex and difficult to analyze.
As we discuss below, most results available today only provide bounds, and these bounds are often be overly conservative (see Section~\ref{sec:prior}).

Given a network architecture and statistics on the unknown signals, fundamental information-theoretic questions include: 
What are the precise limits on the accuracy of estimating the hidden signals $\{\zbf^0_\ell\}_{\ell=0}^{L-1}$ from the measurements $\y$?
How well do current estimation methods perform relative to these limits? 
Is is possible to design computationally efficient yet optimal methods?

To answer these questions, this paper considers deep inference via approximate message passing (AMP), a powerful approach for analyzing estimation problems in certain high-dimensional random settings.  
Since its origins in understanding linear inverse problems in compressed sensing
\cite{DonohoMM:09,DonohoMM:10-ITW1},
AMP has been extended to an impressive range of estimation and learning tasks, including 
generalized linear models \cite{rangan2011generalized},
models with parametric uncertainty \cite{fletcher2017learning}, 
structured priors\cite{fletcher2018plug}, and 
bilinear problems\cite{sarkar2019bilinear}.  
For these problems, AMP-based methods have been able to provide computationally efficient algorithms with precise high-dimensional analyses.
Often, AMP approaches yield optimality guarantees in cases where all other known approaches do not.

\subsection{Main Contributions}
In this work, we develop a  multi-layer version of a AMP for inference in deep networks. 
The proposed approach builds on the recent vector AMP (VAMP) method 
of \cite{rangan2019vamp}, which is itself closely related to expectation propagation (EP) \cite{minka2001expectation,takeuchi2017rigorous}, expectation-consistent approximate inference (EC) \cite{opper2005expectation,fletcher2016expectation}, S-AMP \cite{cakmak2014samp}, and orthogonal AMP \cite{ma2017orthogonal}.
The proposed method is called \emph{multi-layer VAMP}, or ML-VAMP.  
As will be described in detail below, ML-VAMP estimates the hidden signals in a deep network by cycling through a set of relatively simple \emph{estimation functions} $\{\gbf^\pm_\ell\}_{\ell=0}^{L}$.
The information flow in ML-VAMP is shown in the bottom panel of  Fig.~\ref{fig:nn_ml_vamp}.
The ML-VAMP method is similar to the multi-layer AMP method of \cite{manoel2017multi} but can handle a more general class of matrices in the linear layers.
In addition, as we will describe below,
the proposed ML-VAMP algorithm can be configured for either MAP or MMSE estimation.
We will call these approaches MAP-ML-VAMP and MMSE-ML-VAMP.

We establish several key results on the ML-VAMP algorithm:
\begin{itemize}
    \item We show that, for both MAP and MMSE inference, the fixed points of the ML-VAMP algorithm correspond to stationary points of variational formulations of these estimators. This allows the interpretation of ML-VAMP as a Lagrangian algorithm with adaptive step-sizes in both cases. These findings are given in Theorems \ref{thm:mapfix} and \ref{thm:fixed_point_mmse} and are
    similar to previous results for AMP \cite{krzakala2014variational,rangan2016fixed}. Section \ref{sec:fixed} describes these results.
    
    \item We prove that, in a certain large system limit (LSL), the behavior of ML-VAMP is exactly described by a deterministic recursion called the \emph{state evolution} (SE).
    This SE analysis is a multi-layer extension of similar results \cite{BayatiM:11,javanmard2013state,rangan2019vamp} for AMP and VAMP.
    The SE equations enable \emph{asymptotically exact} predictions of macroscopic behaviors of the hidden-layer estimates for \emph{each iteration} of the ML-VAMP algorithm. This allows us to obtain error bounds even if the algorithm is run for a finite number of iterations. The SE analysis, given in Theorem \ref{thm:main_result}, is the main contribution of the paper, and is discussed in Section \ref{sec:seevo}. 
    
    \item Since the original conference versions of this paper \cite{fletcher2018inference,pandit2019asymptotics}, formulae for the minimum mean-squared error (MMSE) for inference in deep networks have been conjectured in 
    \cite{reeves2017additivity,gabrie2018entropy,barbier2019optimal}. As discussed in Section~\ref{sec:replica}, these formulae are based on heuristic techniques, such as the replica method from statistical physics, and have been rigorously proven in special cases \cite{reeves2016replica}.  
    Remarkably, we show that the mean-squared-error (MSE) of ML-VAMP exactly matches the predicted MMSE in certain cases.
    
    \item Using numerical simulations, we verify the predictions of the main result from Theorem \ref{thm:main_result}. In particular, we show that the SE accurately predicts the MSE even for networks that are not considered large by today's standards.
    We also perform experiments with the MNIST handwritten digit dataset. Here we consider the inference problem using learned networks, for which the weights do not satisfy the randomness assumptions required in our analysis. 
\end{itemize}

In summary, ML-VAMP provides a computationally efficient method for inference in deep networks whose performance can be exactly predicted in certain high-dimensional random settings.  Moreover, in these settings, the MSE performance of ML-VAMP can match the existing predictions of the MMSE.

\subsection{Prior Work}
\label{sec:prior}

There has been growing interest in studying learning and inference problems in high-dimensional, random settings.
One common model is the so-called \emph{wide network}, where the dimensions of the input, hidden layers, and output 
are assumed to grow with a fixed linear scaling, and the weight matrices are modeled as realizations of random matrices.
This viewpoint has been taken in \cite{neal2012bayesian,giryes2016deep,hanin2018start,choromanska2015loss}, 
in several works that explicitly use AMP methods \cite{manoel2017multi,li2018random,gabrie2018entropy,reeves2017additivity},
and in several works that use closely related random-matrix techniques \cite{schoenholz2016deep,novak2018bayesian}.

The existing work most closely related to ours is that by Manoel et al.~\cite{manoel2017multi}, 
which developed a multi-layer version of the original AMP algorithm \cite{DonohoMM:09}.  
The work \cite{manoel2017multi} provides a state-evolution analysis of multi-layer inference 
in networks with entrywise i.i.d.\ Gaussian weight matrices.  
In contrast, our results apply to the larger class of rotationally invariant matrices 
(see Section~\ref{sec:seevo} for details), which includes i.i.d.\ Gaussian matrices case as a special case.  

Several other recent works have also attempted to characterize the performance of reconstruction using deep priors in random settings.
For example, when $\z_0^0\in\Real^k$ and $\A\in\Real^{m\times n}$ is a realization of an i.i.d.\ Gaussian matrix with $m=\Omega(kL\log n)$, Bora et al.~\cite{bora2017compressed} showed that an $L$-layer network $\mc G$ with ReLU activations can provide provably good reconstruction of $\x^0\in{\rm Range}(\mc G)$ from measurements $\y=\A\x^0+\xibf$.
For the same problem,  \cite{hand2017global} and \cite{huang2018provably} show that, for $\Wbf_\ell\in\Real^{N_\ell\times N_{\ell-1}}$ generated entrywise i.i.d.\ Gaussian and $N_\ell = \Omega(N_{\ell-1}\log N_{\ell-1})$, one can derive bounds on reconstruction error that hold with high probability under similar conditions on $m$. 
Furthermore, they also show that the cost function of \eqref{eq:likelihood_min} has stationary points in only two disjoint regions of the $\z_0$ space, and both are closely related to the true solution $\z_0^0$. 
In \cite{lei2019inverting}, the authors use a layer-wise reconstruction scheme to prove reconstruction error bounds when $N_\ell = \Omega(N_{\ell-1})$, i.e., the network is expansive, but with a constant factor as opposed to the logarithmic factor in \cite{huang2018provably}. 

Our results, in comparison, provide an asymptotically exact characterization of the reconstruction error---not just bounds.
Moreover, our results hold for arbitrary hidden-dimension ratios $N_\ell/N_{\ell-1}$, which can be less than, equal to, or greater than one.
On the other hand, our results hold only in the large-system limit, whereas the other results above hold in the finite-dimensional regime. 
Nevertheless, we think that it should be possible to derive a finite-dimensional version of our analysis (in the spirit of \cite{rush2018finite}) that holds with high probability. 
Also, our experimental results suggest that our large-system-limit analysis is a good approximation of behavior at moderate dimensions.

Some of the material in this paper appeared in conference versions \cite{fletcher2018inference,pandit2019asymptotics}.
The current paper includes all the proofs, simulation details, and provides a unified treatment of both MAP and MMSE estimation.


\section{Multi-layer Vector Approximate Message Passing}

\subsection{Problem Formulation}
We consider inference in a probabilistic setting where,
in \eqref{eq:nntrue}, $\zbf^0_0$ and $\xibf_\ell$ are modeled as random vectors with known densities.  
Due to the Markovian structure of $\{\z_\ell\}$ in \eqref{eq:nntrue},
the posterior distribution $p(\z|\y)$, where $\z:=\{\z_0\}_{\ell=0}^{L-1}$, factorizes as
\begin{align}\label{eq:posterior_factorization}
    p(\z|\y) 
    \propto p(\z,\y) 
    = p(\z,\z_L) 
    = p(\z_0)\prod_{\ell=1}^L p(\z_\ell|\z_{\ell-1}),
\end{align}
where the form of $p(\z_\ell|\z_{\ell-1})$ is determined by $\Wbf_\ell$, $\bbf_\ell$, and the distribution of $\xibf_\ell$ for odd $\ell$; and by $\phibf_\ell$ and the distribution of $\xibf_\ell$ for even $\ell$.
We will assume that $\z_\ell\in\Real^{N_\ell}$, where $N_\ell$ can vary across the layers $\ell$.

Similar to other graphical-model methods \cite{wainwright2008graphical}, we consider
two forms of estimation:  MAP estimation and MMSE estimation.  
The \emph{maximum a priori}, or \textbf{MAP}, estimate is defined as 
\beq
    \zhat_{\map} := \arg\max_{\z}\ p(\z|\y) 
    \label{eq:map_estimator}.
\eeq
Although we will focus on MAP estimation,
most of our results will apply to general $M$-estimators \cite{huber2011robust} of the form,
\begin{align*}
    \zhat_{\mest} := \arg\min_{\z}  \left\{
    \mathscr{L}_0(\z_0) +
    \sum_{\ell=1}^{L} \mathscr{L}_\ell(\z_\ell,\z_{\ell-1}) \right\}
\end{align*}
for loss functions $\mathscr{L}_\ell$.  The MAP estimator corresponds to the
loss function $\mathscr{L}_\ell = -\ln p(\z_\ell|\z_{\ell-1})$.

We will also consider the minimum mean-squared error, or \textbf{MMSE}, estimate, defined as
\begin{align}
    \zhat_{\mmse} &:= \Exp[\z|\y] = \int \z\, p(\z|\y) \dif\z
    \label{eq:mmse_estimator} .
\end{align}
To compute the MMSE estimate, we first compute the posterior marginals $p(\z_\ell|\ybf)$.  
We will also be interested in estimating the posterior marginals $p(\z_\ell|\ybf)$.
From estimates of the posterior marginals, one also compute other estimates, 
such as the mininum mean-absolute error (MMAE) estimate, i.e., the median of the posterior marginal.

\begin{algorithm}[t]
\setstretch{1.1}
\caption{Multi-layer Vector Approximate Message Passing (ML-VAMP)}

\begin{algorithmic}[1]  \label{algo:ml-vamp}
\REQUIRE{Estimation functions $\gbf_0^+$, $\gbf_L^-$, and $\gbf_\ell^\pm$ for $\ell=1,\ldots,\Lm1$.}%
\STATE{Set $\r^-_{0\ell}=\zero$ and initialize parameters $\theta_{0\ell}^-$ for $\ell=0,1,\ldots,\Lm1$.}
\FOR{$k=0,1,\dots,N_{\rm it}-1$}

    \STATE{// \texttt{Forward Pass} }
    \STATE{$\zhat^+_{k0} = \gbf_0^+(\r^-_{k0},\theta^+_{k0})$}
        \label{line:zp0}
    \STATE{$\alpha^+_{k0} = \bkt{\partial \gbf_0^+(\r^-_{k0},\theta^+_{k0})/
            \partial \r^-_{k\ell}}$}
            \label{line:alphap0}
    \STATE{$\r^+_{k0} = (\zhat^+_{k0} - \alpha^+_{k0}\r^-_{k0})/(1-\alpha^+_{k0})$}
            \label{line:rp0}
    \FOR{$\ell=1,\ldots,\Lm1$}
        \STATE{$\zhat^+_{k\ell} =
        \gbf_\ell^+(\r^-_{k\ell},\r^+_{k,\lm1},\theta_{k\ell}^+)$}
        \label{line:zp}
        \STATE{$\alpha^+_{k\ell} = \bkt{{\partial
            \gbf_\ell^+}(\r^-_{k\ell},\r^+_{k,\lm1},\theta_{k\ell}^+)/{\partial \r^-_{\ell}}}$}
            \label{line:alphap}
        \STATE{$\r^+_{k\ell} = (\zhat^+_{k\ell} - \alpha^+_{k\ell}\r^-_{k\ell})/
            (1-\alpha^+_{k\ell})$}       \label{line:rp}
    \ENDFOR
    \STATE{}

    \STATE{// \texttt{Backward Pass} }
    \STATE{$\zhat^-_{k,\Lm1} =
        \gbf_{L}^-(\r^+_{k,\Lm1},\theta^-_{kL})$}   \label{line:znL}
    \STATE{$\alpha^-_{k+1,\Lm1} = \bkt{\partial
        \gbf_{L}^-(\r^+_{k,\Lm1},\theta^-_{kL})/ \partial \r^+_{k,\Lm1}}$}
        \label{line:alphanL}
   \STATE{$\r^-_{\kp1,\Lm1} = (\zhat^-_{k,\Lm1}
            - \alpha^-_{k,{\Lm1}}\r^+_{k,\Lm1})/ (1-\alpha^-_{k,\Lm1}$)}
            \label{line:rnL}
    \FOR{$\ell=\Lm1,\ldots,1$}
        \STATE{$\zhat^-_{k,\ell-1} =
            \gbf_{\ell}^-(\r^-_{k+1,\ell},\r^+_{k,\ell-1},\theta^-_{k\ell})$}
            \label{line:zn}
        \STATE{$\alpha^-_{k+1,\ell-1} = \bkt{{\partial
        \gbf_{\ell}^-}(\r^-_{k+1,\ell},\r^+_{k,\ell-1},\theta^-_{k\ell})/{\partial \r^+_{\ell-1}}}$}
            \label{line:alphan}
        \STATE{$\r^-_{\kp1,\ell-1} = (\zhat^-_{k,\ell-1}
            - \alpha^-_{k,\ell-1}\r^+_{k,\ell-1})/ (1-\alpha^-_{k,\ell-1})$}
            \label{line:rn}
    \ENDFOR

\ENDFOR
\end{algorithmic}
\end{algorithm}
\setlength{\abovedisplayskip}{6pt}
\setlength{\belowdisplayskip}{6pt}

\subsection{The ML-VAMP Algorithm}

Similar to the generalized EC (GEC) \cite{fletcher2016expectation} and generalized VAMP \cite{schniter2016vector} algorithms, 
the ML-VAMP algorithm attempts to compute MAP or MMSE estimates using a sequence of forward-pass and backward-pass updates.
The steps of the algorithm are specified in Algorithm~\ref{algo:ml-vamp}.
The quantities updated in the forward pass are denoted by superscript $+$,
and those updated in the backward pass are denoted by superscript $-$. 
The update formulae can be derived similarly to those for the GEC algorithm \cite{fletcher2016expectation}, 
using expectation-consistent approximations of the Gibbs free energy inspired by \cite{opper2005expectation}. 
The ML-VAMP algorithm splits the estimation of $\z=\{\z_\ell\}_{\ell=1}^{L-1}$ into smaller problems 
that are solved by the \emph{estimation functions} $\{\gbf_\ell^\pm\}_{\ell=1}^{L-1}$, $\gbf_0^+$ and $\gbf_L^-$.
(See Figure~\ref{fig:nn_ml_vamp}, bottom panel.)
As described below, the form of $\gbf_\ell^\pm$ depends on whether the goal is MAP or MMSE estimation.
During the forward pass, the estimators $\gbf^+_\ell$ are invoked, whereas in the backward pass, $\gbf^-_\ell$ are invoked. 
Similarly, the ML-VAMP algorithm maintains two copies, $\zhat^+$ and $\zhat^-$, of the estimate of $\z$.
For $\ell=1,2,\ldots,\Lm1$, each pair of estimators $(\gbf_\ell^+,\gbf_\ell^-)$ takes as input 
$\r_{\ell-1}^+$ and $\r_\ell^-$ to update the estimates $\zhat_\ell^+$ and $\zhat^-_{\ell-1}$, respectively.
Similarly, $\gbf_0^+$ and $\gbf_L^-$ take inputs $\r_0^-$ and $\r_{L-1}^+$ to update $\zhat^0$ and $\zhat_{L-1}^-$, respectively.
The estimation functions also take parameters $\theta_{\ell}^\pm$.

\subsection{MAP and MMSE Estimation Functions}
The form of the estimation functions $\{\gbf_\ell^\pm\}_{\ell=0}^{L-1}$ depends on 
whether the goal is to perform MAP or MMSE estimation.
In either case, the parameters are given by
\beq \label{eq:thetagam}
    \theta_{k0}^+ = \gamma_{k0}^-,
    \qquad
    \theta_{k\ell}^+ =  (\gamma^-_{k\ell},\gamma_{k,\lm1}^+),
    \qquad
    \theta_{k\ell}^- = (\gamma^-_{k+1,\ell},\gamma_{k,\lm1}^+),
    \qquad
    \theta_{kL}^- = \gamma_{k,L-1}^+,
\eeq
where $\gamma_{k\ell}^\pm$ and $\eta_{k\ell}^\pm$ are scalars updated at iteration $k\geq 0$ and all $\ell=0,1,\ldots,\Lm1$ as follows:
\begin{align} \label{eq:gamupdate}
    \gamma^+_{k\ell} = \eta^+_{k\ell} - \gamma^-_{k\ell}, \qquad
    \gamma^-_{\kp1,\ell} = \eta^-_{k+1,\ell} - \gamma^+_{k\ell}, \qquad
		    \eta^+_{k\ell} = \gamma^-_{k\ell}/\alpha^+_{k\ell} \qquad
            \eta^-_{k+1,\ell} = \gamma^+_{k\ell}/\alpha^-_{k+1,\ell}.
\end{align}
Given these parameters, both the MAP and MMSE estimation functions are defined from the \textit{belief} function
\begin{align}\label{eq:def_belief}
    \MoveEqLeft b_\ell(\z_\ell,\z_{\ell-1}|\r_\ell^-,\r_{\ell-1}^+,\gamma_\ell^-,\gamma_{\ell-1}^+)
    \propto p(\z_\ell|\z_{\ell-1})\exp(-\frac{\gamma_{\ell}^-}2\norm{\z_\ell-\r_\ell^-}^2-\frac{\gamma_{\ell-1}^+}2\norm{\z_{\ell-1}-\r_{\ell-1}^+}^2)
\end{align}
for $\ell=1,\ldots \Lm1$. 
Similarly, $b_L(\z_L,\z_{L-1})\propto p(\y|\z_{L-1})\exp(-\tfrac{\gamma_{L-1}^+}2\|\z_{L-1}-\r_{L-1}^+\|^2)$, and $b_0(\z_0,\z_{-1})\propto p(\z_0)\exp(-\tfrac{\gamma_{0}^-}2\|\z_0-\r_0^-\|^2).$
When performing MMSE inference, we use
\begin{align}\label{eq:MMSE_update}
    (\zhat_{\ell}^+,\zhat_{\ell-1}^-)_\mmse = \gbf_{\ell,\mmse}^\pm(\r_\ell^-,\r^+_{\ell-1};\gamma_\ell^-,\gamma^+_{\ell-1})= \Exp[(\z_{\ell},\z_{\ell-1})|b_\ell] ,
\end{align}
where $\Exp[\cdot|b_\ell]$ denotes expectation with respect to the distribution $b_\ell$.
Similarly, for MAP inference, we use
\begin{align}\label{eq:MAP_update}
    (\zhat_{\ell}^+,\zhat_{\ell-1}^-)_\map=\gbf_{\ell,\map}^\pm(\r_\ell^-,\r^+_{\ell-1};\gamma_\ell^-,\gamma^+_{\ell-1}) = \argmax_{\z_\ell,\z_{\ell-1}} b_\ell(\z_\ell,\z_{\ell-1}|\r_\ell^-,\r^+_{\ell-1},\gamma_\ell^-,\gamma^+_{\ell-1}) .
\end{align}
Notice that \eqref{eq:MAP_update} corresponds to the proximal operator of $-\ln p(\z_\ell|\z_{\ell-1})$. 
We will use ``MMSE-ML-VAMP'' to refer to ML-VAMP with the MMSE estimation functions \eqref{eq:MMSE_update},
and ``MAP-ML-VAMP'' to refer to ML-VAMP with the MAP estimation functions \eqref{eq:MAP_update}.

\subsection{Computational Complexity}
A key feature of the ML-VAMP algorithm is that, for the neural network \eqref{eq:nntrue}, 
the MMSE and MAP estimation functions \eqref{eq:MMSE_update} and \eqref{eq:MAP_update} 
are computationally easy to compute.
To see why, first recall that, for the even layers $\ell=2,4,\ldots L$,
the map $\phibf_\ell$ in \eqref{eq:nnnonlintrue} is assumed separable and the noise $\xibf_\ell$ is assumed i.i.d.  
As a result, $\z_\ell$ is conditionally independent given $\z_{\ell-1}$, i.e.,
$p(\z_\ell|\z_{\ell-1})=\prod_i p(z_{\ell,i}|z_{\ell-1,i})$.
Thus, for even $\ell$, the belief function $b_\ell$ in \eqref{eq:def_belief} also factors into a product of the form $b_\ell(\z_\ell,\z_{\ell-1}) = \prod_i b_\ell(z_{\ell,i},z_{\ell-1,i})$, 
implying that the MAP and MMSE versions of $\gbf_\ell^\pm$ are both coordinate-wise separable. 
In other words, the MAP and MMSE estimation functions can be computed using $N_\ell$ scalar MAP or MMSE estimators.

Next consider \eqref{eq:nnlintrue} for $\ell=1,3,\ldots,L-1$, i.e., the linear layers.
Assume that $\bm\xi_{\ell}\sim\mc N(\bm0,\I\nu_{\ell}^{-1})$ for some
precision (i.e., inverse variance) $\nu_{\ell} > 0$.  
Then $p(\z_{\ell}|\z_{\ell-1}) \propto \tfrac{\nu_{\ell}}2\norm{\z_{\ell}-\W_{\ell} \z_{\ell-1}-\bbf_{\ell}}^2$. 
In this case, the MMSE and MAP estimation functions \eqref{eq:MMSE_update} and \eqref{eq:MAP_update} are identical,
and both take the form of a standard least-squares problem.
Similar to the VAMP algorithm~\cite{rangan2019vamp}, the least-squares solution---which must be recomputed at each iteration $k$---is can be efficiently computed using a single singular value decomposition (SVD) that is computed once, before the iterations begin.
In particular, we compute the SVD
\begin{align}\label{eq:SVD}
\W_{\ell}=\V_\ell\diag(\s_\ell)\V_{\ell-1},
\end{align}
where $\V_\ell\in\Real^{N_\ell\times N_\ell}$  and $\V_{\ell-1}\in\Real^{N_{\ell-1}\times N_{\ell-1}}$ are orthogonal and 
$\diag(\s_\ell)\in\Real^{N_\ell\times N_{\ell-1}}$ is a diagonal matrix that contains the singular values of $\W_\ell$. 
Let $\overline{\bbf}_\ell:=\V_\ell\T\bbf_\ell$. 
Then for odd $\ell$, the updates \eqref{eq:MMSE_update} and \eqref{eq:MAP_update} both correspond to quadratic problems, 
which can be simplified by exploiting the rotational invariance of the $\ell_2$ norm. 
Specifically, one can derive that 
\begin{subequations}\label{eq:glintrans}
\begin{align}\label{eq:glintrans1p}
   \zhat_\ell^+=\gbf^+_\ell(\r^-_{\ell},\r^+_{\lm1},\gamma_{\ell}^-,\gamma_{\lm1}^+)&=
  \Vbf_\ell{\Gbf}^+_\ell(\Vbf_{\ell}\T\r^-_{\ell},\Vbf_{\ell-1}\r^+_{\lm1},
  \wb{\sbf}_\ell,\wb{\bbf}_\ell,\gamma_{\ell}^-,\gamma_{\lm1}^+) \\
  \label{eq:glintrans1n}
   \zhat_{\ell-1}^-=\gbf^-_\ell(\r^-_{\ell},\r^+_{\lm1},\gamma_{\ell}^-,\gamma_{\lm1}^+)   &=
  \Vbf_{\lm1}\tran{\Gbf}^-_\ell(\Vbf_{\ell}\T\r^-_{\ell},\Vbf_{\ell-1}\r^+_{\lm1},
  \wb{\sbf}_\ell,\wb{\bbf}_\ell,   \gamma_{\ell}^-,\gamma_{\lm1}^+),
\end{align}
\end{subequations}
where \emph{transformed denoising functions} ${\Gbf}^{\pm}_\ell(\cdot)$ are componentwise extensions of $G_\ell^{\pm}(\cdot)$, defined as
\begin{align}\label{eq:Glintrans_componentwise}
    \MoveEqLeft \begin{bmatrix}
        {G}_\ell^+(u_\ell,u_{\lm1},s_\ell,\wb{b}_\ell,\gamma_{\ell}^-,\gamma_{\lm1}^+)\\
        {G}_\ell^-(u_\ell,u_{\lm1},s_\ell,\wb{b}_\ell,\gamma_{\ell}^-,\gamma_{\lm1}^+)
    \end{bmatrix}
     :=  \begin{bmatrix}
        -\nu_\ell s_\ell & \gamma_\ell^- +\nu_\ell\\
        \gamma_{\lm1}^+ + \nu_\ell s_\ell^2 & -\nu_\ell s_\ell
        \end{bmatrix}^{-1}
        \begin{bmatrix}
        \gamma_{\ell}^- u_{\ell} + \nu_\ell \wb{b}_\ell\\
        \gamma_{\lm1}^+ u_{\lm1} - \nu_\ell s_\ell \wb{b}_\ell
        \end{bmatrix}.
\end{align}
A detailed derivation of equations \eqref{eq:glintrans} and \eqref{eq:Glintrans_componentwise} is given in  \cite[Appendix B]{fletcher2017inference}. Note that the argument $\wb\s_\ell$ in \eqref{eq:glintrans1p} is $N_\ell$ dimensional, whereas in \eqref{eq:glintrans1n} it is $N_{\ell-1}$ dimensional, \ie, appropriate zero-padding is applied. Keeping this subtlety in mind, we use $\wb\s_\ell$ to keep the notation simple.

From Algorithm~\ref{algo:ml-vamp}, we see that each pass of the MAP-ML-VAMP or MMSE-ML-VAMP algorithm requires solving 
(a) scalar MAP or MMSE estimation problems for the non-linear, separable layers; 
and (b) least-squares problems for the linear layers.
In particular, no high-dimensional integrals or high-dimensional optimizations are involved.

\section{Fixed Points of ML-VAMP} \label{sec:fixed}
Our first goal is to characterize the fixed points of Algorithm \ref{algo:ml-vamp}. 
To this end,
let $\r^+_\ell,\r^-_\ell,\zhat_\ell$ with parameters $\alpha^+_\ell,\alpha^-_\ell,\gamma_\ell^+,\gamma_\ell^-,\eta_\ell$ 
be a fixed point of the ML-VAMP algorithm, where we have dropped the iteration subscript $k$. 
At a fixed point, we do not need to distinguish between $\zhat^+_\ell$ and $\zhat^-_\ell$, nor between $\eta_\ell^+$ and $\eta_\ell^-$, since the updates in \eqref{eq:gamupdate} imply that
\begin{align}\label{eq:fixed_point}
   \eta_\ell^+=\eta_\ell^- = \gamma_\ell^+ + \gamma_\ell^- = : \eta_\ell,
   \quad 
   \alpha^+_\ell=\tfrac{\gamma^-_\ell}{\eta_\ell},   
   \quad 
   \alpha^-_\ell=\tfrac{\gamma^+_\ell}{\eta_\ell},
   \quad\text{and}\quad  
   \alpha_\ell^+ + \alpha_\ell^- = 1.
\end{align}
Applying these relationships to lines \ref{line:rp} and \ref{line:rn} of Algorithm~\ref{algo:ml-vamp} gives
\begin{align}\label{eq:z_equal}
\zhat_\ell^+=\zhat_\ell^- = \frac{\gamma_\ell^+\r_\ell^++\gamma_\ell^-\r_\ell^-}{\gamma_\ell^++\gamma_\ell^-} = : \zhat_\ell .
 \end{align}

\subsection{Fixed points of MAP-ML-VAMP and connections to ADMM}
Our first results relates the MAP-ML-VAMP updates to an ADMM-type minimization of the MAP objective \eqref{eq:map_estimator}.  
For this we use \emph{variable splitting}, where we replace each variable $\zbf_\ell$ with two copies, $\zbf_\ell^+$ and $\zbf^-_\ell$.
Then, we define the objective function
\begin{align}
   \MoveEqLeft F(\zbf^+,\zbf^-) := -\ln p(\zbf^+_0)  - \sum_{\ell=1}^{\Lm1} \ln p(\zbf^+_\ell|\zbf^-_{\lm1})
   -\ln p(\ybf|\zbf_{\Lm1}^{-}) \label{eq:Fsplit}
\end{align}
over the variable groups $\zbf^+ := \{ \zbf^+_\ell \}_{\ell=1}^{L-1}$ and $\zbf^- := \{ \zbf^-_\ell \}_{\ell=1}^{L-1}$.
The optimization \eqref{eq:map_estimator} is then equivalent to
\begin{align} \label{eq:Fmincon}
    \min_{\zbf^+,\zbf^-} \ F(\zbf^+,\zbf^-)
    \quad\text{subject to}\quad
    \zbf^+_\ell=\zbf^-_\ell,\ ~\forall\ \ell=0,1,\ldots, \Lm1.
\end{align}
Corresponding to this constrained optimization, we define the augmented Lagrangian
\begin{align}\label{eq:Lagdef}
    \mc L(\zbf^+,\zbf^-,\sbf) = F(\zbf^+,\zbf^-)
    + \sum_{\ell={0}}^{\Lm1} \eta_\ell\sbf\tran_\ell(\zbf^+_\ell-\zbf_\ell^-)+
    \sum_{\ell=0}^{L-1}\frac{\eta_\ell}{2}\|\zbf_{\ell}^+-\zbf_{\ell}^-\|^2,
\end{align}
where $\sbf:=\{\sbf_\ell\}$ is a set of dual parameters, $\gamma_\ell^{\pm}>0$ are weights,
and $\eta_\ell = \gamma^+_\ell+\gamma^-_\ell$.   Now, for $\ell=1,\ldots,L-2$, define
\begin{align}\label{eq:Laug}
\begin{split}
    \mc L_\ell(\zbf_{\lm1}^-,\zbf_\ell^+;\zbf_{\lm1}^+,\zbf_\ell^-,\sbf_{\lm1},\sbf_\ell)
    := &-\ln p(\zbf_\ell^+|\zbf_{\lm1}^-)
     +\eta_\ell\sbf\tran_\ell\zbf^+_\ell -\eta_{\lm1}\sbf\tran_{\lm1}\zbf_{\lm1}^- \\
    &+ \frac{\gamma^+_{\lm1}}{2}\|\zbf^-_{\lm1} - \zbf^+_{\lm1}\|^2
    + \frac{\gamma^-_\ell}{2}\|\zbf^+_{\ell} - \zbf^-_{\ell}\|^2,  
\end{split}
\end{align}
which represents the terms in the Lagrangian $\mc L(\cdot)$ in \eqref{eq:Lagdef}
that contain $\zbf_{\lm1}^-$ and $\zbf_\ell^+$.
Similarly, define $\mc L_0(\cdot)$ and $\mc L_{\Lm1}(\cdot)$ using $p(\zbf_0^+)$ and $p({\bf y}| \zbf^+_{L-1})$, respectively.
\iftoggle{conference}{}{One can then verify that
\beq\nonumber
\mc L(\zbf^+,\zbf^-,\sbf) = \sum_{\ell=0}^{L-1}\mc L_\ell(\zbf_{\lm1}^-,\zbf_\ell^+;\zbf_{\lm1}^+,\zbf_\ell^-,\sbf_{\lm1},\sbf_\ell).
\eeq
}

\begin{theorem}[MAP-ML-VAMP] \label{thm:mapfix}
Consider the iterates of Algorithm~\ref{algo:ml-vamp} with MAP estimation functions
\eqref{eq:MAP_update} for fixed $\gamma_\ell^{\pm}>0$.
Suppose lines \ref{line:alphap} and \ref{line:alphan} are replaced with fixed values $\alpha^{\pm}_{k\ell}=\alpha^{\pm}_{\ell}\in(0,1)$ from \eqref{eq:fixed_point}.
Let $\sbf^{-}_{k\ell} := \alpha^{+}_{k\ell}(\zbfhat_{\km1,\ell}^--\rbf^-_{k\ell})$ 
and $\sbf^{+}_{k\ell} := \alpha^{-}_{k\ell}(\rbf^+_{k\ell}-\zbfhat_{k\ell}^+)$.
Then, for $\ell=0,\ldots,\Lm1$, the forward pass iterations satisfy
\begin{subequations}
\begin{align}
    \underline{\hspace{.3cm}}\,,\zbfhat^+_{k\ell} &= \argmin_{(\zbf_{\lm1}^-,\zbf^+_{\ell})}\
        \mc L_\ell(\zbf^-_{\lm1},\zbf^+_\ell;\zbfhat^+_{k,\lm1},\zbfhat^-_{\km1,\ell},\sbf_{k,\lm1}^+,\sbf_{k\ell}^-)  \label{eq:admmHp}\\
    \sbf_{k\ell}^+ &= \sbf_{k\ell}^- + \alpha^+_\ell(\zbfhat^+_{k\ell}-\zbfhat^-_{\km1,\ell}) \label{eq:admmsp} ,
\end{align}
\end{subequations}
whereas the backward pass iterations satisfy
\begin{subequations}
\begin{align}
    \zbfhat^-_{k,\lm1},\,\underline{\hspace{.3cm}}\, &= \argmin_{(\zbf_{\lm1}^-,\zbf^+_{\ell})}\
        \mc L_\ell(\zbf^-_{\lm1},\zbf^+_\ell;\zbfhat^+_{k,\lm1},\zbfhat^-_{k\ell},
            \sbf_{k,\lm1}^+,\sbf_{\kp1,\ell}^-)  \label{eq:admmHn} \\
 \sbf_{\kp1,\lm1}^- &= \sbf_{k,\lm1}^+ + \alpha^-_{\lm1}(\zbfhat^+_{k,\lm1}
        -\zbfhat^-_{k,\lm1}).
    \label{eq:admmsn}
\end{align}
\end{subequations}
Further, any fixed point of Algorithm 1 corresponds to a critical point of the Lagrangian \eqref{eq:Lagdef}.
\end{theorem}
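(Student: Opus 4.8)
The plan is to prove the theorem in three stages---the forward/backward primal $\argmin$ identities \eqref{eq:admmHp} and \eqref{eq:admmHn}, the dual updates \eqref{eq:admmsp} and \eqref{eq:admmsn}, and finally the fixed-point claim---using throughout the relations $\alpha_\ell^+=\gamma_\ell^-/\eta_\ell$, $\alpha_\ell^-=\gamma_\ell^+/\eta_\ell$, and $\alpha_\ell^++\alpha_\ell^-=1$ from \eqref{eq:fixed_point}, together with the stated definitions of $\sbf_{k\ell}^\pm$.

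For the primal identities, I first observe that, by \eqref{eq:MAP_update}, applying $-\ln(\cdot)$ to the belief \eqref{eq:def_belief} makes the pair returned by $\gbf_\ell^\pm$ the joint minimizer over $(\zbf_\ell,\zbf_{\lm1})$ of $-\ln p(\zbf_\ell|\zbf_{\lm1})+\tfrac{\gamma_\ell^-}2\|\zbf_\ell-\rbf_{k\ell}^-\|^2+\tfrac{\gamma_{\lm1}^+}2\|\zbf_{\lm1}-\rbf_{k,\lm1}^+\|^2$, from which the forward pass retains the $\zbf_\ell$-coordinate $\zbfhat_{k\ell}^+$ and the backward pass the $\zbf_{\lm1}$-coordinate. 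I then expand $\mc L_\ell$ from \eqref{eq:Laug} at the forward-pass parameters and \emph{complete the square} separately in each variable. The crucial step is that $\eta_\ell(\sbf_{k\ell}^-)\tran\zbf_\ell^+$ absorbs into $\tfrac{\gamma_\ell^-}2\|\zbf_\ell^+-\zbfhat_{\km1,\ell}^-\|^2$, re-centering it at $\zbfhat_{\km1,\ell}^--\tfrac{\eta_\ell}{\gamma_\ell^-}\sbf_{k\ell}^-=\rbf_{k\ell}^-$ (using $\alpha_\ell^+\eta_\ell/\gamma_\ell^-=1$); an analogous computation with $\sbf_{k,\lm1}^+$ (whose linear term carries the opposite sign) re-centers the $\zbf_{\lm1}^-$-penalty at $\rbf_{k,\lm1}^+$. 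Thus $\mc L_\ell$ coincides with the MAP objective above up to an additive constant, so their joint minimizers agree and \eqref{eq:admmHp} holds. The backward identity \eqref{eq:admmHn} is identical except that the parameters $(\zbfhat_{k\ell}^-,\sbf_{\kp1,\ell}^-)$ re-center the $\zbf_\ell^+$-penalty at the \emph{updated} iterate $\rbf_{\kp1,\ell}^-$ (since $\sbf_{\kp1,\ell}^-=\alpha_\ell^+(\zbfhat_{k\ell}^--\rbf_{\kp1,\ell}^-)$), and the retained coordinate is $\zbfhat_{k,\lm1}^-$.

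The dual updates are pure algebra. From line~\ref{line:rp} with $1-\alpha_\ell^+=\alpha_\ell^-$ one gets $\sbf_{k\ell}^+=\alpha_\ell^-(\rbf_{k\ell}^+-\zbfhat_{k\ell}^+)=\alpha_\ell^+(\zbfhat_{k\ell}^+-\rbf_{k\ell}^-)$, and then the telescoping $\sbf_{k\ell}^-+\alpha_\ell^+(\zbfhat_{k\ell}^+-\zbfhat_{\km1,\ell}^-)=\alpha_\ell^+(\zbfhat_{k\ell}^+-\rbf_{k\ell}^-)$ gives \eqref{eq:admmsp}; line~\ref{line:rn} yields \eqref{eq:admmsn} symmetrically. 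For the fixed-point claim I use $\mc L=\sum_{\ell}\mc L_\ell$. At a fixed point, \eqref{eq:z_equal} gives $\zbfhat_\ell^+=\zbfhat_\ell^-=\zbfhat_\ell$, and the vanishing dual increment $\alpha_\ell^+(\zbfhat_\ell^+-\zbfhat_\ell^-)$ forces $\sbf_\ell^+=\sbf_\ell^-=:\sbf_\ell$, so the forward and backward $\mc L_\ell$ minimizations share identical parameters; the resulting first-order conditions read $\partial\mc L_\ell/\partial\zbf_\ell^+=0$ and $\partial\mc L_\ell/\partial\zbf_{\lm1}^-=0$ at the fixed-point values. Since $\zbf_\ell^+$ also appears in the coupling penalty of $\mc L_{\lp1}$, the full gradient is $\partial\mc L/\partial\zbf_\ell^+=\partial\mc L_\ell/\partial\zbf_\ell^++\gamma_\ell^+(\zbf_\ell^+-\zbf_\ell^-)$; the extra cross-term vanishes because $\zbfhat_\ell^+=\zbfhat_\ell^-$, so $\partial\mc L/\partial\zbf_\ell^+=0$, and analogously $\partial\mc L/\partial\zbf_\ell^-=0$. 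Feasibility $\partial\mc L/\partial\sbf_\ell\propto\zbf_\ell^+-\zbf_\ell^-=0$ also holds by \eqref{eq:z_equal}, so the fixed point is a critical point of $\mc L$.

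The hard part is the primal stage: tracking which stored iterate (hence which $\rbf^\pm$) each completed-square penalty re-centers on in the forward versus backward pass, since the forward pass uses the old $\rbf_{k\ell}^-$ while the backward pass uses the freshly updated $\rbf_{\kp1,\ell}^-$, and verifying that the two passes extract complementary coordinates ($\zbf_\ell^+$ versus $\zbf_{\lm1}^-$) of the relevant joint minimizer. Once the identity $\alpha_\ell^\pm\eta_\ell/\gamma_\ell^\mp=1$ is established, the square-completion itself is routine, and the dual and fixed-point stages are short.
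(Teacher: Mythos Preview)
Your proposal is correct and follows essentially the same approach as the paper's proof: both establish the primal identities by showing that $\mc L_\ell$ (with the appropriate parameters) agrees with $-\ln b_\ell$ up to constants via the relations linking $\sbf_{k\ell}^\pm$ and $\rbf_{k\ell}^\pm$, derive the dual updates by direct substitution into lines~\ref{line:rp} and~\ref{line:rn}, and handle the fixed-point claim by combining primal feasibility $\zbfhat_\ell^+=\zbfhat_\ell^-$, the resulting equality $\sbf_\ell^+=\sbf_\ell^-$, and the stationarity conditions of each $\mc L_\ell$. Your square-completion and the paper's substitution of $\rbf_{k\ell}^\pm=\zbfhat_{k\ell}^\pm\pm\tfrac{1}{\alpha_\ell^\mp}\sbf_{k\ell}^\pm$ into \eqref{eq:def_belief} are the same computation run in opposite directions; your treatment of the cross-term $\gamma_\ell^+(\zbf_\ell^+-\zbf_\ell^-)$ when passing from $\partial\mc L_\ell$ to $\partial\mc L$ is slightly more explicit than the paper's, but the logic is identical.
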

\begin{proof} 
See Appendix \ref{app:fixed_points}
\end{proof}

Theorem~\ref{thm:mapfix} shows that the fixed-$\{\alpha_\ell^{\pm}\}$ version of ML-VAMP is an ADMM-type algorithm for solving the optimization problem \eqref{eq:Fmincon}.
In the case that $\alpha_\ell^+=\alpha^-_\ell$, this algorithm is known as the Peaceman-Rachford Splitting variant of ADMM and its convergence has been studied extensively; see \cite[eqn. (3)]{he2016application} and \cite{he2014strictly}, and the references therein.
Different from ADMM, the full ML-VAMP algorithm adaptively updates $\{\alpha_{k\ell}^{\pm}\}$ in a way that exploits the local curvature of the objective in \eqref{eq:MAP_update}. 
Note that, in \eqref{eq:admmHp} and \eqref{eq:admmHn}, we compute the joint minimizers over $(\zbf^+_{\lm1},\zbf^+_\ell)$, but only use one of them at a time.


\subsection{Fixed Points of MMSE-ML-VAMP and Connections to Free-Energy Minimization}

Recall that $\z:=\{\z_\ell\}_{\ell=0}^{L-1}$ and let $\mc B$ denote the set of density functions $b(\z)$ factorizable as 
$f_0(\z_0)f_L(\z_{L-1})\prod_{\ell=1}^{L-1} f_\ell(\z_\ell,\z_{\ell-1})$.
Notice that the true posterior $p(\z|\y)$ from \eqref{eq:posterior_factorization} belongs to this set. 
Essentially, this $\mc B$ captures the chain structure of the factor graph visible in the top panel of Fig.~\ref{fig:nn_ml_vamp}.
For chain-structured (and, more generally, tree-structured) graphs, one can express any $b\in\mc B$ as \cite{yedidia2005constructing} (see also \cite[Sec. III C]{pereyra2015survey} for a succinct description) 
\begin{align}\label{eq:tree_marginal_property}
b(\z) = \frac{\prod_{\ell=1}^{L-1} f_{\ell}(\z_\ell,\z_{\ell-1})}{\prod_{\ell=1}^{L-2}q_\ell(\z_\ell)},
\end{align}
where $\{f_\ell(\z_\ell,\z_{\lm1})\}$ and $\{q_\ell(\z_\ell)\}$ are marginal density functions of $b(\z)$.
As marginal densities, they must satisfy the consistent-marginal equations 
\begin{align}\label{eq:consistent_marginals}
    b(\z_\ell)=\int f_\ell(\z_\ell,\z_{\lm1})\dif\z_{\lm1} 
    = q_\ell(\z_\ell)  
    = \int f_{\ell+1}(\z_{\ell+1},\z_{\ell})\dif\z_{\lp1},
    \quad \forall\ \ell=1,\ldots, \Lm1.
\end{align}
Because $p(\z|\ybf)\in\mc B$, we can express it using variational optimization as
\begin{align}\label{eq:variational}
p(\z|\y) = \arg\min_{b\in\mc B} D_{\mathsf{KL}}(b(\z)\|p(\z|\y)),
\end{align}
where $D_{\mathsf{KL}}(b(\z)\| p(\z|\ybf)) :=\int b(\z)\ln\frac{b(\z)}{p(\z|\y)}\dif\z$ is the KL divergence.
Plugging $b(\z)$ from \eqref{eq:tree_marginal_property} into \eqref{eq:variational}, we obtain 
\begin{align}\label{eq:BFE}
        p(\z|\y)=\arg\min_{b\in \mc B} \left\{ \sum_{\ell=1}^L D_{\mathsf{KL}}(f_\ell(\z_\ell,\z_{\ell-1})\|p(\z_\ell|\z_{\ell-1}))  + \sum_{\ell=0}^{L-1} h(q_\ell(\z_\ell)) \right\}
    \quad \text{s.t.\ \eqref{eq:consistent_marginals}},
\end{align}
where $h(q_\ell(\z_\ell)):=-\int q_\ell(\z_\ell) \ln q_\ell(\z_\ell) \dif\z_\ell$ is the differential entropy of $q_\ell$. 
The cost function in \eqref{eq:BFE} is often called the Bethe free energy \cite{yedidia2005constructing}.
In summary, because $\mc B$ is tree-structured, Bethe-free-energy minimization yields the exact posterior distribution \cite{yedidia2005constructing}. 

The constrained minimization \eqref{eq:BFE} is computationally intractable, because both the optimization variables $\{f_\ell,q_\ell\}$ and the pointwise linear constraints \eqref{eq:consistent_marginals} are infinite dimensional.
Rather than solving for the exact posterior, we might instead settle for an approximation obtained by relaxing the marginal constraints \eqref{eq:consistent_marginals} to the following moment-matching conditions, for all $\ell=0,1,\ldots \Lm1$:
\begin{align}\label{eq:moment_matching}
\begin{split}
    \Exp[\z_\ell|f_\ell]=\Exp[\z_\ell|q_\ell],&\qquad
     \Exp[\z_{\ell}|f_{\ell+1}]=\Exp[\z_{\ell}|q_{\ell}],\\
    \Exp\Big[\norm{\z_\ell}^2\Big|f_\ell\Big]=\Exp\Big[\norm{\z_\ell}^2\Big|q_{\ell}\Big],&\qquad
     \Exp\Big[\norm{\z_{\ell}}^2\Big|f_{\ell+1}\Big]=\Exp\Big[\norm{\z_{\ell}}^2\Big|q_{\ell}\Big].
\end{split}
\end{align}
This approach is known as expectation-consistent (EC) approximate inference \cite{opper2005expectation}.
Because the constraints on $f_\ell$ and $q_\ell$ in \eqref{eq:moment_matching} are finite dimensional, standard Lagrangian-dual methods can be used to compute the optimal solution. 
Thus, the EC relaxation of the Bethe free energy minimization problem \eqref{eq:BFE}, i.e., 
\begin{align}\label{eq:BFE_EC}
\min_{f_\ell}\max_{q_\ell} \left\{ \sum_{\ell=1}^{L-1} D_{\mathsf{KL}}(f_\ell(\z_\ell,\z_{\ell-1})\|p(\z_\ell|\z_{\ell-1}))  + \sum_{\ell=0}^{L-1} h(q_\ell(\z_{\ell})) \right\}
    \quad \text{s.t.\ \eqref{eq:moment_matching}} ,
\end{align}
yields a tractable approximation to $p(\z|\y)$. 

We now establish an equivalence between the fixed points of the MMSE-ML-VAMP algorithm and the first-order stationary points of \eqref{eq:BFE_EC}. 
The statement of the theorem uses the belief functions $b_\ell$ defined in \eqref{eq:def_belief}.
\begin{theorem}[MMSE-ML-VAMP]\label{thm:fixed_point_mmse}
Consider a fixed point 
$\left(\{\r^\pm_\ell\},\{\zhat_\ell\},\{\gamma_\ell^\pm\}\right)$ 
of Algorithm \ref{algo:ml-vamp} with MMSE estimation functions \eqref{eq:MMSE_update}. Then $\{\gamma_\ell^+\r^+_\ell,\gamma_\ell^-\r^-_\ell,\tfrac{\gamma_\ell^+}2,\tfrac{\gamma_\ell^-}2\}$, are Lagrange multipliers for \eqref{eq:moment_matching} such that KKT conditions are satisfied for the problem \eqref{eq:BFE_EC} at primal solutions $\{f_\ell^*,q_\ell^*\}$. Furthermore, the marginal densities take the form $f_\ell^*(\cdot)\propto b_\ell(\cdot|\r_\ell^-,\r_{\ell-1}^+,\gamma_{\ell}^-,\gamma_{\ell}^-,\gamma_{\ell-1}^+)$ and $q_\ell^* = \Norm(\zhat_\ell,\bm{I}/\eta_{\ell})$,
with $\zhat_\ell$ and $\eta_\ell$ given in \eqref{eq:fixed_point}-\eqref{eq:z_equal}.
\end{theorem}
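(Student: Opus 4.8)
The plan is to derive the first-order (saddle-point/KKT) conditions of the EC-relaxed problem \eqref{eq:BFE_EC} and to verify, term by term, that they are solved by $f_\ell^\ast\propto b_\ell$, $q_\ell^\ast=\Norm(\zhat_\ell,\Ibf/\eta_\ell)$ together with the stated multipliers, in a manner parallel to the variable-splitting argument behind Theorem~\ref{thm:mapfix}. First I would attach a scalar multiplier to each normalization constraint $\int f_\ell=1$, $\int q_\ell=1$, a vector multiplier to each first-moment constraint in \eqref{eq:moment_matching} and a scalar multiplier to each trace constraint, and form the Lagrangian as a functional of $\{f_\ell,q_\ell\}$. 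Using $\tfrac{\delta}{\delta f_\ell}D_{\mathsf{KL}}(f_\ell\|p(\z_\ell|\z_{\ell-1}))=\ln\tfrac{f_\ell}{p(\z_\ell|\z_{\ell-1})}+1$, stationarity in $f_\ell$ collapses to $\ln f_\ell^\ast=\ln p(\z_\ell|\z_{\ell-1})+(\text{affine in }\z_\ell,\z_{\ell-1})+(\text{spherical quadratic in }\z_\ell,\z_{\ell-1})+\const$, i.e.\ $f_\ell^\ast$ has exactly the belief form \eqref{eq:def_belief}; matching the coefficient of $\z_\ell$ (resp.\ $\z_{\ell-1}$) then identifies the first-moment and trace multipliers attached to $\z_\ell$ from the $f_\ell$ side as $(\gamma_\ell^-\r_\ell^-,\tfrac{\gamma_\ell^-}{2})$ and from the $f_{\ell+1}$ side as $(\gamma_\ell^+\r_\ell^+,\tfrac{\gamma_\ell^+}{2})$ --- precisely the multiplier set in the statement --- and yields $f_\ell^\ast\propto b_\ell(\cdot|\r_\ell^-,\r_{\ell-1}^+,\gamma_\ell^-,\gamma_{\ell-1}^+)$. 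Dually, $\tfrac{\delta}{\delta q_\ell}h(q_\ell)=-\ln q_\ell-1$, and since $q_\ell$ enters the moment constraints with the opposite sign, its stationarity gives $\ln q_\ell^\ast=(\gamma_\ell^-\r_\ell^-+\gamma_\ell^+\r_\ell^+)\T\z_\ell-\tfrac{\gamma_\ell^-+\gamma_\ell^+}{2}\norm{\z_\ell}^2+\const$, so after completing the square $q_\ell^\ast=\Norm\!\big(\tfrac{\gamma_\ell^+\r_\ell^++\gamma_\ell^-\r_\ell^-}{\gamma_\ell^++\gamma_\ell^-},\,\Ibf/\eta_\ell\big)$, whose mean is $\zhat_\ell$ by \eqref{eq:z_equal} and whose precision is $\eta_\ell=\gamma_\ell^++\gamma_\ell^-$, as claimed.

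Second, I would invoke the standard Stein-type identity for MMSE denoisers: for any density $b(\z)\propto g(\z)\exp(-\tfrac{\gamma}{2}\norm{\z-\r}^2)$, differentiating $\Exp[\z|b]$ under the integral sign gives $\partial_{\r}\Exp[\z|b]=\gamma\,\Cov(\z|b)$. Applied to $b_\ell$ and averaged over the diagonal, lines \ref{line:alphap} and \ref{line:alphan} of Algorithm~\ref{algo:ml-vamp} become $\alpha^+_\ell=\tfrac{\gamma_\ell^-}{N_\ell}\Tr\Cov(\z_\ell|b_\ell)$ and $\alpha^-_{\ell-1}=\tfrac{\gamma_{\ell-1}^+}{N_{\ell-1}}\Tr\Cov(\z_{\ell-1}|b_\ell)$; combined with the fixed-point relations \eqref{eq:fixed_point} this gives $\tfrac{1}{N_\ell}\Tr\Cov(\z_\ell|b_\ell)=\alpha_\ell^+/\gamma_\ell^-=1/\eta_\ell$, and the same trace computed from $b_{\ell+1}$ equals $\alpha_\ell^-/\gamma_\ell^+=1/\eta_\ell$ as well --- i.e.\ the fixed-point identity $\eta_\ell^+=\eta_\ell^-$ is exactly the consistency of the two trace constraints on $\z_\ell$.

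Third, I would check primal feasibility at $(\{f_\ell^\ast\},\{q_\ell^\ast\})$. Since $f_\ell^\ast\propto b_\ell$, the MMSE estimation function \eqref{eq:MMSE_update} gives $\Exp[\z_\ell|f_\ell^\ast]=\zhat_\ell^+$ and $\Exp[\z_{\ell-1}|f_\ell^\ast]=\zhat_{\ell-1}^-$, while $\Exp[\z_\ell|q_\ell^\ast]=\zhat_\ell$; the fixed-point equalities $\zhat_\ell^+=\zhat_\ell^-=\zhat_\ell$ from \eqref{eq:z_equal} then give the first-moment constraints in \eqref{eq:moment_matching} from both the $f_\ell$ and $f_{\ell+1}$ sides. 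For the trace constraints, $\Exp[\norm{\z_\ell}^2|q_\ell^\ast]=\norm{\zhat_\ell}^2+N_\ell/\eta_\ell$ whereas $\Exp[\norm{\z_\ell}^2|f_\ell^\ast]=\norm{\zhat_\ell^+}^2+\Tr\Cov(\z_\ell|b_\ell)$, and these agree by the second step; likewise from the $f_{\ell+1}$ side. The two boundary beliefs $b_0\propto p(\z_0)\exp(-\tfrac{\gamma_0^-}{2}\norm{\z_0-\r_0^-}^2)$ and $b_L\propto p(\y|\z_{L-1})\exp(-\tfrac{\gamma_{L-1}^+}{2}\norm{\z_{L-1}-\r_{L-1}^+}^2)$ are treated identically, each carrying only one of the two moment constraints. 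Combining the three steps shows that $(\{f_\ell^\ast\},\{q_\ell^\ast\})$ with the stated multipliers solves the KKT system of \eqref{eq:BFE_EC}.

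The step I expect to be most delicate is the Lagrangian bookkeeping in the first part: matching the linear and quadratic coefficients produced by the variational derivatives against the precise multiplier list $\{\gamma_\ell^+\r_\ell^+,\gamma_\ell^-\r_\ell^-,\tfrac{\gamma_\ell^+}{2},\tfrac{\gamma_\ell^-}{2}\}$ --- including the sign convention for the equality multipliers in the $\min$--$\max$ formulation --- and, in the $q_\ell$ stationarity, seeing that the two quadratic multipliers add to $\gamma_\ell^++\gamma_\ell^-=\eta_\ell$ so that $q_\ell^\ast$ inherits precision $\eta_\ell$. The remaining pieces --- the Stein identity and the feasibility check --- follow directly from the fixed-point relations \eqref{eq:fixed_point}--\eqref{eq:z_equal} and the definition \eqref{eq:MMSE_update} of the MMSE estimation functions.
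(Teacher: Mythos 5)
Your proposal is correct and follows essentially the same route as the paper's proof: form the Lagrangian of \eqref{eq:BFE_EC} with the stated multipliers, read off the stationarity conditions to get $f_\ell^\ast\propto b_\ell$ and $q_\ell^\ast=\Norm(\zhat_\ell,\Ibf/\eta_\ell)$, and then verify primal feasibility of \eqref{eq:moment_matching} from the fixed-point relations \eqref{eq:fixed_point}--\eqref{eq:z_equal}. Your second step (the Stein-type identity $\partial_{\r}\Exp[\z|b]=\gamma\,\Cov(\z|b)$ linking $\alpha_\ell^{\pm}$ to $\Tr\Cov(\cdot|b_\ell)$ and hence to $1/\eta_\ell$) actually makes the verification of the second-moment constraints more explicit than the paper's own argument, which states the trace condition without detailed justification.
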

\begin{proof}
See Appendix~\ref{app:fixed_points}.
\end{proof}

The above result shows that MMSE-ML-VAMP is essentially an algorithm to iteratively solve for the parameters 
$\left(\{\r^\pm_\ell\},\{\zhat_\ell\},\{\gamma_\ell^\pm\}\right)$ 
that characterize the EC fixed points. 
Importantly, $q_\ell^*(\z_\ell)$ and $f^*(\z_\ell,\z_{\ell-1})$ serve as an approximate marginal posteriors for $\z_\ell$ and $(\z_{\ell},\z_{\ell-1})$. 
This enables us to not only compute the MMSE estimate (i.e., posterior mean), but also other estimates like the MMAE estimate (i.e., the posterior median), or quantiles of the marginal posteriors. 
Remarkably, in certain cases, these approximate marginal-posterior statistics become exact. 
This is one of the main contributions of the next section.



\section{Analysis in the Large-System Limit} \label{sec:seevo}

\subsection{LSL model}
In the previous section, we established that, for any set of deterministic matrices $\{\Wbf_\ell\}$,
MAP-ML-VAMP solves the MAP problem and MMSE-ML-VAMP solves the EC variational inference problem as the iterations $k\rightarrow\infty$.
In this section, we extend the analysis of \cite{BayatiM:11,rangan2019vamp} to 
the rigorously study the behavior of ML-VAMP at any iteration $k$ for classes
of random matrices $\{\Wbf_\ell\}$ in a certain large-system limit (LSL).
The model is described in the following set of assumptions.

\medskip \noindent
\paragraph*{System model}  We consider a sequence of systems indexed by $N$.
For each $N$, 
let $\z_\ell = \z_\ell^0(N)\in \Real^{N_\ell(N)}$ be ``true'' vectors generated by
neural network \eqref{eq:nntrue} for layers $\ell=0,\ldots, L$, such that layer widths satisfy $\lim_{N\rightarrow \infty}N_\ell(N)/N = \beta_\ell\in(0,\infty)$. 
Also, let the weight matrices $\Wbf_\ell$ in \eqref{eq:nnlintrue} each have an SVD given by \eqref{eq:SVD}, where $\{\V_\ell\}$ are 
drawn uniformly from the set of orthogonal matrices in $\Real^{N_\ell\times N_\ell}$ and independent across $\ell$. 
The distribution on the singular values $\sbf_\ell$ will be described below.

Similar to the VAMP analysis \cite{rangan2019vamp}, the assumption here is that weight matrices
$\Wbf_\ell$ are rotationally invariant, meaning that $\Vbf\Wbf_\ell$ and $\Wbf_\ell\Vbf$
are distributed identically to $\Wbf_\ell$.  Gaussian i.i.d.\ $\Wbf_\ell$ as considered in the
original ML-AMP work of \cite{manoel2017multi} satisfy this rotationally invariant assumption, but the 
rotationally invariant model is more general.  In particular, as described in \cite{rangan2019vamp},
the model can have arbitrary coniditoning which is known to be a major failure mechanism of AMP 
methods.

\medskip \noindent
\paragraph*{ML-VAMP algorithm}  We assume that we generate estimates $\zbfhat^{\pm}_{k\ell}$
from the ML-VAMP algorithm, Algorithm~\ref{algo:ml-vamp}.  Our analysis
will apply to general estimation functions, $\gbf_\ell(\cdot)$, not necessarily 
the MAP or MMSE estimators.  However, we require two technical conditions:
For the non-linear estimators, 
$\gbf_\ell^\pm$ for $\ell=2,4,\ldots L-2$, and $\gbf^+_0$, $\gbf_L^-$ act componentwise. Further, these estimators and their derivatives $\tfrac{\gbf_\ell^+}{\partial z_{\ell}^-}$,$\tfrac{\gbf_\ell^-}{\partial z_{\ell-1}^+}$,$\tfrac{\gbf_0^+}{\partial z_{0}^-}$,$\tfrac{\gbf_L^-}{\partial z_{L-1}^+}$ are uniformly Lipschitz continuous.
The technical 
definition of uniformly Lipschitz continuous is given in Appendix~\ref{sec:empirical}.
For the linear layers, $\ell=1,3,\ldots L-1$, 
we assume we apply estimators $\gbf_{\ell}^\pm$ of the form \eqref{eq:glintrans} 
where $\Gbf_\ell^\pm$ act componentwise. Further, $\Gbf_\ell^\pm$ 
along with its derivatives are uniformly Lipschitz continuous.
We also assume that 
the activation functions $\bm\phi_\ell$ in equation \eqref{eq:nnnonlintrue} 
are componentwise separable and Lipschitz continuous.
To simplify the analysis, we will also assume the estimation function
parameters $\theta_{k\ell}^\pm$ converge to fixed limits,
\beq \label{eq:thetalim}
    \lim_{N\rightarrow\infty}\theta^{\pm}_{k\ell}(N) = \thetabar_{k\ell}^{\pm},
\eeq
for values $\thetabar_{k\ell}^{\pm}$.
Importantly, in this assumption, we assume that the limiting parameter
values  $\thetabar_{k\ell}^{\pm}$ are fixed and not data dependent.  
However, data dependent parameters can also be modeled \cite{rangan2019vamp}.

\medskip \noindent
\paragraph*{Distribution of the components}
We follow the framework of Bayati-Montanari and describe the statistics on the
unknown quantities via their empirical convergence -- see Appendix~\ref{sec:empirical}.
For $\ell=1,3,\ldots L-1,$ define $\wb{\bbf}_\ell := \Vbf_\ell\tran\bbf_\ell$ and
$\wb{\xibf}_\ell := \Vbf_\ell\tran \xibf_\ell$. 
We assume that the sequence of 
true vectors $\z_0^0$, singular values $\s_\ell$, bias vectors $\wb{\bbf}_\ell$, 
and noise realizations $\wb{\bm\xi}_\ell$ empirically converge as
    \begin{subequations}\label{eq:varinit}
    \begin{align}
    \label{eq:varinitnl}
\MoveEqLeft    \lim_{N \arr \infty} \left\{ z^0_{0,n} \right\} \PLeq Z^0_0, \quad
    \lim_{N \arr \infty} \left\{ \xi_{\ell,n} \right\} \PLeq \Xi_\ell, \qquad \ell=2,4,\ldots,L\\
    \label{eq:varinitlin}
\MoveEqLeft    \lim_{N \arr \infty} \left\{ ({s}_{\ell,n},\wb{b}_{\ell,n},\wb{\xi}_{\ell,n}) \right\}
        \PLeq ({S}_\ell, \wb{B}_\ell,\wb{\Xi}_\ell),\qquad \ell=1,3,\ldots,L-1,
        \end{align}
\end{subequations}
to random variables $Z_0^0, \Xi_\ell, S_\ell, \wb{B}_\ell, \wb{\Xi}_\ell$.
We will also 
assume that the singular values are bounded, i.e., $s_{\ell,n}<S_{\ell,{\rm max}}~\forall n$. 
Also, the initial vectors $\r_{0\ell}^-$ converge as,
\begin{align}\label{eq:main_result_initialization}
\begin{split}
    \lim_{N\arr \infty} \left\{ [{\r}^-_{0\ell}-\z_{\ell}^0]_n \right\}
        \PLeq {Q}^-_{0\ell},\qquad \ell=0,2,\ldots,L,\\
        \left\{ [\V_\ell\T({\r}^-_{0\ell}-\z_{\ell}^0)]_n \right\}
        \PLeq {Q}^-_{0\ell}, \qquad \ell=1,3,\ldots,L-1,
\end{split}
\end{align}
where $(Q_{0\ell}^-,Q_{1\ell}^-,\ldots Q_{L-1,\ell}^-)$ 
is jointly Gaussian independent of  $Z_0^0$, $\{\Xi_\ell\}$, $\{S_\ell,\wb B_\ell,\wb{\Xi}_\ell\}$.

\medskip \noindent
\paragraph*{State Evolution}  
Under the above assumptions, our main result is to show that the asymptotic distribution
of the quantities from ML-VAMP algorithm converge to certain distributions.
The distributions are described by a set of deterministic parameters
$\{\Kbf_{k\ell}^+,\tau_{k\ell}^-,\wb\alpha_{k\ell}^\pm,\wb\gamma_{k\ell}^\pm,\wb\eta_{k\ell}^\pm\}$. 
The evolve according to a scalar recursion called the state evolution (SE), given in Algorithm \ref{algo:mlvamp_se} in Appendix~\ref{app:mlvamp_se}.
We assume
$\wb\alpha_{k\ell}^\pm\in(0,1)$ for all iterations $k$ and $\ell=0,1,\ldots \Lm1$.

\subsection{SE Analysis in the LSL}
Under these assumptions, we can now state our main result.

\begin{theorem} \label{thm:main_result}  Consider the system
under the above assumptions.
For any componentwise pseudo-Lipschitz function $\bm\psi$ of order 2, iteration index $k$, 
and layer index $\ell=2,4,\ldots L-2$, 
\begin{align}
\MoveEqLeft    \lim_{N\rightarrow\infty}\Big<\bm\psi\left(\z_{\ell-1}^0,\zhat_{k,\ell-1}^-,\zhat_{k\ell}^+\right)\Big> \xrightarrow{a.s.}\nonumber\\
\MoveEqLeft \qquad\qquad\qquad \Exp\left[\psi\left(\mathsf A ,g_\ell^-(\mathsf C +\mathsf A ,\mathsf B +\mathsf A,\wb\gamma_{k\ell}^-,\wb\gamma_{k,\ell-1}^+ ),g_\ell^+(\mathsf C +\mathsf A ,\mathsf B +\mathsf A,\wb\gamma_{k\ell}^-,\wb\gamma_{k,\ell-1}^+ )\right)\right],
\label{eq:even_pl2}
\\
\MoveEqLeft\lim_{N\rightarrow\infty}\big<\bm\psi(\z,_0^0\zhat_{k0}^+)\big> \xrightarrow{a.s.}\Exp\left[\psi(g^+_0(\mathsf{C} '+Z_0^0),\wb\gamma_0^-)\right],\quad \\
\MoveEqLeft    \lim_{N\rightarrow\infty}\big<\bm\psi(\z_{L-1}^0,\zhat_{k,L-1}^-)\big> \xrightarrow{a.s.} \Exp\left[\psi(\mathsf{A} ',g_{L}^-(\mathsf{B} '+\mathsf{A} ',\wb\gamma_{L-1}^+))\right],  
\end{align}
where $(\mathsf A,\mathsf B)\sim \mc N(0,\bm\Kbf_{k\ell}^+)$ and $\mathsf C\sim \mc N(0,\tau_{k\ell}^-)$ are mutually independent and independent of $\Xi_\ell$; $(\mathsf A',\mathsf B')\sim \mc N(0,\bm\Kbf_{kL}^+)$ is independent of $\Xi_L$ and $\mathsf C'\sim \mc N(0,\tau_{k0}^-)$ is independent of $Z_0^0$.
Similarly for any layer index $\ell=1,3,\ldots,\Lm1$, we have
\begin{align}
   \MoveEqLeft \lim_{N\rightarrow\infty} \bkt{\bm\psi\left(\V_{\ell-1}\z_{\ell-1}^0,\V_{\ell-1}\zhat_{k,\ell-1}^-,\V_{\ell}\T\zhat_{k\ell}^+\right)} \xrightarrow{a.s.} \nonumber\\
    \MoveEqLeft \quad\Exp\left[\psi\left(\mathsf A ,G_\ell^-(\mathsf C +\mathsf A ,\mathsf B +\mathsf A, S_\ell,\wb B_\ell,\wb\gamma_{k\ell}^-,\wb\gamma_{k,\ell-1}^+),G_\ell^+(\mathsf C +\mathsf A ,\mathsf B +\mathsf A,S_\ell,\wb B_\ell,\wb\gamma_{k\ell}^-,\wb\gamma_{k,\ell-1}^+ )\right)\right],\label{eq:odd_pl2}
\end{align}
where $(\mathsf A,\mathsf B)\sim \mc N(0,\bm\Kbf_{k\ell}^+)$ and $\mathsf C\sim \mc N(0,\tau_{k\ell}^-)$ are mutually independent and independent of $(S_\ell,\wb B_\ell,\wb\Xi_\ell)$. Furthermore, if $\gammabar^\pm_{k\ell},\etabar^\pm_{k\ell},$ are defined analogous to \eqref{eq:gamupdate} using $\wb\alpha_{k\ell}^\pm$, then for all $\ell$,
\begin{align}\label{eq:alpha_convergence}
    \lim_{N\rightarrow\infty} (\alpha^\pm_{k,\ell},\gamma^\pm_{k,\ell},\eta^\pm_{k,\ell})\xrightarrow{a.s.}(\wb\alpha^\pm_{k,\ell},\wb\gamma^\pm_{k,\ell},\wb\eta^\pm_{k,\ell}).
\end{align}
\end{theorem}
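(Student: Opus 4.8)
The plan is to adapt the state-evolution proof of VAMP in \cite{rangan2019vamp} to the multi-layer chain. The first step is to pass to the SVD domain. Using the factorization \eqref{eq:SVD} and the transformed denoisers \eqref{eq:glintrans}--\eqref{eq:Glintrans_componentwise}, one rewrites Algorithm~\ref{algo:ml-vamp} entirely in terms of the rotated iterates $\Vbf_\ell\T\r_{k\ell}^\pm$ and $\Vbf_\ell\T\zhat_{k\ell}^\pm$ together with the rotated deterministic data $\wb\bbf_\ell=\Vbf_\ell\T\bbf_\ell$ and $\wb\xibf_\ell=\Vbf_\ell\T\xibf_\ell$. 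After this change of basis, every update in the algorithm is either componentwise (the nonlinear estimators $\gbf_\ell^\pm$ and the componentwise maps $\Gbf_\ell^\pm$, acting on coordinates of the rotated vectors and on the i.i.d.\ scalar sequences of \eqref{eq:varinit}--\eqref{eq:main_result_initialization}) or a single multiplication by one of the Haar-distributed orthogonal matrices $\Vbf_\ell$ or $\Vbf_\ell\T$. All randomness is thereby concentrated in the $\{\Vbf_\ell\}$ and in those scalar sequences.

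Second, I would abstract this into a \emph{multi-layer Haar recursion}: an ordered list of half-iterations, indexed by $(k,\ell,\pm)$ in the order that a forward/backward sweep of Algorithm~\ref{algo:ml-vamp} visits them, in which one maintains at each linear layer the vectors on the two sides of $\Vbf_\ell$ and at each half-iteration either applies $\Vbf_\ell$ or $\Vbf_\ell\T$ to a componentwise-Lipschitz function of the previously produced vectors, or applies such a componentwise function directly. For this recursion I would state and prove a general convergence theorem: the joint empirical distribution of all vectors generated up to any fixed half-iteration converges, almost surely and in the $\mathrm{PL}(2)$ sense, to an explicit law that is jointly Gaussian plus the independent scalar sequences, with covariance propagated by a deterministic scalar map. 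The proof is by induction on the half-iteration index. In the inductive step one conditions on the $\sigma$-algebra generated by everything revealed so far --- in particular on the linear constraints that the relevant $\Vbf_\ell$ must satisfy against the already-exposed inputs and outputs --- uses that a Haar matrix conditioned on finitely many such constraints is again Haar on the orthogonal complement, and invokes the Gaussianity lemma of \cite{rangan2019vamp} (that $\Vbf_\ell\u$, for $\u$ in that complement, is asymptotically i.i.d.\ Gaussian and independent of the past) to identify the limiting law of the newly produced vector; a strong law of large numbers for pseudo-Lipschitz functions then upgrades convergence in distribution to almost-sure empirical convergence, and closes the induction.

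Third, I would specialize back to ML-VAMP: match the scalar map obtained above with the SE recursion of Algorithm~\ref{algo:mlvamp_se}, and then read off \eqref{eq:even_pl2}--\eqref{eq:odd_pl2} by choosing $\bm\psi$ to be the stated test functions applied to the relevant converged vectors. Concretely, the inputs $\r_{k\ell}^-$ and $\r_{k,\lm1}^+$ to the $\ell$-th estimator converge to ``true signal plus independent Gaussian noise'' with variance $\tau_{k\ell}^-$ and covariance block $\Kbf_{k\ell}^+$, respectively, so $\zhat_{k\ell}^\pm=\gbf_\ell^\pm(\cdots)$ (or $\Gbf_\ell^\pm(\cdots)$ in the SVD domain) converges to the stated expectations of $g_\ell^\pm$ and $G_\ell^\pm$. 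Finally \eqref{eq:alpha_convergence}: the scalars $\alpha_{k\ell}^\pm$ in lines \ref{line:alphap0}, \ref{line:alphap}, \ref{line:alphanL}, \ref{line:alphan} are empirical averages of the uniformly-Lipschitz partial derivatives of the estimators, hence converge a.s.\ by the same empirical-convergence statement, and then $\gamma_{k\ell}^\pm$ and $\eta_{k\ell}^\pm$ are deterministic rational functions of the $\alpha_{k\ell}^\pm$ through \eqref{eq:gamupdate}, where the standing assumption $\wb\alpha_{k\ell}^\pm\in(0,1)$ rules out division by zero.

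The main obstacle is the bookkeeping of the conditioning argument in this multi-layer, bidirectional setting. Each Haar matrix $\Vbf_\ell$ is applied many times --- in the forward pass, the backward pass, and across iterations $k$ --- so one must fix a consistent order on the half-iterations and, for each $\Vbf_\ell$, track the growing subspace of already-exposed input/output pairs, proving that at the moment $\Vbf_\ell$ is next applied its fresh contribution is asymptotically Gaussian and independent of everything generated before. This is compounded at the linear layers, where the transformed denoiser $\Gbf_\ell^\pm$ of \eqref{eq:Glintrans_componentwise} involves both $\Vbf_\ell$ and $\Vbf_{\lm1}$ at once, interleaving the exposures of adjacent orthogonal matrices; handling this requires treating a linear layer as two coordinated half-iterations and checking that the induction hypothesis delivers the joint Gaussian limit for the paired quantities $\left(\Vbf_{\lm1}\z_{\lm1}^0,\Vbf_{\lm1}\zhat_{k,\lm1}^-,\Vbf_\ell\T\zhat_{k\ell}^+\right)$ appearing in \eqref{eq:odd_pl2}. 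A secondary, routine technical point is checking that the uniform-Lipschitz and $\mathrm{PL}(2)$ hypotheses propagate through the compositions in \eqref{eq:glintrans}--\eqref{eq:Glintrans_componentwise} and through the scalar MAP/MMSE estimators, which follows from the standing assumptions on $\phibf_\ell$, $\gbf_\ell^\pm$, and $\Gbf_\ell^\pm$.
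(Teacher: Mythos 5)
Your proposal follows essentially the same route as the paper: the paper likewise abstracts ML-VAMP (in the SVD domain, working with the error vectors $\rbf_{k\ell}^\pm-\zbf_\ell^0$ and their rotations) into a general multi-layer recursion, proves a general $\mathrm{PL}(2)$ convergence theorem by induction over half-iterations using the conditional distribution of a Haar matrix given the already-exposed linear constraints (deterministic projection plus an independent Gaussian part), and then specializes back via a matching lemma, with \eqref{eq:alpha_convergence} obtained exactly as you describe from the convergence of the parameter lists. The only ingredient you leave implicit that the paper makes explicit is the asymptotic divergence-free property of the corrected update functions (coming from the Onsager-type terms in lines \ref{line:rp} and \ref{line:rn}), which via Stein's lemma is what kills the cross-correlations $\Exp(P^+_{i,\lm1}Q^-_{j\ell})$ and makes the deterministic part of the conditioned Haar action align with the SE covariances.
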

\begin{proof}
See Appendix \ref{app:proof_of_main_result}.
\end{proof}

\medskip
The key value of Theorem~\ref{thm:main_result} is that we can \emph{exactly
characterize} the asymptotic joint distribution of the true vectors $\z_\ell^0$
and the ML-VAMP estimates $\zbfhat^{\pm}_{k\ell}$.
The asymptotic joint distribution, can be used to compute 
various key quantities.  For example,
suppose we wish to compute the mean squared error (MSE).
Let $\psi(z^0,\zhat)=(z^0-\zhat)^2$, whereby 
$\bkt{\bm\psi(\z^0_\ell,\zbfhat^-_\ell)}=\frac1N\norm{\z^0_\ell-\zbfhat^-_\ell}^2$.
Observe that $\psi$ is a pseudo-Lipschitz function of order 2, whereby we can apply Theorem \ref{thm:main_result}. 
Using \eqref{eq:even_pl2}, we get the asymptotic MSE on the $k^{\rm th}$-iteration estimates for $\ell=2,4,\ldots \Lm2$:
\begin{align*}
    \lim_{N_{\ell-1}\rightarrow\infty} \tfrac1{N_{\ell-1}}\norm{\zhat_{k,\ell-1}^- -\z_{\ell-1}^0}^2 
    &\xrightarrow{a.s.} \Exp\left[\left(g_\ell^-(\mathsf C+\mathsf A,\mathsf B+\mathsf A,\wb\gamma_{k\ell}^-,\wb\gamma_{k,\ell-1}^+)-\mathsf A\right)^2\right],\\
    \lim_{N_{\ell}\rightarrow\infty} \tfrac1{N_{\ell}}\norm{\zhat_{k\ell}^+ -\z_{\ell}^0}^2 
    &\xrightarrow{a.s.} \Exp\left[\left(g_\ell^+(\mathsf C+\mathsf A,\mathsf B+\mathsf A,\wb\gamma_{k\ell}^-,\wb\gamma_{k,\ell-1}^+)-\phi_\ell(\mathsf A,\Xi_\ell)\right)^2\right],
\end{align*}
where we used the fact that $\phi_\ell$ is pseudo-Lipschitz of order 2, and $\z_\ell^0 = \phi_\ell(\z^0_{\ell-1},\xibf_\ell)$ from \eqref{eq:nnnonlintrue}. 
Similarly, using \eqref{eq:odd_pl2}, we get the $k$th-iteration MSE for $\ell=1,3,\ldots \Lm1$:
\begin{align*}
    \MoveEqLeft\lim_{N_{\ell-1}\rightarrow\infty} \tfrac1{N_{\ell-1}}\norm{\zhat_{k,\ell-1}^- -\z_{\ell-1}^0}^2 
    =\lim_{N_{\ell-1}\rightarrow\infty} \tfrac1{N_{\ell-1}}\norm{\V_{\ell-1}(\zhat_{k,\ell-1}^- -\z_{\ell-1}^0)}^2 \\
     \MoveEqLeft\qquad\qquad\qquad\qquad\xrightarrow{a.s.} \Exp\left[\left(G_\ell^-(\mathsf C+\mathsf A,\mathsf B+\mathsf A,S_\ell,\wb B_\ell,\gamma_{k,l}^+,\gamma_{k,\ell-1}^-)-\mathsf A\right)^2\right].\\
    \MoveEqLeft\lim_{N_{\ell}\rightarrow\infty} \tfrac1{N_{\ell}}\norm{\zhat_{k\ell}^+ -\z_{\ell}^0}^2 
    =\lim_{N_\ell\rightarrow\infty} \tfrac1{N_\ell}\norm{\V_\ell\T(\zhat_{k\ell}^+ -\z_{\ell}^0)}^2  \\
    \MoveEqLeft\qquad\qquad\qquad\qquad\xrightarrow{a.s.} \Exp\left[\left(G_\ell^+(\mathsf C+\mathsf A,\mathsf B+\mathsf A,\wb\gamma_{kl}^+,\wb\gamma_{k,\ell-1}^-)-S_\ell\mathsf A-\wb B_\ell\right)^2\right] ,
\end{align*}
where we used the rotational invariance of the $\ell_2$ norm, and the fact that equation \eqref{eq:nnlintrue} is equivalent to $\V_\ell\T\z_\ell^0=\diag(\s_\ell)\V_{\ell-1}\z_{\ell-1}^0+\wb\bbf_\ell$ using the SVD \eqref{eq:SVD} of the weight matrices $\W_\ell$.

At the heart of the proof lies a key insight: Due to the randomness of the unitary matrices $\V_\ell$,
the quantities $(\z_\ell^0,\r_{k\ell}^--\z_\ell^0,\r_{k,\ell-1}^+-\z_{\ell-1}^0)$ are asymptotically jointly
Gaussian for even $\ell$, with the asymptotic covariance matrix of $\{(z^0_{\ell-1,n},r^+_{k,\ell-1,n}-z^0_{\ell-1,n},r_{k\ell,n}^--z^0_{\ell,n})\}$ given by 
$\left[\begin{smallmatrix}\Kbf_{k\ell}^+ & \bm{0}\\ \bm{0} & \tau_{k\ell}^- \end{smallmatrix}\right]$,
where $\Kbf_{k\ell}\in \Real^{2\times 2}$ and $\tau_{k\ell}^-$ is a scalar. After establishing the asymptotic Gaussianity of $(\z_\ell^0,\r_{k\ell}^--\z_\ell^0,\r_{k,\ell-1}^+-\z_{\ell-1}^0)$, since $\zhat_{\ell}$ and $\zhat_{\ell-1}$ are componentwise functions of this triplet, we have the PL(2) convergence result in \eqref{eq:even_pl2}.
Similarly, for odd $\ell$, we can show that $\left(\V_{\ell-1}\z_{\ell-1}^0,\V_{\ell-1}\r_{k,\ell-1}^+,\V_{\ell}\T\r_{k\ell}^-\right)$ is asymptotically Gaussian. For these $\ell$, $\V_{\ell-1}\zhat_{k,\ell-1}^-$ and $\V_{\ell}\T\zhat_{k\ell}^+$ are functions of the triplet, which gives the result in \eqref{eq:odd_pl2}.\label{rem:normality}

Due to the asymptotic normality mentioned above, the inputs $(\r_{\ell}^-,\r_{\ell-1}^+)$ to the estimators $\gbf_\ell^\pm$ are the true signals $(\z_{\ell-1}^0,\z_{\ell}^0)$ plus additive white Gaussian noise (AWGN). 
Hence, the estimators $\gbf_\ell^\pm$ act as denoisers, and ML-VAMP effectively reduces the inference problem \ref{eq:problem} into a sequence of linear transformations and denoising problems.
The denoising problems are solved by $\gbf_\ell^\pm$ for even $\ell$, and by $\Gbf_\ell^\pm$ for odd $\ell$. 
\subsection{MMSE Estimation and Connections to the Replica Predictions}
\label{sec:replica}

We next consider the special case of using MMSE estimators corresponding to the 
true distributions.  In this case, the SE equations simplify considerably
using the following \emph{MSE functions}:  
let $\zbfhat^-_{\ell-1}$, $\zbfhat^+_{\ell}$ be the MMSE
estimates of $\zbf^0_{\ell-1}$ and $\zbf^0_{\ell}$ from the variables
$\rbf^+_{\ell-1},\rbf^-_\ell$ under the joint density \eqref{eq:def_belief}.
Let ${\mathcal E}^{\pm}(\cdot)$ be the corresponding
mean squared errors,
\beq \label{eq:Ecal}
    {\mathcal E}^+_\ell(\gammabar^+_{\ell-1},\gammabar^-_{\ell}) := 
    \lim_{N \rightarrow \infty} \frac{1}{N} \Exp \left\|\zbf^0_{\ell}-\zbfhat^+_{\ell}
        \right\|^2, \quad
    {\mathcal E}^-_{\ell-1}(\gammabar^+_{\ell-1},\gammabar^-_{\ell}) := 
    \lim_{N \rightarrow \infty} \frac{1}{N} \Exp \left\|\zbf^0_{\ell-1}-\zbfhat^-_{\ell-1}
        \right\|^2.
\eeq

\begin{theorem}[MSE of MMSE-ML-VAMP]\label{thm:MMSE_SE}
Consider the system under the assumptions of Theorem \ref{thm:main_result}, 
with MMSE estimation functions $\gbf_\ell^\pm,\gbf_0^+,\gbf_{L}^-$ from \eqref{eq:MMSE_update} 
for the belief estimates in \eqref{eq:def_belief} with $\gamma^+_{k\ell}=\wb\gamma^{\pm}_{k\ell}$
from the state-evolution equations.  Then, the state evolution equations reduce to
\beq \label{eq:mlvamp_se}
    \gammabar^+_{k\ell} = \frac{1}{
        {\mathcal E}^+_\ell(\gammabar^-_{k\ell},\gammabar^+_{k,\ell-1})} - \gammabar^-_{k\ell},
    \quad
    \gammabar^-_{k+1,\ell} = \frac{1}{
        {\mathcal E}^-_\ell(\gammabar^-_{k+1,\ell+1},\gammabar^+_{k\ell})} - 
        \gammabar^+_{k\ell},
\eeq
where $1/\wb\eta^+_{k\ell}={\mathcal E}^+_\ell(\gammabar^-_{k\ell},\gammabar^+_{k,\ell-1})$ is the MSE of the estimate $\zhat_{k\ell}^+$.
\end{theorem}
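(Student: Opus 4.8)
The plan is to substitute the MMSE estimation functions \eqref{eq:MMSE_update} into the general state-evolution (SE) recursion underlying Theorem~\ref{thm:main_result} (Algorithm~\ref{algo:mlvamp_se}) and show that it collapses to \eqref{eq:mlvamp_se}. An estimation function enters the SE in only two places: through its limiting divergence $\wb\alpha^\pm_{k\ell}$, which via the algebraic updates \eqref{eq:gamupdate} fixes $\wb\eta^\pm_{k\ell}$ and $\wb\gamma^\pm_{k\ell}$; and through how it transforms the Gaussian error statistics $\Kbf^+_{k\ell},\tau^-_{k\ell}$ passed to the next estimator. The two facts needed are therefore (i) a Bayes--Stein identity that evaluates $\wb\alpha^\pm_{k\ell}$ in terms of the MSE functions $\Ecal^\pm_\ell$ of \eqref{eq:Ecal}, and (ii) a ``variance-matching'' induction showing that the error variances the SE tracks stay equal to the precisions $\wb\gamma^\pm_{k\ell}$ built into the beliefs $b_\ell$ of \eqref{eq:def_belief}, so that $b_\ell$ is exactly the posterior of the limiting observation model and $\gbf^\pm_\ell$ is its Bayes estimator.

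The first fact is routine. If $\hat z(r)=\Exp[z^0\mid r]$ is the posterior mean of a scalar $z^0$ seen as $r=z^0+w$ with $w\sim\mc N(0,1/\gamma)$ independent of $z^0$, then differentiating under the integral gives $\partial\hat z/\partial r=\gamma\,\Var(z^0\mid r)$, so $\Exp\langle\partial\hat z/\partial r\rangle=\gamma\cdot\mathrm{mmse}(\gamma)$; the multi-observation version is identical. Since $b_\ell$ treats $\r^-_\ell$ (resp.\ $\r^+_{\ell-1}$) as $\z^0_\ell$ (resp.\ $\z^0_{\ell-1}$) through AWGN of precision $\gamma^-_\ell$ (resp.\ $\gamma^+_{\ell-1}$), applying this identity componentwise to $\gbf^+_\ell$ differentiated in $\r^-_\ell$ (line~\ref{line:alphap}), to $\gbf^-_\ell$ differentiated in $\r^+_{\ell-1}$ (line~\ref{line:alphan}), and to the boundary maps $\gbf^+_0,\gbf^-_L$, and invoking the uniform-Lipschitz / pseudo-Lipschitz hypotheses of Theorem~\ref{thm:main_result} to take $N\to\infty$, gives
\[
\wb\alpha^+_{k\ell}=\wb\gamma^-_{k\ell}\,\Ecal^+_\ell(\wb\gamma^-_{k\ell},\wb\gamma^+_{k,\ell-1}),\qquad \wb\alpha^-_{k+1,\ell}=\wb\gamma^+_{k\ell}\,\Ecal^-_\ell(\wb\gamma^-_{k+1,\ell+1},\wb\gamma^+_{k\ell}),
\]
\emph{provided} the noise precision each estimator actually sees in the SE equals the matching $\wb\gamma$.

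That proviso is the crux, and I would establish it by induction over the passes $k$ and, within each pass, over the layers (forward then backward). By Theorem~\ref{thm:main_result} and the asymptotic-normality statement following it, the inputs to $\gbf^\pm_\ell$ are, in the limit, the true signals plus additive Gaussian errors that are independent of the signals and of the network noises; the quantity to be propagated is that these error variances equal $1/\wb\gamma^-_{k\ell}$ and $1/\wb\gamma^+_{k,\ell-1}$, i.e.\ that $b_\ell$ is the exact limiting posterior. The base case is the matched initialization $\wb\gamma^-_{0\ell}=1/\tau^-_{0\ell}$, with $\tau^-_{0\ell}$ the variance of $Q^-_{0\ell}$ in \eqref{eq:main_result_initialization}. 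For the step, once the input to $\gbf^+_\ell$ is matched, $\gbf^+_\ell$ is the genuine Bayes estimator of $\z^0_\ell$, so $\ebf:=\zhat^+_{k\ell}-\z^0_\ell$ has limiting normalized squared norm $\Ecal^+_\ell$ and, by the orthogonality of the conditional mean, $\tfrac1N\ebf\tran h(\r^-_{k\ell},\r^+_{k,\ell-1})\to0$ for Lipschitz $h$; in particular $\tfrac1N\ebf\tran(\r^-_{k\ell}-\z^0_\ell)\to\Ecal^+_\ell$. Feeding these, together with $\wb\alpha^+_{k\ell}=\wb\gamma^-_{k\ell}\Ecal^+_\ell$, into the Onsager-corrected update of line~\ref{line:rp}, $\r^+_{k\ell}=(\zhat^+_{k\ell}-\alpha^+_{k\ell}\r^-_{k\ell})/(1-\alpha^+_{k\ell})$, the error variance of $\r^+_{k\ell}$ reduces to $\Ecal^+_\ell/(1-\wb\alpha^+_{k\ell})$, which by \eqref{eq:gamupdate} is exactly $1/\wb\gamma^+_{k\ell}$; thus the match is preserved, and the symmetric computation on line~\ref{line:rn} preserves it for $\r^-_{k+1,\ell}$. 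The one feature absent from single-layer VAMP is that each $b_\ell$ couples $\z_\ell$ and $\z_{\ell-1}$, so at every step one carries a two-sided message; but the orthogonality argument applies to each side separately, so the bookkeeping, though heavier, is not conceptually new.

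Assembling: from $\wb\eta^+_{k\ell}=\wb\gamma^-_{k\ell}/\wb\alpha^+_{k\ell}$ in \eqref{eq:gamupdate} and $\wb\alpha^+_{k\ell}=\wb\gamma^-_{k\ell}\Ecal^+_\ell$ we get $\wb\eta^+_{k\ell}=1/\Ecal^+_\ell(\wb\gamma^-_{k\ell},\wb\gamma^+_{k,\ell-1})$, which is the asserted $1/\wb\eta^+_{k\ell}=\Ecal^+_\ell$, and then $\wb\gamma^+_{k\ell}=\wb\eta^+_{k\ell}-\wb\gamma^-_{k\ell}=1/\Ecal^+_\ell-\wb\gamma^-_{k\ell}$, the first equation of \eqref{eq:mlvamp_se}; the backward relation $\wb\gamma^-_{k+1,\ell}=1/\Ecal^-_\ell-\wb\gamma^+_{k\ell}$ drops out the same way. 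I expect the variance-matching induction above to be the main obstacle---concretely, identifying which entries of $\Kbf^+_{k\ell}$ and $\tau^-_{k\ell}$ carry which error term in the SE, and verifying the signal/error and error/error independence that makes $b_\ell$ the exact limiting posterior; the Stein identity and the concluding algebra are routine.
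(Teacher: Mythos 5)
Your proof is correct and follows essentially the same route as the paper's: the key identity $\wb\alpha^+_{k\ell}=\wb\gamma^-_{k\ell}\,\Ecal^+_\ell(\wb\gamma^-_{k\ell},\wb\gamma^+_{k,\ell-1})$ obtained by differentiating the posterior mean under the integral (Stein's identity), combined with the algebra $\wb\eta^+_{k\ell}=\wb\gamma^-_{k\ell}/\wb\alpha^+_{k\ell}$ and $\wb\gamma^+_{k\ell}=\wb\eta^+_{k\ell}-\wb\gamma^-_{k\ell}$ from \eqref{eq:gamupdate}. The variance-matching induction you flag as the main obstacle is not actually carried out in the paper's proof---the paper simply invokes the limiting MSE expression from Theorem~\ref{thm:main_result} and folds the matched-precision condition into the theorem's hypothesis $\gamma^\pm_{k\ell}=\wb\gamma^\pm_{k\ell}$ and the definition \eqref{eq:Ecal} of $\Ecal^\pm_\ell$---so your explicit treatment of that step is, if anything, more careful than the published argument.
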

\begin{proof}
See Appendix \ref{app:proof_of_main_result}.
\end{proof}

Since the estimation functions in Theorem~\ref{thm:MMSE_SE} 
are the MSE optimal functions for 
true densities, we will call this selection of estimation functions the \emph{MMSE matched
estimators}.  Under the assumption of MMSE matched estimators, the theorem
shows that the MSE error has a simple set of recursive expressions.

It is useful to compare the predicted MSE with the predicted optimal values.
The works \cite{reeves2017additivity,gabrie2018entropy} 
postulate the optimal MSE for inference in deep networks under the LSL model described above
using the replica method from statistical physics.
Interestingly, it is shown in \cite[Thm.2]{reeves2017additivity} that the predicted minimum MSE
satisfies equations that exactly agree with the fixed points of the updates
\eqref{eq:mlvamp_se}.
Thus, when the fixed points of \eqref{eq:mlvamp_se} are unique, ML-VAMP
with matched MMSE estimators provably achieves the Bayes optimal MSE predicted
by the replica method.
Although the replica method is not rigorous,
this MSE predictions have been indepedently proven for the Gaussian case in 
\cite{reeves2017additivity} and certain two layer networks in \cite{gabrie2018entropy}.
This situation is similar to several other works relating the MSE of AMP with replica 
predictions \cite{reeves2016replica,krzakala2012statistical}.
The consequence is that, if the replica method is correct,
ML-VAMP provides a computationally efficient method for inference
with testable conditions under which it achieves the Bayes optimal MSE.


\section{Numerical Simulations} \label{sec:sim}

We now numerically investigate the MAP-ML-VAMP and MMSE-ML-VAMP algorithms using two sets of experiments, 
where in each case the goal was to solve an estimation problem of the form in \eqref{eq:problem} 
using a neural network of the form in \eqref{eq:nntrue}.
We used the Python~3.7 implementation of the ML-VAMP algorithm available on GitHub.\footnote{See \url{https://github.com/GAMPTeam/vampyre}.}

The first set of experiments uses random draws of a synthetic network to validate the claims made about the ML-VAMP state-evolution (SE) in Theorem \ref{thm:main_result}.
In addition, it compares MAP-ML-VAMP and MMSE-ML-VAMP to the MAP approach \eqref{eq:likelihood_min} using a standard gradient-based solver, ADAM \cite{kingma2014adam}.
The second set of experiments applies ML-VAMP to image inpainting, using images of handwritten digits from the widely used MNIST dataset. 
There, MAP-ML-VAMP and MSE-ML-VAMP were compared to Stochastic Gradient Langevin Dynamics (SGLD) \cite{welling2011bayesian}, an MCMC-based sampling method that approximates $\Exp[\z|\y]$, as well as to the optimization approach \eqref{eq:likelihood_min} using the ADAM solver.

\subsection{Performance on a Synthetic Network} \label{sec:se_sim}

We first considered a 7-layer neural network of the form in \eqref{eq:nntrue}.
The first six layers, with dimensions $N_0=20$, $N_1=N_2=100$, $N_3=N_4=500$, $N_5=N_6=784$, formed a 
(deterministic) deep generative prior driven by i.i.d.\ Gaussian $\z_0^0$.
The matrices $\Wbf_1,\Wbf_3,\Wbf_5$ and biases $\bbf_1,\bbf_3,\bbf_5$ were drawn i.i.d.\ Gaussian,
and the activation functions $\phi_2,\phi_4,\phi_6$ were ReLU.
The mean of the bias vectors $\bbf_\ell$ was chosen so that a fixed fraction, $\rho$, of the linear outputs were positive, 
so that only the fraction $\rho$ of the ReLU outputs were non-zero.
Because this generative network is random rather than trained, we refer to it as ``synthetic.''
The final layer, which takes the form $\ybf=\Abf\z_6^0+\xibf_6$, generates noisy, compressed measurements of $\z_6^0$.
Similar to \citep{RanSchFle:14-ISIT}, the matrix $\Abf\in\R^{M\times N_6}$ was constructed from the SVD $\A=\U\diag(\s)\V\tran$, where
the singular-vector matrices $\U$ and $\V$ were drawn uniformly from the set of orthogonal matrices,
and the singular values were geometrically spaced (i.e., $s_i/s_{i-1}=\kappa~\forall i$) to achieve a condition number of $s_1/s_{M}=10$.
It is known that such matrices cause standard AMP algorithms to fail \citep{RanSchFle:14-ISIT}, but not VAMP algorithms \cite{rangan2019vamp}.
The number of compressed measurements, $M$, was varied from 10 to 300,
and the noise vector $\xibf$ was drawn i.i.d.\ Gaussian with a variance set to achieve a signal-to-noise ratio of 
$10\log_{10}( \Exp\|\A\z_6^0\|^2/\Exp\|\xibf\|^2) =$ 30 dB. 



To quantify the performance of ML-VAMP, we repeated the following 1000 times.
First, we drew a random neural network as described above.
Then we ran the ML-VAMP algorithm for 100 iterations, 
recording the normalized MSE (in dB) of the iteration-$k$ estimate of the network input, $\zhat_{k0}^\pm$:
\[
    \mathrm{NMSE}(\zhat_{k0}^\pm) := 10\log_{10}\left[ \frac{\|\z_0^0-\zhat_{k0}^\pm\|^2}{\|\z_0^0\|^2} \right] .
\]
Since ML-VAMP computes two estimates of $\z_0^0$ at each iteration, we consider each estimate as corresponding to a ``half iteration.''


\begin{figure}
\centering
\includegraphics[width=0.45\columnwidth]{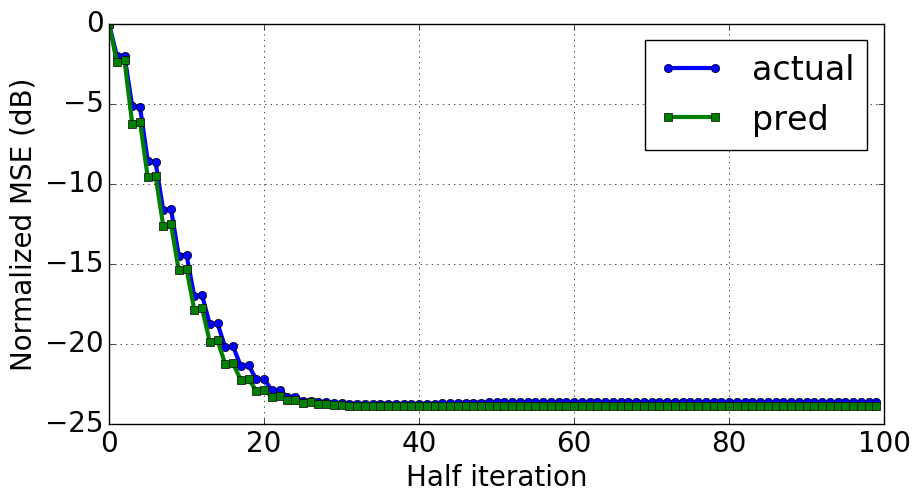}
\hfill
\includegraphics[width=0.45\columnwidth]{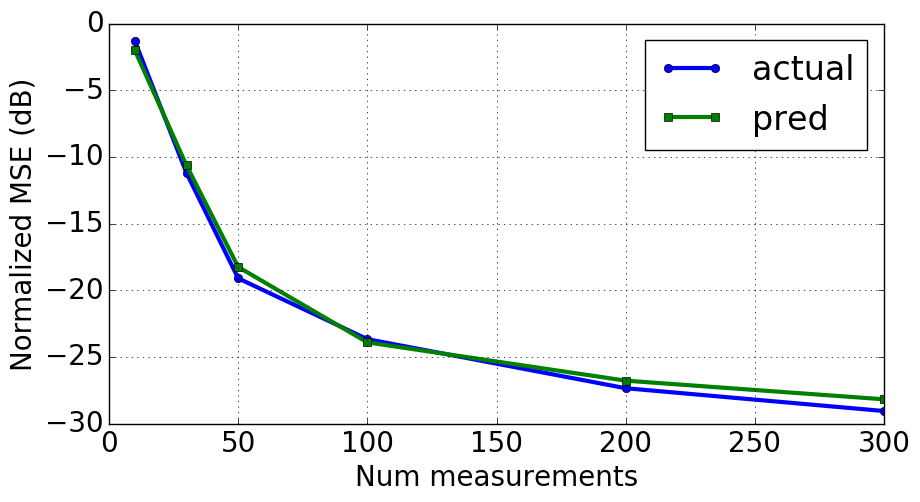}
\caption{NMSE of MMSE-ML-VAMP and its SE prediction when estimating the input to a randomly generated 7-layer neural network (see text of Section~\ref{sec:se_sim}).
Left panel:  Average NMSE versus half-iteration with $M=$ 100 measurements.
Right panel:  Average NMSE verus measurements $M$ at after $50$ iterations.}
\label{fig:randmlp_sim_mmse}
\end{figure}

\begin{figure}
\centering
\includegraphics[width=0.45\columnwidth]{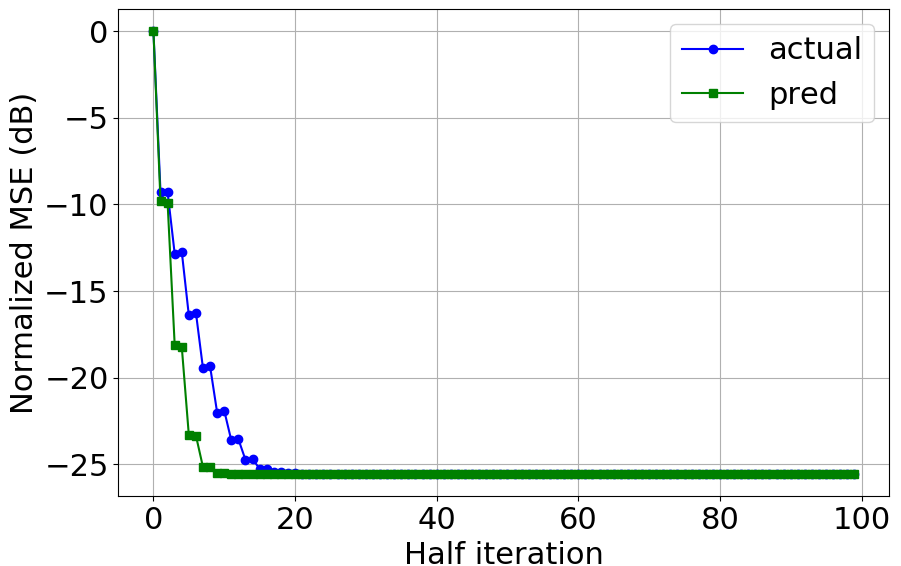}
\hfill
\includegraphics[width=0.45\columnwidth]{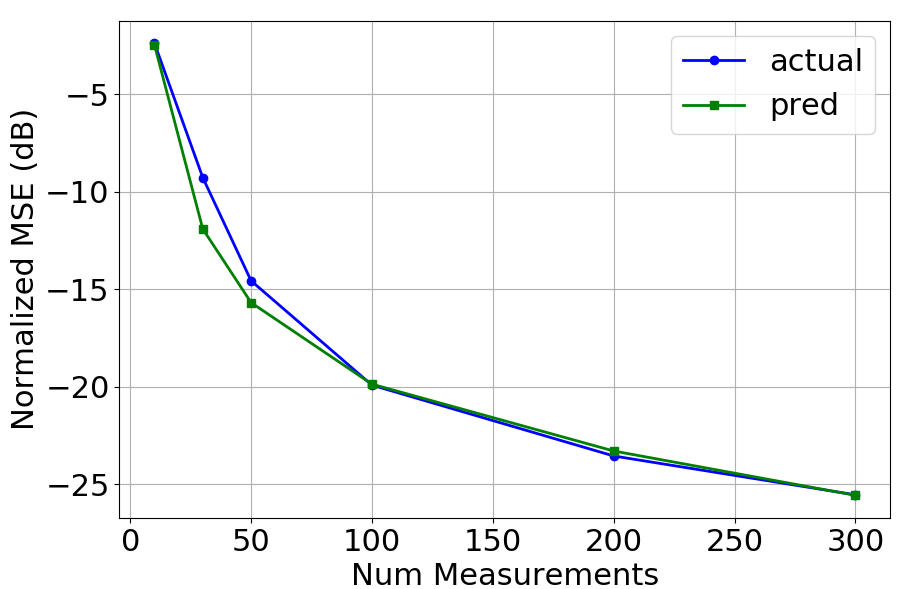}
\caption{Simulation with randomly generated neural network with MAP estimators from equation \eqref{eq:MAP_update}.
Left panel:  Normalized mean squared error (NMSE)
for ML-VAMP and the predicted MSE
as a function of the iteration with $M=$ 100 measurements.
Right panel:  Final NMSE (50 iterations) for ML-VAMP and the predicted MSE as a
function of the number of measurements, $M$. $\rho=0.9$}
\label{fig:randmlp_sim_map}
\end{figure}

\paragraph{Validation of SE Prediction} 
For MMSE-ML-VAMP,
the left panel of Fig.~\ref{fig:randmlp_sim_mmse} shows the NMSE versus half-iteration for $M=$ 100 compressed measurements.
The value shown is the average over 1000 random realizations. 
Also shown is the MSE predicted by the ML-VAMP state evolution.
Comparing the two traces, we see that the SE predicts the actual behavior of MMSE-ML-VAMP remarkably well, within approximately 1~dB\@.  
The right panel shows the NMSE after $k=$ 50 iterations (i.e., 100 half-iterations) for several numbers of measurements $M$.  
Again we see an excellent agreement between the actual MSE and the SE prediction.
In both cases we used the positive fraction $\rho=0.4$.
Analogous results are shown for MAP-ML-VAMP in Fig.~\ref{fig:randmlp_sim_map}.
There we see an excellent agreement between the actual MSE and the SE prediction for iterations $k\geq 15$ and all values of $M$.

\begin{figure}
\centering
\includegraphics[scale=.5]{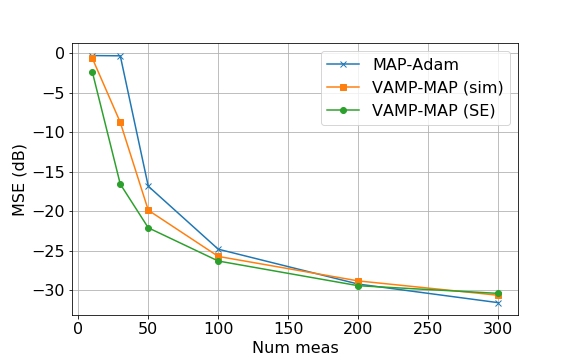}
\caption{Simulation with randomly generated neural network with MAP estimators from equation \eqref{eq:MAP_update}. Final NMSE for (a) MAP inference
computed by Adam optimizer; (b) MAP inference from ML-VAMP; (c) State evolution prediction.}
\label{fig:randmlp_sim}
\end{figure}

\paragraph{Comparison to ADAM}
We now compare the MSE of MAP-ML-VAMP and its SE to that the MAP approach \eqref{eq:likelihood_min} using the ADAM optimizer \cite{kingma2014adam}, as implemented in Tensorflow.
As before, the goal was to recover the input $\z_0^0$ to the 7-layer synthetic network from a measurement of its output.
Fig.~\ref{fig:randmlp_sim} shows the median NMSE over 40 random network realizations for several values of $M$, the number of measurements. 
We see that, for $M\geq 100$, the performance of MAP-ML-VAMP closely matches its SE prediction, as well as the performance of the ADAM-based MAP approach \eqref{eq:likelihood_min}.
For $M<100$, there is a discrepancy between the MSE performance of MAP-ML-VAMP and its SE prediction, which is likely due to the relatively small dimensions involved.
Also, for small $M$, MAP-ML-VAMP appears to achieve slightly better MSE performance than the ADAMP-based MAP approach \eqref{eq:likelihood_min}.
Since both are attempting to solve the same problem, the difference is likely due to ML-VAMP finding better local minima.

\subsection{Image Inpainting: MNIST dataset}

To demonstrate that ML-VAMP can also work on a real-world dataset, we perform inpainting on the MNIST dataset. 
The MNIST dataset consists of 28 $\times$ 28 = 784 pixel images of handwritten digits, as shown in the first column of Fig.~\ref{fig:mnist_inpaint}.

To start, we trained a 4-layer (deterministic) deep generative prior model from 50\,000 digits using a variational autoencoder (VAE) \cite{kingma2013auto}. 
The VAE ``decoder'' network was designed to accept 20-dimensional i.i.d.\ Gaussian random inputs $\z_0$ with zero mean and unit variance,
and to produce MNIST-like images $\xbf$.
In particular, this network began with a linear layer with 400 outputs, followed by a ReLU activations, followed by a linear layer with 784 units, followed by sigmoid activations that forced the final pixel values to between 0 and 1.

Given an image, $\xbf$, our measurement process produced $\ybf$ by erasing rows 10-20 of $\xbf$, as shown in the second column of Fig.~\ref{fig:mnist_inpaint}.  
This process is known as ``occlusion.'' 
By appending the occlusion layer onto our deep generative prior, we got a 5-layer network that generates an occluded MNIST image $\y$ from a random input $\z_0$. 
The ``inpainting problem'' is to recover the image $\x=\z_4$ from the occluded image $\y$. 

\begin{figure}
\centering
\includegraphics[width=0.6\columnwidth]{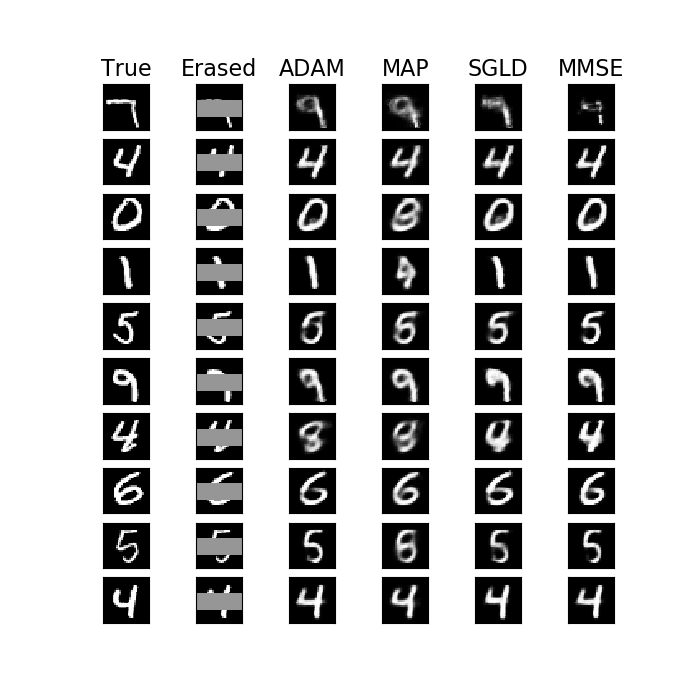}
\vspace{-0.5cm}
\caption{MNIST inpainting: Original 28$\times$28 images of handwritten digits (Col 1), with rows 10-20 are erased (Col 2). Comparison of reconstructions using MAP estimation with ADAM solver (Col 3), MAP estimation with ML-VAMP algorithm (Col 4), MMSE estimation with the SGLD approach (Col 5), and MMSE estimation with ML-VAMP algorithm (Col 6).}
\label{fig:mnist_inpaint}
\end{figure}

For this inpainting problem, we compared MAP-ML-VAMP and MMSE-ML-VAMP to the MAP estimation approach \eqref{eq:likelihood_min} using the ADAM solver, and to Stochastic Gradient Langevin Dynamics (SGLD) \cite{welling2011bayesian}, an MCMC-based sampling method that approximates $\Exp[\z|\y]$.
Example image reconstructions are shown in Fig.~\ref{fig:mnist_inpaint}. 
There we see that the qualitative performance of ML-VAMP is comparable to the baseline solvers.

\section{Conclusion}
Inference using deep generative prior models provides a powerful tool for complex inverse problems. Rigorous theoretical analysis of these methods has been difficult due to the non-convex nature of the models. The ML-VAMP methodology for MMSE as well as MAP estimation provides a principled and computationally tractable method for performing the inference whose performance can be rigorously and precisely characterized in a certain large system limit.  The approach thus offers a new and potentially powerful approach for understanding and improving deep neural network based models for inference.


\appendices

\section{Empirical Convergence of Vector Sequences} \label{sec:empirical}
\label{app:empirical_convergence}
We follow the framework of Bayati and Montanari \citep{BayatiM:11}, which models
various sequences as deterministic, but with components converging empirically
to a distribution.  We start with a brief review of useful definitions.
Let $\xbf(N) = (\xbf_1,\ldots,\xbf_N)$ be a block vector with components $\xbf_n \in \R^r$
for some $r$.  Thus, the vector $\xbf(N)$
is a vector with dimension $rN$.
Given any function $g:\R^r \arr \R^s$, we define the
\emph{componentwise extension} of $g(\cdot)$ as the function,
\beq \label{eq:gcomp}
    \gbf(\xbf) := (g(\xbf_1),\ldots,g(\xbf_N)) \in \R^{Ns}.
\eeq
That is,
$\gbf(\cdot)$
applies the function
$g(\cdot)$
on each $r$-dimensional component.
Similarly,
we say $\gbf(\xbf)$ \emph{acts componentwise} on $\xbf$ whenever it is of the form \eqref{eq:gcomp}
for some function $g(\cdot)$.

Next consider a sequence of block vectors of growing dimension,
\[
    \xbf(N) = (\xbf_1(N),\ldots,\xbf_N(N)),
\qquad
N=1,\,2,\,\ldots,
\]
where each component $\xbf_n(N) \in \R^r$.
In this case, we will say that
$\xbf(N)$ is a \emph{block vector sequence that scales with $N$
under blocks $\xbf_n(N) \in \R^r$.}
When $r=1$, so that the blocks are scalar, we will simply say that
$\xbf(N)$ is a \emph{vector sequence that scales with $N$}.
Such vector sequences can be deterministic or random.
In most cases, we will omit the notational dependence on $N$ and simply write $\xbf$.

Now, given $p \geq 1$,
a function $f:\R^r \arr \R^s$ is called \emph{pseudo-Lipschitz continuous of order $p$},
if there exists a constant $C > 0$ such that for all $\xbf_1,\xbf_2 \in\R^r$,
\[
    \| f(\xbf_1)- f(\xbf_2) \| \leq C\|\xbf_1-\xbf_2\|\left[ 1 + \|\xbf_1\|^{p-1}
    + \|\xbf_2\|^{p-1} \right].
\]
Observe that in the case $p=1$, pseudo-Lipschitz continuity reduces to
usual Lipschitz continuity.
Given $p \geq 1$, we will say that the block vector sequence $\xbf=\xbf(N)$
\emph{converges empirically with $p$-th order moments} if there exists a random variable
$X \in \R^r$ such that
\begin{enumerate}[(i)]
\item $\Exp\|X\|_p^p < \infty$; and
\item for any $f : \R^r \arr \R$ that is pseudo-Lipschitz continuous of order $p$,
\beq \label{eq:PLp-empirical}
    \lim_{N \arr \infty} \tfrac{1}{N} \sum_{n=1}^N f(\xbf_n(N)) = \Exp\left[ f(X) \right].
\eeq
\end{enumerate}
In \eqref{eq:PLp-empirical}, we have
the empirical mean of the components $f(\xbf_n(N))$
of the componentwise extension $\fbf(\xbf(N))$
converging to the expectation $\Exp[ f(X) ]$.
In this case, with some abuse of notation, we will write
\beq \label{eq:plLim}
    \lim_{N \arr \infty} \left\{ \xbf_n \right\} \stackrel{PL(p)}{=} X,
\eeq
where, as usual, we have omitted the dependence on $N$ in $\xbf_n(N)$.
Importantly, empirical convergence can be defined on deterministic vector sequences,
with no need for a probability space.  If $\xbf=\xbf(N)$ is a random vector sequence,
we will often require that the limit \eqref{eq:plLim} holds almost surely.

Finally, we introduce the concept of \emph{uniform pseduo-Lipschitz continuity}.
Let $\phibf(\rbf,\gamma)$ be a function on $\rbf \in \R^r$ and $\theta \in \R^s$.
We say that $\phibf(\rbf,\theta)$ is \emph{uniformly Lipschitz continuous} in $\rbf$
at $\theta=\thetabar$ if there exists constants
$L_1,L_2 \geq 0$ and an open neighborhood $U$ of $\thetabar$ such that
\begin{subequations}
\begin{align} \label{eq:unifLip1}
    \|\phibf(\rbf_1,\theta)-\phibf(\rbf_2,\theta)\| \leq L_1\|\rbf_1-\rbf_2\|,\qquad\forall 
    \rbf_1,\rbf_2 \in \R^r,\theta \in U\\
    \label{eq:unifLip2}
    \|\phibf(\rbf,\theta_1)-\phibf(\rbf,\theta_2)\| \leq L_2\left(1+\|\rbf\|\right)
    \|\theta_1-\theta_2\|,\qquad\forall \rbf \in \R^r,\theta_1,\theta_2 \in U .
\end{align}
\end{subequations}


\section{ML-VAMP State Evolution Equations}\label{app:mlvamp_se}

The state evolution (SE) recursively defines a set of scalar random variables that describe
the typical components of the vector quantities produced from the ML-VAMP algorithm.
The definition of the random variables are given in Algorithm \ref{algo:mlvamp_se}.
The algorithm steps mimic those in the ML-VAMP algorithm, Algorithm~\ref{algo:ml-vamp}, 
but with each update producing scalar random variables instead of vectors.
The updates use several functions:
\begin{subequations}\label{eq:se_update_functions}
\begin{align}
\MoveEqLeft f_0^0(w_0) = w_0,\qquad f^0_\ell(p^0_{\lm1},w_\ell) := f^0_\ell(p^0_{\lm1},\xi_\ell) := \phi_\ell(p^0_{\lm1},\xi_\ell), 
    \quad \ell=2,4,\ldots,L,    \label{eq:f0nonlin} \\
\MoveEqLeft     f^0_\ell(p^0_{\lm1},w_\ell) := f^0_\ell(p^0_{\lm1},(\bar{s}_\ell,\bar{b}_\ell,\bar{\xi}_\ell))
        = \bar{s}_\ell p^0_{\ell-1} + \bar{b}_\ell + \bar{\xi}_\ell, 
    \quad \ell=1,3,\ldots,\Lm1,    \label{eq:f0lin}\\
\MoveEqLeft h_\ell^\pm(p_{\ell-1}^0,p_{\ell-1}^+,q_\ell^-,w_\ell,\theta_{k\ell}^\pm) = g_\ell^\pm(q_\ell^-+q_{\ell}^0,p_{\ell-1}^++p_{\ell-1}^0,\theta_{k\ell}^\pm),\quad \ell=2,4,\ldots L-2,\\
\MoveEqLeft h_\ell^\pm(p_{\ell-1}^0,p_{\ell-1}^+,q_\ell^-,w_\ell,\theta_{k\ell}^\pm) = G_\ell^\pm(q_\ell^-+q_{\ell}^0,p_{\ell-1}^++p_{\ell-1}^0,\theta_{k\ell}^\pm),\quad \ell=1,3,\ldots L-1,\\
\MoveEqLeft h_0^+(q_{0}^-,w_{0}\theta_{k0}^+) = g^+_0(q_0^-+w_{0},\theta_{k0}^+),\quad h_L^-(p_{L-1}^0,p_{L-1}^+,w_{L},\theta_{kL}^-) = g^-_L(p_{L-1}^++p^0_{L-1},\theta_{kL}^-),\\
\MoveEqLeft f^{+}_{0}(q_{0}^-,w_0,\Lambda_{k0}^+) := \tfrac{1}{1-\alpha_{k\ell}^+}
     \left[
            h^{+}_{0}(q_{0}^-,w_0,\theta_{k0}^+) -w_0
            - \alpha_{k0}^+ q_0^- \right], \label{eq:fh0p} \\
\MoveEqLeft      f^{+}_{\ell}(p^0_{\lm1},p_{\lm1}^+,q_{\ell}^-,w_\ell,\Lambda_{k\ell}^+)
    := \tfrac{1}{1-\alpha_{k\ell}^+} 
      \left[
            h^{+}_{\ell}(p^0_{\lm1},p_{\lm1}^+,q_{\ell}^-,w_\ell,\theta_{k\ell}^+)
            - q^0_\ell - \alpha_{k\ell}^+ q_\ell^- \right], \label{eq:fhp} \\
\MoveEqLeft      f^{-}_{L}(p^0_{\Lm1},p_{\Lm1}^+,w_L,\Lambda_{kL}^-)
    := \tfrac{1}{1-\alpha_{k\ell}^-}
     \left[
            h^{-}_{L}(p^0_{\Lm1},p_{\Lm1}^+,w_L,\theta_{kL}^-) - p^0_{\Lm1}
            - \alpha_{k,\Lm1}^- p_{\Lm1}^+ \right], \label{eq:fhLn}    \\
\MoveEqLeft      f^{-}_{\ell}(p^0_{\lm1},p_{\lm1}^+,q_{\ell}^-,w_\ell,\Lambda_{k\ell}^-)
    := \tfrac{1}{1-\alpha_{k,\lm1}^-} 
     \left[
            h^{-}_{\ell}(p^0_{\lm1},p_{\lm1}^+,q_{\ell}^-,w_\ell,\theta_{k\ell}^-)
            - p^0_{\lm1}
            - \alpha_{k,\lm1}^- p_{\lm1}^+ \right]. \label{eq:fhn}
\end{align}
\end{subequations}

In addition define the \textit{perturbation} random variables $W_\ell$ (recall from \eqref{eq:varinit}) as
\begin{subequations}\label{eq:perturbations}
\begin{align}
    \MoveEqLeft W_0 = Z_0^0,\qquad\qquad    W_\ell = \Xi_\ell,\qquad \ell=2,4,\ldots,L-2,\\
    \MoveEqLeft W_\ell = (S_\ell,\wb B_\ell,\wb\Xi_\ell),\qquad \ell=1,3,\ldots,L-1.
\end{align}
\end{subequations}

\begin{algorithm}
\setstretch{1.1}
\caption{State Evolution for ML-VAMP}
\begin{algorithmic}[1]  \label{algo:mlvamp_se}

\REQUIRE{$f^0_\ell(\cdot),$ $f^\pm_{\ell}(\cdot)$ and $h^\pm_{\ell}(\cdot)$ from eqn. \eqref{eq:se_update_functions} and initial random variables:  $Z_0^0,$ $\{W_\ell, Q_{0\ell}^-\}$ 
from Section~\ref{sec:seevo} and \eqref{eq:perturbations}}

\STATE{// \texttt{Initial pass}}
    \label{line:qinit_se_mlvamp}
\STATE{$Q^0_0 = Z_0^0$, $\tau^0_0 = \Exp(Q^0_0)^2$ and $P^0_0 \sim \Norm(0,\tau^0_0)$} \label{line:p0init_se_mlvamp}
\FOR{$\ell=1,\ldots,\Lm1$}
    \STATE{$Q^0_\ell=f^0_\ell(P^0_{\lm1},W_\ell)$}
    \label{line:q0_init}
    \STATE{$P^0_\ell \sim \Norm(0,\tau^0_\ell)$,
            $\tau^0_\ell = \Exp(Q^0_\ell)^2$} \label{line:pinit_se_mlvamp}
\ENDFOR
\STATE{}

\FOR{$k=0,1,\dots$}
    \STATE{// \texttt{Forward Pass} }
    \STATE{$\wh{Q}^+_{k0} = h^+_0(Q_{k0}^-,W_0,\wb\theta^+_{k0}))$} \label{line:qhat0_se_mlvamp}
    \STATE{$\alphabar_{k0}^+ = \Exp(\tfrac{\partial h^+_0}{\partial Q_{k0}^-}(Q_{k0}^-,W_0,\wb\theta^+_{k0})),\qquad \Lambdabar_{k0}^+ = (\alphabar^+_{k0},\wb\theta_{k0}^+)$}
    \STATE{$Q_{k0}^+ = f^+_{0}(Q_{0}^-,W_0,\Lambdabar^+_{k0})$}  \label{line:q0_se_mlvamp}
    \STATE{$(P^0_0,P_{k0}^+) \sim \Norm(\zero,\Kbf_{k0}^+)$,
        $\qquad\Kbf_{k0}^+ := \Cov(Q^0_0,Q_{k0}^+)$} \label{line:p0_se_mlvamp}
    \FOR{$\ell=1,\ldots,L-1$}
        \STATE{$\wh{Q}^+_{k\ell} = h^+_\ell(P^0_{\lm1},P^+_{k,\lm1},Q_{k\ell}^-,
            W_\ell,\wb\theta^+_{k\ell}))$} \label{line:qhat_se_mlvamp}
        \STATE{$\alphabar_{k\ell}^+ = \Exp(\tfrac{\partial h^+_\ell}{\partial Q_{k\ell}^-}(P^0_{\lm1},P^+_{k,\lm1},Q_{k\ell}^-,
            W_\ell,\wb\theta^+_{k\ell})),\qquad \Lambdabar_{k\ell}^+ = (\alphabar^+_{k\ell},\wb\theta_{k\ell}^+)$}
            \label{line:lamp_se_mlvamp}
        \STATE{$Q_{k\ell}^+ = f^+_{\ell}(P^0_{\lm1},P^+_{k,\lm1},Q_{k\ell}^-,W_\ell,\Lambdabar^+_{k\ell})$}
            \label{line:qp_se_mlvamp}
        \STATE{$(P^0_\ell,P_{k\ell}^+) \sim \Norm(\zero,\Kbf_{k\ell}^+)$,
            $\qquad\Kbf_{k\ell}^+ := \Cov(Q^0_\ell,Q_{k\ell}^+) $}   \label{line:pp_se_mlvamp}
    \ENDFOR
\vspace{10pt}

    \STATE{// \texttt{Backward Pass} }
    \STATE{$\wh{P}_{\kp1,\Lm1}^- = h^-_{L}(P^0_{\Lm1},P_{k,\Lm1}^+,W_L,\wb\theta^-_{\kp1,L})$}
        \label{line:phatL_se_mlvamp}
    \STATE{$\alphabar_{k+1,L}^- = \Exp(\tfrac{\partial h^-_{L}}{\partial P_{k,\Lm1}^+}(P^0_{\Lm1},P_{k,\Lm1}^+,W_L,\wb\theta^-_{\kp1,L})),\qquad\Lambdabar_{k+1,L}^- = (\alphabar^-_{k+1,L},\wb\theta_{k+1,L}^-)$}
    \STATE{$P_{\kp1,\Lm1}^- = f^-_{L}(P^0_{\Lm1},P_{k,\Lm1}^+,W_L,\Lambdabar^-_{\kp1,L})$}  \label{line:pL_se_mlvamp}
    \STATE{$Q_{\kp1,\Lm1}^- \sim \Norm(0,\tau_{\kp1,\Lm1}^-),\qquad \tau_{\kp1,\Lm1}^- := \Exp(P^-_{\kp1,\Lm1})^2$} \label{line:qL_se_mlvamp}
    \FOR{$\ell=\Lm1,\ldots,1$}
        \STATE{$\wh{P}_{\kp1,\lm1}^- = h^-_{\ell}(P^0_{\lm1},P_{k,\lm1}^+,W_\ell,\wb\theta^-_{\kp1,\ell})$}
            \label{line:phatn_se_mlvamp}
        \STATE{$\alphabar_{k+1,\ell}^- = \Exp(\tfrac{\partial h^-_{\ell}}{\partial P_{k,\Lm1}^+}(P^0_{\lm1},P_{k,\lm1}^+,W_\ell,\wb\theta^-_{\kp1,\ell})),\qquad \Lambdabar_{k+1,\ell}^- = (\alphabar^-_{k+1,\ell},\wb\theta_{k+1,\ell}^-)$}
       \STATE{$P_{\kp1,\lm1}^- =
        f^-_{\ell}(P^0_{\lm1},P^+_{k,\lm1},Q_{\kp1,\ell}^-,W_\ell,\Lambdabar^-_{k\ell})$}
            \label{line:pn_se_mlvamp}
        \STATE{$Q_{\kp1,\lm1}^- \sim \Norm(0,\tau_{\kp1,\lm1}^-),\qquad \tau_{\kp1,\lm1}^- := \Exp(P_{\kp1,\lm1}^-)^2$}   \label{line:qn_se_mlvamp}
    \ENDFOR

\ENDFOR
\end{algorithmic}
\end{algorithm}


\section{Proofs of ML-VAMP Fixed-Point Theorems}
\label{app:fixed_points}
\subsection{Proof of Theorem~\ref{thm:mapfix}}

The linear equalities in defining $\sbf_{k\ell}^\pm$ can be rewritten as,
\begin{align}\label{eq:defr}
    \rbf_{k\ell}^+ = \zbfhat^+_{k\ell} + \frac{1}{\alpha^{-}_{k\ell}}\sbf^{+}_{k\ell}, \qquad
    \rbf_{k+1,\ell}^- = \zbfhat^-_{k\ell} - \frac{1}{\alpha^{+}_{k\ell}}\sbf^{-}_{\kp1,\ell} 
\end{align}
Substituting \eqref{eq:defr}  in lines \ref{line:rp} and \ref{line:rn} of Algorithm \ref{algo:ml-vamp} give the updates \eqref{eq:admmsp} and \eqref{eq:admmsn} in Theorem \ref{thm:mapfix}. It remains to show that the optimization problem in updates
\eqref{eq:admmHp} and \eqref{eq:admmHn} is equivalent to
\eqref{eq:MAP_update}. It suffices to show that the terms dependent on $(\z_{\ell-1}^-,\z^{+}_\ell)$ in $b_\ell$ from  \eqref{eq:MAP_update}, and $\mc L_\ell$ from \eqref{eq:admmHp} and \eqref{eq:admmHn} are identical. This follows immediately on substituting \eqref{eq:defr} in \eqref{eq:def_belief}. Thus there exists a bijective mapping between the fixed points $\{\zhat,\r^{+},\r^{-}\}$ (of Algorithm \ref{algo:ml-vamp}) and $\{\zhat,\s\}$ (of Theorem \ref{thm:mapfix}).

It now remains to be shown that any fixed point of Algorithm~\ref{algo:ml-vamp} is a critical point of the augmented Lagrangian in \eqref{eq:Lagdef}.
To that end, we need to show that
there exists dual parameters $\sbf_\ell$ such that for all $\ell=0,\ldots,\Lm1$,
\begin{gather}
    \zbfhat^+_\ell=\zbfhat^-_\ell,\qquad
     \label{eq:Laggrad}
    \partial_{\zbf^+} \mc L(\zbfhat^+,\zbfhat^-,\sbf)\owns\zero, \quad
    \partial_{\zbf^-} \mathcal{L} (\zbfhat^+,\zbfhat^-,\sbf)\owns \zero,
\end{gather}
where $\mc L(\cdot)$ is the Lagrangian in \eqref{eq:Lagdef}. Primal feasibility or $\zbfhat^+_\ell=\zbfhat^-_\ell$ was already shown in \eqref{eq:z_equal}.
As a consequence of the primal feasibility $\zbfhat^+_\ell=\zbfhat^-_\ell$, observe that
\beq
\sbf_\ell^+-\sbf_\ell^- = (\alpha_\ell^++\alpha_\ell^-)\zbfhat_\ell - \alpha^+_\ell\rbf_\ell^--\alpha^-_\ell\rbf_\ell^+=0,
\eeq
where we have used \eqref{eq:fixed_point}. Define $\sbf:=\sbf^+=\sbf^-$. To show the stationarity in \eqref{eq:Laggrad} it suffices to show that $\sbf_\ell$ is a valid dual parameter for which the following stationarity conditions hold,
\begin{align} \label{eq:Laggrad_l}
    \partial_{\zbf^-_{\ell-1}} \mc L_\ell(\zbfhat_{\ell-1}^-,\zbfhat_{\ell}^+;\zbfhat^+_{\ell-1},\zbfhat^-_{\ell},\sbf_{\ell-1},\sbf_{\ell})\owns\ {\bf 0}, \qquad
    \partial_{\zbf^+_\ell} \mc L_\ell(\zbfhat_{\ell-1}^-,\zbfhat_{\ell}^+;\zbfhat^+_{\ell-1},\zbfhat^-_{\ell},\sbf_{\ell-1},\sbf_{\ell})\owns\ {\bf 0},
\end{align}
Indeed the above conditions are the stationarity conditions of the optimization problem in \eqref{eq:admmHp} and \eqref{eq:admmHn}. Hence \eqref{eq:Laggrad} holds.

\subsection{Proof of Theorem \ref{thm:fixed_point_mmse}}
Observe that the Lagrangian function for the constrained optimization problem \eqref{eq:BFE_EC} for this specific choice of Lagrange multipliers is given by
\begin{align*}
    \mc L(\{b_\ell\},\{q_\ell\},\{\r^+_\ell\},\{\r^-_\ell\},\{\gamma^+_\ell\},\{\gamma^-_\ell\}) = \sum_{\ell=0}^L D_{\mathsf{KL}}(f_\ell(\z_\ell,\z_{\ell-1})||p(\z_\ell|\z_{\ell-1}))+\sum_{\ell=0}^{L-1}H(q_{\ell})\\
    +\sum_{\ell=0}^{L-1}\gamma_\ell^-\r_\ell^{-\top}(\Exp[\z_\ell|f_\ell]-\Exp[\z_\ell|q_\ell])+\gamma_{\ell}^+\r_{\ell}^{+\top}(\Exp[\z_{\ell}|f_{\ell+1}]-\Exp[\z_{\ell-1}|q_{\ell}])\\
    +\sum_{\ell=0}^{L-1}\tfrac{\gamma_\ell^-}2(\Exp[\norm{\z_\ell}^2|f_\ell]-\Exp[\norm{\z_\ell}^2|q_\ell])+\tfrac{\gamma_{\ell}^+}2(\Exp[\norm{\z_{\ell}}^2|f_{\ell+1}]-\Exp[\norm{\z_{\ell}}^2|q_{\ell}])
\end{align*}
Notice that the stationarity KKT conditions $\nabla_{f_\ell} \mc L=\bm{0}$ and $\nabla_{q_\ell} \mc L=\bm{0}$ give us the relation
\begin{subequations}\label{eq:stationarity_KKT}
\begin{align}\label{eq:b_stationarity}
    f^*_\ell(\z_\ell,\z_{\ell-1}) &\propto p(\z_\ell|\z_{\ell-1})\mc \exp\left(-\tfrac{\gamma_\ell^-}2\norm{\z_\ell-\r_\ell^-}^2-\tfrac{\gamma_{\ell-1}^+}2\norm{\z_{\ell-1}-\r_{\ell-1}^+}^2\right)\\
    \label{eq:q_stationarity}
    q^*_\ell(\z_\ell) &\propto \mc \exp\left(-\tfrac{\gamma_\ell^- +\gamma_\ell^+}2\norm{\z_\ell-\tfrac{\gamma_\ell^-\r^-_\ell+\gamma_\ell^+\r_\ell^+}{\gamma_\ell^- +\gamma_\ell^+} }^2\right)
\end{align}
\end{subequations}
where notice that $f_\ell^*=b_\ell$ from  \eqref{eq:def_belief}. The primal feasibility KKT conditions \eqref{eq:moment_matching} result in
\begin{align*}
    \Exp\left[\z_\ell|f^*_\ell\right]=\Exp\left[\z_\ell|b_\ell\right]&=\frac{\gamma_\ell^-\r^-_\ell+\gamma_\ell^+\r_\ell^+}{\gamma_\ell^-+\gamma_\ell^+}\\
    \Exp\left[\norm{\z_\ell}^2|f^*_\ell\right]=\Exp\left[\norm{\z_\ell}^2|b_\ell\right]&=(\gamma_{\ell}^-+\gamma_{\ell}^+)^{-1}
\end{align*}
where we have used the Gaussianity of $q_{
\ell}$ from \eqref{eq:q_stationarity} and relation of $f^*_\ell=b_\ell$ from \eqref{eq:b_stationarity} and \eqref{eq:def_belief}.
The quantity on the right is exactly $\zhat_\ell$ for any fixed point of MMSE-ML-VAMP as evident from \eqref{eq:z_equal}. The claim follows from the update \eqref{eq:MMSE_update}.

\section{General Multi-Layer Recursions}
\label{app:general_convergence}
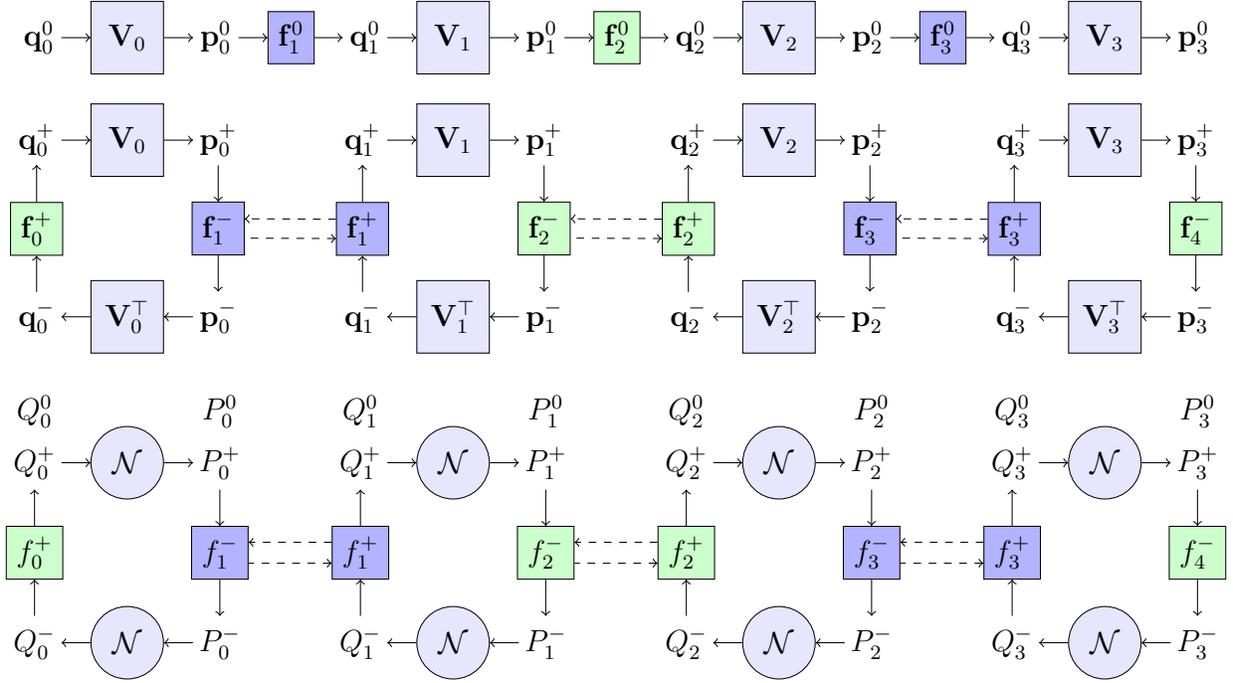
\begin{figure}[ht]
\resizebox{\textwidth}{!}{

\begin{tikzpicture}[scale = 0.9]

    \pgfmathsetmacro{\sep}{3};
    \pgfmathsetmacro{\yoff}{0.4};
    \pgfmathsetmacro{\xoffa}{0.3};
    \pgfmathsetmacro{\xoffb}{0.6};
    \tikzstyle{var}=[draw=white,circle,fill=white!100,node distance=0cm,inner sep=0cm];
    \tikzstyle{rot}=[draw,fill=blue!10, minimum size=1cm, node distance=\sep cm];
    \tikzstyle{Dnl}=[draw,fill=green!20, minimum size=.5cm, node distance=\sep cm];
    \tikzstyle{Dl}=[draw,fill=blue!30, minimum size=.5cm, node distance=\sep cm];
    \tikzstyle{ROT}=[draw,circle,fill=blue!10, minimum size=1cm, node distance=\sep cm];
    \tikzstyle{dnl}=[draw,fill=green!20, minimum size=.5cm, node distance=\sep cm];
    \tikzstyle{dl}=[draw,fill=blue!30, minimum size=.5cm, node distance=\sep cm];

\newcommand{\Vbm}{\bm V}
\newcommand{\pbm}{\bm p}
\newcommand{\qbm}{\bm q}
\newcommand{\ubm}{\bm u}
\newcommand{\tbm}{\bm t}
\newcommand{\Abm}{\bm A}
\def\xoff{.4cm}
\def\xoffblock{1.3cm}
\def\yoff{.5cm}
\def\yspace{.4cm}
\def\Yspace{1cm}
\def\yvec{0mm}

        \node [var] (q0p) {$\q^+_0$};
        
        \def\loopend{4}
\foreach \i/\j in {0/1,1/2,2/3,3/\loopend} {
    \node [rot, right =\xoff of q\i p] (V\i ) {$\V_\i $};
        {
        \node [rot, above = \yspace of V\i] (V\i star){$\V_\i$};
        \node [var, right=\xoff of V\i star] (p\i star) {$\p_\i ^0$};
        \ifthenelse{\j<\loopend}{
            \ifthenelse{\intcalcMod{\j}{2}>0}{
                \node [Dl, right =\xoff of p\i star] (phi\j) {$\fbf_\j^0$};
                }
                {
                \node [Dnl, right =\xoff of p\i star] (phi\j) {$\fbf_\j^0$};
            }
        }{}
        \node [var, left=\xoff of V\i star] (q\i star) {$\q_\i ^0$};
        }
        
    \node [var, right=\xoff of V\i ] (p\i p) {$\p_\i ^+$};
    
    \ifthenelse{\intcalcMod{\j}{2}>0}{
    \node [Dl, below =\yoff of p\i p] (F\j m) {$\fbf_\j^-$};
    }
    {
    \node [Dnl, below =\yoff of p\i p] (F\j m) {$\fbf_\j^-$};
    }
    
    \node [var, below=\yoff of F\j m] (p\i m) {$\p_\i ^-$};
	\node [rot, left =\xoff of p\i m] (V\i T) {$\V_\i \T$};
	    {
	    \node [ROT, below = \Yspace of V\i T] (N\i){$\mc N$};
	    \node [var, right=\xoff of N\i ] (P\i p) {$P_\i ^+$};
	    \node [var, above=\yvec of P\i p](P\i 0){$P_\i ^0$};
        \ifthenelse{\intcalcMod{\j}{2}>0}{
        \node [dl, below =\yoff of P\i p] (f\j m) {$f_\j^-$};
        }
        {
        \node [dnl, below =\yoff of P\i p] (f\j m) {$f_\j^-$};
        }
        
        \node [var, below=\yoff of f\j m] (P\i m) {$P_\i ^-$};
    	\node [ROT, left =\xoff of P\i m] (N\i T) {$\mcN$};
        \node [var, left =\xoff of N\i T] (Q\i m) {$Q^-_\i $};
        \ifthenelse{\intcalcMod{\j}{2}>0}{
        \node [dnl, above =\yoff of Q\i m] (f\i p) {$f_\i^+$};
        }
        {	
        \node [dl, above =\yoff of Q\i m] (f\i p) {$f_\i^+$};
        }
    	\node [var, above =\yoff of f\i p] (Q\i p) {$Q^+_\i $};
    	\node [var, above = \yvec of Q\i p](Q\i 0){$Q^0_\i$};
	    }

    \node [var, left =\xoff of V\i T] (q\i m) {$\q^-_\i $};
    \ifthenelse{\intcalcMod{\j}{2}>0}{
    \node [Dnl, above =\yoff of q\i m] (F\i p) {$\fbf_\i^+$};
    }
    {	
    \node [Dl, above =\yoff of q\i m] (F\i p) {$\fbf_\i^+$};
    }
    \ifthenelse{\j<\loopend}{
	\node [var, right =\xoffblock of p\i p] (q\j p) {$\q^+_\j$};
	}{}
}


\foreach \i/\j in {0/1,1/2,2/3,3/\loopend} {

\draw[->] (q\i star.east) -- (V\i star.west);
\draw[->] (V\i star.east) -- (p\i star.west);
\ifthenelse{\j<\loopend}{
    \draw[->] (p\i star.east) -- (phi\j.west);
    \draw[->] (phi\j.east) -- (q\j star.west);
}

\draw[->] (q\i p.east) -- (V\i.west);
\draw[->] (V\i.east) -- (p\i p.west);
\draw[->] (p\i p.south) -- (F\j m.north);
\draw[->] (F\j m.south) -- (p\i m.north);
\draw[->] (p\i m.west) -- (V\i T.east);
\draw[->] (V\i T.west) -- (q\i m.east);
\draw[->] (q\i m.north) -- (F\i p.south);
\draw[->] (F\i p.north) -- (q\i p.south);

\ifthenelse{\i>0}{
\draw[->,dashed] (F\i p.160) -- (F\i m.20);
\draw[<-,dashed] (F\i p.-160) -- (F\i m.-20);
\draw[->,dashed] (f\i p.160) -- (f\i m.20);
\draw[<-,dashed] (f\i p.-160) -- (f\i m.-20);
}{}

\draw[->] (Q\i p.east) -- (N\i.west);
\draw[->] (N\i.east) -- (P\i p.west);
\draw[->] (P\i p.south) -- (f\j m.north);
\draw[->] (f\j m.south) -- (P\i m.north);
\draw[->] (P\i m.west) -- (N\i T.east);
\draw[->] (N\i T.west) -- (Q\i m.east);
\draw[->] (Q\i m.north) -- (f\i p.south);
\draw[->] (f\i p.north) -- (Q\i p.south);

}
\end{tikzpicture}


}
\caption{\label{fig:mlvamp_error_system}
(TOP) The equations \eqref{eq:nntrue} with equivalent quantities defined in \eqref{eq:pq0}. $\fbf_\ell^0$ defined using \eqref{eq:f0nonlin} and \eqref{eq:f0lin}.\newline
(MIDDLE) The GEN-ML recursions in Algorithm \ref{algo:gen}. These are also equivalent to ML-VAMP recursions from Algorithm \ref{algo:ml-vamp} (See Lemma \ref{lem:special_case}) if $\p^\pm,\q^\pm$ are as defined in equations \eqref{eq:pqdef} and $\fbf_\ell^\pm$ given by equations (\ref{eq:fh0p}-\ref{eq:fhn}).\newline
(BOTTOM) Quantities in the GEN-ML-SE recursions. These are also equivalent to ML-VAMP SE recursions from Algorithm \ref{algo:mlvamp_se} (See Lemma \ref{lem:special_case})
}
\end{figure}

To analyze Algorithm~\ref{algo:ml-vamp}, we consider a more general class
of recursions as given in Algorithm~\ref{algo:gen} and depicted in Fig. \ref{fig:mlvamp_error_system}.
The Gen-ML recursions generates
(i) a set of \textit{true vectors} $\q_\ell^0$ and $\p_\ell^0$ 
and (ii) \textit{iterated vectors} $\qbf^{\pm}_{k\ell}$ and $\pbf^{\pm}_{k\ell}$. The true vectors are generated by a single forward pass, whereas the iterated vectors are generated
via a sequence of forward and backward passes through a multi-layer system.
In proving the State Evolution for the ML-VAMP algorithm, one would then associate the terms $\qbf^{\pm}_{k\ell}$ and $\pbf^{\pm}_{k\ell}$
with certain error quantities in the ML-VAMP recursions. To account for the effect of the parameters $\gamma^{\pm}_{k\ell}$ and $\alpha^{\pm}_{k\ell}$
in ML-VAMP, the Gen-ML algorithm describes the parameter update through a sequence of
\emph{parameter lists} $\Lambda^{\pm}_{k\ell}$.
The parameter lists are ordered lists of parameters that accumulate as the
algorithm progresses. The true and iterated vectors from Algorithm \ref{algo:gen} are depicted in the signal flow graphs on the (TOP) and (MIDDLE) panel of Fig. \ref{fig:mlvamp_error_system} respectively. The iteration index $k$ for the iterated vectors $\q_{k\ell},\p_{k\ell}$ has been dropped for simplifying notation.

The functions $\fbf_\ell^0(\cdot)$ that produce the true vectors $\q_\ell^0,\p_\ell^0$ are called \textit{initial vector functions} and use the initial parameter list $\Lambda_{01}^-$. The functions $\fbf_{k\ell}^{\pm}(\cdot)$ that produce the vectors
$\qbf^{\pm}_{k\ell}$ and $\pbf^{\pm}_{k\ell}$ are  called the \emph{vector update functions} and use parameter lists $\Lambda_{kl}^\pm$.
The parameter lists are initialized with $\Lambda^-_{01}$ in line~\ref{line:laminit_gen}. 
As the algorithm progresses, new parameters $\lambda^{\pm}_{k\ell}$
are computed and then added to the lists in lines~\ref{line:lamp0_gen}, \ref{line:lamp_gen}, \ref{line:lamL_gen}
and \ref{line:lamn_gen}.  The vector update functions $\fbf_{k\ell}^{\pm}(\cdot)$ may depend on any sets of parameters accumulated in the parameter list. 
In lines~\ref{line:mup0_gen}, \ref{line:mup_gen}, \ref{line:muL_gen} and \ref{line:mun_gen},
the new parameters $\lambda_{k\ell}^{\pm}$ are computed by:
(1) computing average values $\mu_{k\ell}^{\pm}$ of componentwise functions $\varphibf^{\pm}_{k\ell}(\cdot)$;
and (2) taking functions $T^{\pm}_{k\ell}(\cdot)$ of the average values $\mu_{k\ell}^{\pm}$.
Since the average values $\mu_{k\ell}^{\pm}$ represent statistics on the components of
$\varphibf^{\pm}_{k\ell}(\cdot)$, we will call $\varphibf^{\pm}_{k\ell}(\cdot)$ the \emph{parameter statistic
functions}.  We will call the $T^{\pm}_{k\ell}(\cdot)$ the \emph{parameter update functions}.
The functions $\fbf_\ell^0,\fbf_{k\ell}^\pm,\varphibf^\pm_\ell$ also take as input some perturbation vectors $\w_\ell$.

\old{We will show below that the updates for the parameters $\gamma^{\pm}_{k\ell}$ and $\alpha^{\pm}_{k\ell}$
can be written in this form.}


\begin{algorithm}[t]
\setstretch{1.1}
\caption{General Multi-Layer (Gen-ML) Recursion }
\begin{algorithmic}[1]  \label{algo:gen}
\REQUIRE{Initial vector functions $\fbf_\ell^0$, vector update functions $\fbf^\pm_{k\ell}(\cdot)$,
parameter statistic functions $\varphibf^\pm_{k\ell}(\cdot)$,
parameter update functions $T^{\pm}_{k\ell}(\cdot)$,
orthogonal matrices $\Vbf_\ell$,
disturbance vectors $\wbf^\pm_\ell$. }

\STATE{// \texttt{Initialization} }
\STATE{Initialize parameter list $\Lambda_{01}^-$ and vectors $\pbf_0^0$ and $\qbf_{0\ell}^-$ for $\ell=0,\ldots,\Lm1$}
    \label{line:laminit_gen}
\STATE{$\qbf^0_0 = \fbf^0_0(\wbf_0), \quad \pbf^0_0 = \Vbf_0\qbf^0_0$} \label{line:q00init_gen}
\FOR{$\ell=1,\ldots,\Lm1$}
    \STATE{$\qbf^0_\ell = \fbf^0_\ell(\pbf^0_{\lm1},\wbf_\ell, \Lambda_{01}^-)$ }
    \label{line:q0init_gen}
    \STATE{$\pbf^0_\ell = \Vbf_\ell\qbf^0_\ell$ }  \label{line:p0init_gen}
\ENDFOR \label{line:end_initial_for}
\STATE{}
\FOR{$k=0,1,\dots$}\label{line:start_algo_for}
    \STATE{// \texttt{Forward Pass} }
    \STATE{$\lambda^+_{k0} = T_{k0}^+(\mu^+_{k0},\Lambda_{0k}^-), \quad
        \mu^+_{k0} = \bkt{\varphibf_{k0}^+(\qbf_{k0}^-,\wbf_0,\Lambda_{0k}^-)}$}    \label{line:mup0_gen}
    \STATE{$\Lambda_{k0}^+ = (\Lambda_{k1}^-,\lambda^+_{k0})$} \label{line:lamp0_gen}
    \STATE{$\qbf_{k0}^+ = \fbf^+_{k0}(\qbf_{k0}^-,\wbf_0,\Lambda^+_{k0})$}  \label{line:q0_gen}
    \STATE{$\pbf_{k0}^+ = \Vbf_0\qbf_{k0}^+$} \label{line:p0_gen}
    \FOR{$\ell=1,\ldots,L-1$}
        \STATE{$\lambda^+_{k\ell} = T_{k\ell}^+(\mu^+_{k\ell},\Lambda_{k,\lm1}^+), \quad
            \mu^+_{k\ell} = \bkt{\varphibf_{k\ell}^+(\pbf^0_{\lm1},\pbf^+_{k,\lm1},\qbf_{k\ell}^-,\wbf_\ell,\Lambda_{k,\lm1}^+)}$}    \label{line:mup_gen}
        \STATE{$\Lambda_{k\ell}^+ = (\Lambda_{k,\lm1}^+,\lambda^+_{k\ell})$}
            \label{line:lamp_gen}
        \STATE{$\qbf_{k\ell}^+ = \fbf^+_{k\ell}(\pbf^0_{\lm1},\pbf^+_{k,\lm1},\qbf_{k\ell}^-,\wbf_\ell,\Lambda^+_{k\ell})$}
            \label{line:qp_gen}
        \STATE{$\pbf_{k\ell}^+ = \Vbf_{\ell}\qbf_{k\ell}^+$}   \label{line:pp_gen}
    \ENDFOR
    \STATE{}

    \STATE{// \texttt{Backward Pass} }
    \STATE{$\lambda^-_{\kp1,L} = T_{kL}^-(\mu^-_{kL},\Lambda_{k,\Lm1}^+), \quad
        \mu^-_{kL} = \bkt{\varphibf_{kL}^-(\pbf_{k,\Lm1}^+,\wbf_L,\Lambda_{k,\Lm1}^+)}$}    \label{line:muL_gen}
    \STATE{$\Lambda_{\kp1,L}^- = (\Lambda_{k,\Lm1}^+,\lambda^+_{\kp1,L})$} \label{line:lamL_gen}
    \STATE{$\pbf_{\kp1,\Lm1}^- = \fbf^-_{kL}(\pbf^0_{\Lm1},\pbf_{k,\Lm1}^+,\wbf_L,\Lambda^-_{\kp1,L})$}  \label{line:pL_gen}
    \STATE{$\qbf_{\kp1,\Lm1}^- = \Vbf_{\Lm1}\tran\pbf_{\kp1,\Lm1}$} \label{line:qL_gen}
    \FOR{$\ell=\Lm1,\ldots,1$}
        \STATE{$\lambda^-_{\kp1,\ell} = T_{k\ell}^-(\mu^-_{k\ell},\Lambda_{\kp1,\lp1}^-), \quad
            \mu^-_{k\ell} =
            \bkt{\varphibf_{k\ell}^-(\pbf_{\lm1}^0,\pbf_{k,\lm1}^+,\qbf_{\kp1,\ell}^-,\wbf_\ell,\Lambda_{\kp1,\lp1}^-)}$}    \label{line:mun_gen}
        \STATE{$\Lambda_{\kp1,\ell}^- = (\Lambda_{\kp1,\lp1}^-,\lambda^-_{\kp1,\ell})$} \label{line:lamn_gen}
        \STATE{$\pbf_{\kp1,\lm1}^- =
        \fbf^-_{k\ell}(\pbf_{\lm1}^0,\pbf^+_{k,\lm1},\qbf_{\kp1,\ell}^-,\wbf_\ell,\Lambda^-_{k+1,\ell})$}
            \label{line:pn_gen}
        \STATE{$\qbf_{\kp1,\lm1}^- = \Vbf_{\lm1}\tran\pbf_{\kp1,\lm1}^-$}   \label{line:qn_gen}
    \ENDFOR

\ENDFOR\label{line:end_algo_for}
\end{algorithmic}
\end{algorithm}

Similar to our analysis of the ML-VAMP Algorithm,
we consider the following large-system limit (LSL) analysis of Gen-ML.
Specifically, we consider a sequence of runs of the recursions indexed by $N$.
For each $N$, let $N_\ell = N_\ell(N)$ be the dimension of the signals $\pbf_\ell^{\pm}$ and $\qbf_\ell^\pm$
as we assume that $\displaystyle\lim_{N \arr \infty} \tfrac{N_\ell}N = \beta_\ell\in(0,\infty)$ is a constant so that $N_\ell$ scales linearly with $N$.
We then make the following assumptions. See Appendix \ref{sec:empirical} for an overview of empirical convergence of sequences which we use in the assumptions.

\begin{assumption}\label{as:gen} For vectors in the Gen-ML Algorithm (Algorithm~\ref{algo:gen}),
we assume:
\begin{enumerate}[(a)]
\item\label{as1:a} The matrices $\Vbf_\ell$ are Haar distributed on the set of $N_\ell \times N_\ell$ orthogonal matrices and are
independent from one another and from the vectors $\q^0_0$,
$\qbf_{0\ell}^-$, perturbation vectors $\wbf_\ell$.\item\label{as1:b} The components of the initial conditions
$\qbf_{0\ell}^-$, and perturbation vectors $\wbf_\ell$ converge jointly empirically with limits,
\beq \label{eq:qwinitlim}
    \lim_{N \arr \infty} \{q_{0\ell,n}^-\} \PLeq Q_{0\ell}^-, \quad
    \lim_{N \arr \infty} \{ w_{\ell,n} \} \PLeq W_\ell,
\eeq
where $Q_{0\ell}^-$ and $W_\ell$ are random variables such that $(Q_{00}^-,\cdots,Q^-_{0,\Lm1})$
is a jointly Gaussian random vector.  Also, for $\ell=0,\ldots,\Lm1$, the random variables $W_\ell, P_{\ell-1}^0$ and $Q_{0\ell}^-$ are independent.
We also assume that the initial parameter list converges as
\beq \label{eq:Lambar01lim}
    \lim_{N \arr \infty} \Lambda_{01}^-(N) \xrightarrow{a.s.} \Lambdabar_{01}^-,
\eeq
to some list $\Lambdabar_{01}^-$.  The limit \eqref{eq:Lambar01lim} 
means that every element in the list $\lambda(N) \in \Lambda_{01}^-(N)$ converges to a limit
$\lambda(N) \arr \lambdabar$ as $N \arr \infty$ almost surely.

\item\label{as1:c} The \textit{vector update functions} $\fbf_{k\ell}^\pm(\cdot)$
and \textit{parameter update functions} $\varphibf_{k\ell}^\pm(\cdot)$ act componentwise.  For e.g.,
in the $k^{\rm th}$ forward pass, at  stage $\ell$, we assume that for each output component $n$,
\begin{align*}
    \left[ \fbf^+_{k\ell}(\pbf^0_{\lm1},\pbf^+_{k,\lm1},\qbf_{k\ell}^-,\wbf_\ell,\Lambda^+_{k\ell}) \right]_n
    = f^+_{k\ell}(p^0_{\lm1,n},p^+_{k,\lm1,n},q_{k\ell,n}^-,w_{\ell,n},\Lambda^+_{k\ell}) \\
    \left[ \varphibf^{+}_{k\ell}(\pbf^0_{\lm1},\pbf^+_{k,\lm1},\qbf_{k\ell}^-,\wbf_\ell,\Lambda^+_{k\ell}) \right]_n
    = \varphi^+_{k\ell}(p^0_{\lm1,n},p^+_{k,\lm1,n},q_{k\ell,n}^-,w_{\ell,n},\Lambda^+_{k\ell}),
\end{align*}
for some scalar-valued functions $f^+_{k\ell}(\cdot)$ and $\varphi^+_{k\ell}(\cdot)$.
Similar definitions apply in the reverse directions and for the initial vector functions $\fbf^0_\ell(\cdot)$.
We will call $f^{\pm}_{k\ell}(\cdot)$ the \emph{vector update component
functions} and $\varphi^{\pm}_{k\ell}(\cdot)$ the \emph{parameter update component functions}.
\end{enumerate}
\end{assumption}

\begin{algorithm}[t]
\setstretch{1.1}
\caption{Gen-ML State Evolution (SE)}
\begin{algorithmic}[1]  \label{algo:gen_se}

\REQUIRE{Vector update component functions $f^0_\ell(\cdot)$ and $f^\pm_{k\ell}(\cdot)$,
parameter statistic component functions $\varphi^\pm_{k\ell}(\cdot)$,
parameter update functions $T^{\pm}_{k\ell}(\cdot)$, initial parameter list limit:  $\Lambdabar_{01}^-$, initial random variables  $W_\ell$, $Q_{0\ell}^-$, $\ell=0,\ldots,\Lm1$.}

\STATE{// \texttt{Initial pass}}
\STATE{$Q^0_0 = f^0_0(W_0,\Lambdabar_{01}^-), \quad P^0_0 \sim \Norm(0,\tau^0_0),
    \quad \tau^0_0 = \Exp(Q^0_0)^2$} \label{line:q0init_se_gen}
\FOR{$\ell=1,\ldots,\Lm1$}
    \STATE{$Q^0_\ell=f^0_\ell(P^0_{\lm1},W_\ell,\Lambdabar_{01}^-)$, \quad
            $P^0_\ell \sim \Norm(0,\tau^0_\ell)$, \quad
            $\tau^0_\ell = \Exp(Q^0_\ell)^2$} \label{line:p0init_se_gen}
\ENDFOR
\STATE{}

\FOR{$k=0,1,\dots$}
    \STATE{// \texttt{Forward Pass }}
    \STATE{$\lambdabar^+_{k0} = T_{k0}^+(\mubar^+_{k0},\Lambdabar_{0k}^-), \quad
        \mubar^+_{k0} = \Exp(\varphi_{k0}^+(Q_{k0}^-,W_0,\Lambdabar_{0k}^-))$}    \label{line:mup0_se_gen}
    \STATE{$\Lambdabar_{k0}^+ = (\Lambdabar_{k1}^-,\lambdabar^+_{k0})$} \label{line:lamp0_se_gen}
    \STATE{$Q_{k0}^+ = f^+_{k0}(Q_{k0}^-,W_0,\Lambdabar^+_{k0})$}  \label{line:q0_se_gen}
    \STATE{$(P^0_0,P_{k0}^+) \sim \Norm(\zero,\Kbf_{k0}^+),
        \quad \Kbf_{k0}^+ = \Cov(Q^0_0,Q_{k0}^+)$} \label{line:p0_se_gen}
    \FOR{$\ell=1,\ldots,L-1$}
        \STATE{$\lambdabar^+_{k\ell} = T_{k\ell}^+(\mubar^+_{k\ell},\Lambdabar_{k,\lm1}^+), \quad
            \mubar^+_{k\ell} = \Exp(\varphi_{k\ell}^+(P^0_{\lm1},P^+_{k,\lm1},Q_{k\ell}^-,W_\ell,\Lambdabar_{k,\lm1}^+))$}    \label{line:mup_se_gen}
        \STATE{$\Lambdabar_{k\ell}^+ = (\Lambdabar_{k,\lm1}^+,\lambdabar^+_{k\ell})$}
            \label{line:lamp_se_gen}
        \STATE{$Q_{k\ell}^+ = f^+_{k\ell}(P^0_{\lm1},P^+_{k,\lm1},Q_{k\ell}^-,W_\ell,\Lambdabar^+_{k\ell})$}
            \label{line:qp_se_gen}
        \STATE{$(P^0_\ell,P_{k\ell}^+) \sim \Norm(\zero,\Kbf_{k\ell}^+), \quad
            \Kbf_{k\ell}^+ = \Cov(Q^0_\ell,Q_{k\ell}^+) $}   \label{line:pp_se_gen}
    \ENDFOR
    \STATE{}

    \STATE{// \texttt{Backward Pass }}
    \STATE{$\lambdabar^-_{\kp1,L} = T_{kL}^-(\mubar^-_{kL},\Lambdabar_{k,\Lm1}^+), \quad
        \mubar^-_{kL} = \Exp(\varphi_{kL}^-(P^0_{\Lm1},P_{k,\Lm1}^+,W_L,\Lambdabar_{k,\Lm1}^+))$}    \label{line:muL_se_gen}
    \STATE{$\Lambdabar_{\kp1,L}^- = (\Lambdabar_{k,\Lm1}^+,\lambdabar^+_{\kp1,L})$} \label{line:lamL_se_gen}
    \STATE{$P_{\kp1,\Lm1}^- = f^-_{kL}(P^0_{\Lm1},P_{k,\Lm1}^+,W_L,\Lambdabar^-_{\kp1,L})$}  \label{line:pL_se_gen}
    \STATE{$Q_{\kp1,\Lm1}^- \sim \Norm(0,\tau_{\kp1,\Lm1}^-), \quad
        \tau_{\kp1,\Lm1}^- = \Exp(P^-_{\kp1,\Lm1})^2$} \label{line:qL_se_gen}
    \FOR{$\ell=\Lm1,\ldots,1$}
        \STATE{$\lambdabar^-_{\kp1,\ell} = T_{k\ell}^-(\mubar^-_{k\ell},\Lambdabar_{\kp1,\lp1}^-), \quad
            \mubar^-_{k\ell} =
                \Exp(\varphi_{k\ell}^-(P^0_{\lm1},P_{k,\lm1}^+,Q_{\kp1,\ell}^-,W_\ell,\Lambdabar_{\kp1,\lp1}^-))$}    \label{line:mun_se_gen}
        \STATE{$\Lambdabar_{\kp1,\ell}^- = (\Lambdabar_{\kp1,\lp1}^-,\lambdabar^-_{\kp1,\ell})$} \label{line:lamn_se_gen}
        \STATE{$P_{\kp1,\lm1}^- =
        f^-_{k\ell}(P^0_{\lm1},P^+_{k,\lm1},Q_{\kp1,\ell}^-,W_\ell,\Lambdabar^-_{k+1,\ell})$}
            \label{line:pn_se_gen}
        \STATE{$Q_{\kp1,\lm1}^- \sim \Norm(0,\tau_{\kp1,\lm1}^-), \quad
        \tau_{\kp1,\lm1}^- = \Exp(P_{\kp1,\lm1}^-)^2$}   \label{line:qn_se_gen}
    \ENDFOR

\ENDFOR
\end{algorithmic}
\end{algorithm}

Next we define a set of \textit{deterministic} constants $\{\Kbf_{k\ell}^+,\tau_{k\ell}^-,\wb{\mu}_{k\ell}^\pm,\wb\Lambda_{kl}^\pm,\tau_\ell^0\}$ and \textit{scalar} random variables $\{Q_\ell^0,P_\ell^0,Q_{k\ell}^\pm,P_{\ell}^\pm\}$ which are recursively defined through Algorithm~\ref{algo:gen_se}, which we call the \textit{Gen-ML State Evolution} (SE).
These recursions in Algorithm closely
mirror those in the Gen-ML algorithm (Algorithm~\ref{algo:gen}).  The vectors
$\qbf^\pm_{k\ell}$ and $\pbf^\pm_{k\ell}$ are replaced by random variables
$Q^\pm_{k\ell}$ and $P^\pm_{k\ell}$; the vector and parameter update functions
$\fbf^\pm_{k\ell}(\cdot)$ and $\varphibf^\pm_{k\ell}(\cdot)$ are replaced by their
component functions $f^\pm_{k\ell}(\cdot)$ and $\varphi^\pm_{k\ell}(\cdot)$;
and the parameters $\lambda_{k\ell}^\pm$ are replaced
by their limits $\lambdabar_{k\ell}^\pm$. We refer to $\{Q_\ell^0,P_\ell^0\}$ as \textit{true random variables} and $\{Q_{k\ell}^\pm,P_{kl}^\pm\}$ as iterated random variables. The signal flow graph for the true and iterated random variables in Algorithm \ref{algo:gen_se} is given in the (BOTTOM) panel of Fig. \ref{fig:mlvamp_error_system}. The iteration index $k$ for the iterated random variables $\{Q_{k\ell}^\pm,P_{kl}^\pm\}$ to simplify notation.

We also assume the following about the behaviour of component functions around the quantities defined in Algorithm \ref{algo:gen_se}.  The iteration index $k$ has been dropped for simplifying notation.

\begin{assumption} \label{as:gen2} For component functions $f,\varphi$ and parameter update functions $T$ we assume:
\begin{enumerate}[(a)]
\item\label{as2:a} $T^\pm_{k\ell}(\mu_{k\ell}^\pm,\cdot)$ are continuous at
$\mu_{k\ell}^\pm = \mubar_{k\ell}^\pm$ 
\item\label{as2:b} $f^+_{k\ell}(p_{\ell-1}^0,p^+_{k,\lm1},q_{k\ell}^-,w_\ell,\Lambda^+_{k\ell})$, $\tfrac{\partial f^+_{k\ell}}{\partial q_{k\ell}^-}(p_{\ell-1}^0,p^+_{k,\lm1},q_{k\ell}^-,w_\ell,\Lambda^+_{k\ell})$ and $\varphi^+_{k\ell}(p_{\ell-1}^0,p^+_{k,\lm1},q_{k\ell}^-,w_\ell,\Lambda^+_{k,\lm1})$ are uniformly Lipschitz continuous in $(p_{\ell-1}^0,p^+_{k,\lm1},q_{k\ell}^-,w_\ell)$ at
$\Lambda^+_{k\ell} = \Lambdabar^+_{k\ell}$, $\Lambda^+_{k,\lm1} = \Lambdabar^+_{k,\lm1}$. 
\newline Similarly, $f^-_{k+1,\ell}(p_{\ell-1}^0,p^+_{k,\lm1},q_{k+1,\ell}^-,w_\ell,\Lambda^-_{k\ell}),$ $\tfrac{\partial f_{k\ell}^-}{\partial p_{k,\ell-1}^+}(p_{\ell-1}^0,p^+_{k,\lm1},q_{k+1,\ell}^-,w_\ell,\Lambda^-_{k\ell}),$ and \newline $\varphi^-_{k\ell}(p_{\ell-1}^0,p^+_{k,\lm1},q_{k+1,\ell}^-,w_\ell,\Lambda^-_{k+1,\ell+1})$ are uniformly Lipschitz continuous in 
\newline
$(p_{\ell-1}^0,p^+_{k,\lm1},q_{k+1,\ell}^-,w_\ell)$ at $\Lambda^-_{k\ell} = \Lambdabar^-_{k\ell}$, $\Lambda^-_{k+1,\ell+1} = \Lambdabar^-_{k+1,\ell+1}$. 
\item\label{as2:c} $f^0_\ell(p^0_{\lm1},w_\ell,\Lambda^-_{01})$ are uniformly Lipschitz
continuous in $(p^0_{k,\lm1},w_\ell)$ at $\Lambda^-_{\kp1,\ell} = \Lambdabar^-_{\kp1,\ell}$.
\item\label{as2:d} Vector update functions $\fbf^\pm_{k\ell}$ are \emph{asymptotically divergence free} meaning
\beq \label{eq:fdivfree}
    \lim_{N \arr \infty} \bkt{\tfrac{\partial \fbf^+_{k\ell}}{
        \partial \qbf_{k\ell}^-}(\pbf^+_{k,\lm1},\qbf_{k\ell}^-,\wbf_\ell,\Lambdabar^+_{k\ell})} = 0,
    \quad
    \lim_{N \arr \infty} \bkt{\tfrac{\partial \fbf^-_{k\ell}}{
        \partial \pbf_{k,\lm1}^+} (\pbf^+_{k,\lm1},\qbf_{{k+1},\ell}^-,\wbf_\ell,\Lambdabar^-_{k\ell})} = 0 \\
\eeq
\end{enumerate}
\end{assumption}

\medskip
We are now ready to state the general result regarding the empirical convergence of the true and iterated vectors from Algorithm \ref{algo:gen} in terms of random variables defined in Algorithm \ref{algo:gen_se}.

\begin{theorem} \label{thm:general_convergence}  Consider the iterates of the Gen-ML recursion (Algorithm~\ref{algo:gen})
and the corresponding random variables and parameter limits
defined by the SE recursions (Algorithm~\ref{algo:gen_se}) under Assumptions~\ref{as:gen} and \ref{as:gen2}.
Then,
\begin{enumerate}[(a)]
\item For any fixed $k\geq 0$ and fixed $\ell=1,\ldots,\Lm1$,
the parameter list $\Lambda_{k\ell}^+$ converges as
\beq \label{eq:Lamplim}
    \lim_{N \arr \infty} \Lambda_{k\ell}^+ = \Lambdabar_{k\ell}^+
\eeq
almost surely.
Also, the components of
$\wbf_\ell$, $\pbf^0_{\lm1}$, $\qbf^0_{\ell}$, $\pbf_{0,\lm1}^+,\ldots,\pbf_{k,\lm1}^+$ and $\qbf_{0\ell}^\pm,\ldots,\qbf_{k\ell}^\pm$
almost surely jointly converge empirically  with limits,
\beq \label{eq:PQplim}
    \lim_{N \arr \infty} \left\{
        (p^0_{\lm1,n},p^+_{i,\lm1,n},q^-_{j\ell,n},q^0_{\ell,n},q^+_{j\ell,n}) \right\} \overset{PL(2)}=
        (P^0_{\lm1},P^+_{i,\lm1},Q^-_{j\ell},Q^0_{\ell}, Q^+_{j\ell}),
\eeq
for all $0\leq i,j\leq k$, where the variables
$P^0_{\lm1}$, $P_{i,\lm1}^+$ and $Q_{j\ell}^-$
are zero-mean jointly Gaussian random variables independent of $W_\ell$ and with covariance matrix given by
\beq \label{eq:PQpcorr}
    \Cov(P^0_{\lm1},P_{i,\lm1}^+) = \Kbf_{i,\lm1}^+, \quad \Exp(Q_{j\ell}^-)^2 = \tau_{j\ell}^-, \quad \Exp(P_{i,\lm1}^+Q_{j\ell}^-)  = 0,
    \quad \Exp(P^0_{\lm1}Q_{j\ell}^-)  = 0,
\eeq
and $Q^0_\ell$ and $Q^+_{j\ell}$ are the random variable in line~\ref{line:qp_se_gen}:
\beq \label{eq:Qpf}
    Q^0_\ell = f^0_\ell(P^0_{\lm1},W_{\ell}), \quad
    Q^+_{j\ell} =
    f^+_{\ell}(P^0_{\lm1},P^+_{j,\lm1},Q^-_{j\ell},W_\ell,\Lambdabar_{j\ell}^+).
\eeq
An identical result holds for $\ell=0$ with all the variables $\pbf_{i,\lm1}^+$ and $P_{i,\lm1}^+$ removed.

\item For any fixed $k \geq 1 $ and fixed $\ell=1,\ldots,\Lm1$,
the parameter lists $\Lambda_{k\ell}^-$ converge as
\beq \label{eq:Lamnlim}
    \lim_{N \arr \infty} \Lambda_{k\ell}^- = \Lambdabar_{k\ell}^-
\eeq
almost surely.
Also, the components of
$\wbf_\ell$, $\pbf^0_{\lm1}$, $\pbf_{0,\lm1}^\pm,\ldots,\pbf_{\km1,\lm1}^\pm$,  and $\qbf_{0\ell}^-,\ldots,\qbf_{k\ell}^-$
almost surely jointly converge empirically  with limits,
\beq \label{eq:PQnlim}
    \lim_{N \arr \infty} \left\{
        (p^0_{\lm1,n},p^+_{i,\lm1,n},q^-_{j\ell,n},p^-_{j,\ell-1,n}) \right\} \overset{PL(2)}=
        (P^0_{\lm1},P^+_{i,\lm1},Q^-_{j\ell},P_{j,\ell-1}^-),
\eeq
for all $0\leq i\leq \km1$ and $0\leq j\leq k$, where the variables
$P^0_{\lm1}$, $P_{i,\lm1}^+$ and $Q_{j\ell}^-$
are zero-mean jointly Gaussian random variables independent of $W_\ell$ and with covariance matrix given by equation \eqref{eq:PQpcorr}
and $P^-_{j\ell}$ is the random variable in line~\ref{line:pn_se_gen}:
\beq \label{eq:Pnf}
    P^-_{j\ell} = f^-_{\ell}(P^0_{\lm1},P^+_{j-1,\lm1},
                Q^-_{j\ell},W_\ell,\Lambdabar_{j\ell}^-).
\eeq
An identical result holds for $\ell=L$ with all the variables $\qbf_{j\ell}^-$ and $Q_{j\ell}^-$ removed.

For $k=0$, $\Lambda_{01}^-\rightarrow \Lambdabar_{01}^-$ almost surely, and $\{(w_{\ell,n},p_{\ell-1,n}^0,q_{j\ell,n}^-)\}$ empirically converge to independent random variables $(W_\ell,P_{\ell-1}^0,Q_{0\ell}^-)$.
\end{enumerate}
\end{theorem}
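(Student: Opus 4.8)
The plan is to establish parts (a) and (b) together by a single induction that runs through the ``half-iterations'' of Algorithm~\ref{algo:gen} in the order they are executed: first the initialization lines, and then, for each $k=0,1,\dots$, the forward pass over $\ell=0,\dots,L-1$ followed by the backward pass over $\ell=L,\dots,1$. Each such step produces exactly one new block vector ($\qbf_{k\ell}^+$ and then $\pbf_{k\ell}^+$ in the forward direction, $\pbf_{k\ell}^-$ and then $\qbf_{k\ell}^-$ in the backward direction), and the induction hypothesis at a given step asserts (i) almost-sure convergence of every parameter list accumulated so far to its SE limit and (ii) joint PL(2) empirical convergence of all vectors produced so far to the corresponding true/iterated random variables of Algorithm~\ref{algo:gen_se}, with the Gaussian block-diagonal covariance structure of \eqref{eq:PQpcorr}. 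The base case is the initial pass: $\qbf_0^0$ converges by Assumption~\ref{as:gen}\ref{as1:b} since componentwise Lipschitz maps preserve empirical convergence, and each $\pbf_\ell^0=\Vbf_\ell\qbf_\ell^0$ is, by Haar invariance of $\Vbf_\ell$ and independence across $\ell$, asymptotically $\Norm(0,\tau_\ell^0)$ and independent of the quantities already revealed; this then propagates layerwise through the initialization lines exactly as in \cite{BayatiM:11}.

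For the inductive step, consider the moment just before $\Vbf_\ell$ (or $\Vbf_\ell\tran$) is applied, and let $\mathfrak{S}$ be the $\sigma$-algebra generated by all vectors, parameter lists, and other matrices available at that point. Conditioned on $\mathfrak{S}$, the constraints on $\Vbf_\ell$ are precisely the finitely many linear equations $\Vbf_\ell\qbf=\pbf$ (and $\Vbf_\ell\tran\pbf=\qbf$) involving the ``true'' vector $\qbf_\ell^0$ and the previously processed iterates at layer $\ell$; these span subspaces whose dimension stays bounded as $N\arr\infty$. The key tool is the standard decomposition of a Haar matrix under linear conditioning: conditionally on $\mathfrak{S}$, $\Vbf_\ell$ equals in distribution a deterministic matrix matching the constraints plus $\Ubf_\perp\tilde\Vbf\Wbf_\perp\tran$, where $\tilde\Vbf$ is a fresh Haar matrix and $\Ubf_\perp,\Wbf_\perp$ are orthonormal bases of the relevant orthogonal complements. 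Applying this to $\qbf_{k\ell}^+$, one splits it into its projection onto the span of the earlier $\qbf$'s at layer $\ell$ and an orthogonal residual: the projection contributes a known linear combination of earlier $\pbf$'s, while the residual, struck by $\tilde\Vbf$, produces a component that is asymptotically Gaussian, independent of $\mathfrak{S}$, with variance equal to the limiting squared norm of the residual; these limiting quantities are exactly the covariance updates $\Kbf_{k\ell}^+$ of Algorithm~\ref{algo:gen_se}.

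The role of the divergence-free assumption \eqref{eq:fdivfree} in Assumption~\ref{as:gen2}\ref{as2:d} is to remove the Onsager memory term: a Stein-type (Gaussian integration-by-parts) identity, valid in the limit because $\qbf_{k\ell}^-$ is asymptotically Gaussian by the induction hypothesis, identifies the coefficient of $\qbf_{k\ell}^-$ in the above projection with $\bkt{\partial\fbf_{k\ell}^+/\partial\qbf_{k\ell}^-}$, and \eqref{eq:fdivfree} drives this to $0$; without it the new Gaussian $P_{k\ell}^+$ would be correlated with the history, violating $\Exp(P_{i,\lm1}^+Q_{j\ell}^-)=0$ in \eqref{eq:PQpcorr}. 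Combining the conditional Gaussianity with a conditional strong law of large numbers for pseudo-Lipschitz test functions---whose moment hypotheses are supplied by the Lipschitz assumptions and the boundedness of the singular values---upgrades convergence in conditional distribution to the almost-sure PL(2) statements \eqref{eq:PQplim} and \eqref{eq:PQnlim}, with $Q_{k\ell}^+$ and $P_{k\ell}^-$ identified as the SE variables in \eqref{eq:Qpf} and \eqref{eq:Pnf} via the Lipschitz continuity of $\fbf_{k\ell}^\pm$. Parameter-list convergence \eqref{eq:Lamplim} and \eqref{eq:Lamnlim} then follows immediately: each $\mu_{k\ell}^\pm$ is an empirical average of a pseudo-Lipschitz function of PL(2)-convergent sequences, hence converges to $\mubar_{k\ell}^\pm$, and continuity of $T_{k\ell}^\pm$ (Assumption~\ref{as:gen2}\ref{as2:a}) yields $\lambda_{k\ell}^\pm\arr\lambdabar_{k\ell}^\pm$.

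I expect the main obstacle to be the careful bookkeeping of the conditioning in this bidirectional recursion. Because the forward and backward passes reuse the same matrices $\Vbf_\ell$, the set of linear constraints on a given $\Vbf_\ell$ genuinely grows across both passes and across iterations, and one must verify at every step that this set is exactly spanned by the relevant true and iterated vectors at layer $\ell$, that these vectors are asymptotically linearly independent (so that the conditional covariance is nondegenerate, else a perturbation/continuity argument is needed), and that the Onsager cancellation of the preceding paragraph applies at each step. Organizing this cleanly---isolating a Haar-conditioning lemma, a Stein-type lemma for the projection coefficients, and a master induction lemma, in the spirit of the VAMP state-evolution proof of \cite{rangan2019vamp}---is where the bulk of the effort goes; the remaining manipulations are routine consequences of empirical convergence.
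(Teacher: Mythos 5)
Your proposal is correct and follows essentially the same route as the paper's proof: an induction over the half-iterations of the forward/backward passes, the Haar-conditioning decomposition of $\Vbf_\ell$ given the growing set of linear constraints into a deterministic projection term plus an independent Gaussian residual, Stein's lemma together with the divergence-free condition \eqref{eq:fdivfree} to kill the Onsager cross-correlations, and pseudo-Lipschitz continuity plus continuity of $T_{k\ell}^\pm$ to propagate the parameter-list limits. The bookkeeping concerns you flag (growing constraint sets shared across passes, nondegeneracy of the conditional covariance) are exactly where the paper's Lemmas~\ref{lem:pconvdet}--\ref{lem:pqconvinduc} spend their effort, so no substantive gap remains.
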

\begin{proof}  Appendix~\ref{app:proof_of_general_convergence} in the supplementary materials is dedicated to proving this result.
\end{proof}


\section{Proof of Theorem~\ref{thm:general_convergence}} \label{app:proof_of_general_convergence}

\subsection{Overview of the Induction Sequence}

The proof is similar to that of \cite[Theorem 4]{rangan2019vamp},
which provides a SE analysis for VAMP on a single-layer network.
The critical challenge here is to extend that proof
to multi-layer recursions.
Many of the ideas in the two proofs are similar, so we highlight only the
key differences between the two.

Similar to the SE analysis of VAMP in \citep{rangan2019vamp},
we use an induction argument.  However, for the multi-layer proof,
we must index over both the iteration index $k$ and layer index $\ell$. To this end,
let $\mathcal{H}_{k\ell}^+$ and $\mathcal{H}_{k\ell}^-$ be the hypotheses:
\begin{itemize}
\item $\mathcal{H}_{k\ell}^+$:  The hypothesis that Theorem~\ref{thm:general_convergence}(a)
is true for a given $k$ and $\ell$, where $0\leq \ell\leq L-1$.
\item $\mathcal{H}_{k\ell}^-$:  The hypothesis that Theorem~\ref{thm:general_convergence}(b)
is true for a given $k$ and $\ell$, where $1\leq \ell\leq L$.
\end{itemize}
We prove these hypotheses by induction via a sequence of implications,
\beq \label{eq:induc}
    \{\mc H^-_{0\ell}\}_{\ell=1}^L\cdots \Rightarrow \mathcal{H}_{k1}^- \Rightarrow \mathcal{H}_{k0}^+ \Rightarrow \cdots \Rightarrow  \mathcal{H}_{k,\Lm1}^+
    \Rightarrow \mathcal{H}_{\kp1,L}^- \Rightarrow \cdots \Rightarrow \mathcal{H}_{\kp1,1}^- \Rightarrow \cdots,
\eeq
beginning with the hypotheses $\{\mathcal{H}^-_{0\ell}\}$ for all $\ell=1,\ldots,\Lm1$. 

\subsection{Base Case: Proof of \texorpdfstring{$\{\mc H_{0\ell}^-\}_{\ell=1}^L$}{H0l-}}
The base case corresponds to the Hypotheses $\{\mc H_{0\ell}^-\}_{\ell=1}^L.$ Note that Theorem \ref{thm:general_convergence}(b) states that for $k=0$, we need $\Lambda_{01}^-\rightarrow \Lambdabar_{01}^-$ almost surely, and $\{(w_{\ell,n},p_{\ell-1,n}^0,q_{j\ell,n}^-)\}$ empirically converge to independent random variables $(W_\ell,P_{\ell-1}^0,Q_{0\ell}^-)$. These follow directly from equations \eqref{eq:qwinitlim} and \eqref{eq:Lambar01lim} in Assumption 1 (a).

\subsection{Inductive Step: Proof of \texorpdfstring{$\mc H_{k,\ell+1}^+$}{H0l+}}

Fix a layer index $\ell=1,\ldots,\Lm1$ and an iteration index $k=0,1,\ldots$. We show the implication $\cdots\implies \mc H^+_{k,\ell+1}$ in \eqref{eq:induc}. All other implications can be proven similarly using symmetry arguments.
\begin{definition}[Induction hypothesis] The hypotheses {prior} to $\mathcal{H}^+_{k,\lp1}$ in the sequence \eqref{eq:induc},
but not including $\mathcal{H}^+_{k,\lp1}$, are true.  
\end{definition}
The inductive step then corresponds to the following result.
\begin{lemma}\label{lem:pqconvinduc}
Under the induction hypothesis, $\mc H_{k,\ell+1}^+$ holds
\end{lemma}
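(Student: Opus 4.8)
\textbf{Proof plan for Lemma~\ref{lem:pqconvinduc}.} The strategy follows the single-layer induction of \cite[Theorem~4]{rangan2019vamp}, adapted to the two-index scheme $(k,\ell)$ laid out in \eqref{eq:induc}. The goal is to show that, conditioned on all hypotheses preceding $\mathcal{H}^+_{k,\lp1}$, the vectors $\qbf^+_{k\ell}$ produced by the forward pass of Algorithm~\ref{algo:gen} at stage $\ell$ jointly converge empirically together with the previously-generated quantities $\wbf_\ell, \pbf^0_{\lm1}, \pbf^+_{i,\lm1}, \qbf^{\pm}_{j\ell}$ to the scalar random variables described by Algorithm~\ref{algo:gen_se}, with the claimed Gaussianity and covariance structure \eqref{eq:PQpcorr}. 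First I would isolate the key mechanism: the newly revealed object at this step is $\pbf^+_{k,\lm1} = \Vbf_{\lm1}\qbf^+_{k,\lm1}$, obtained by applying the Haar matrix $\Vbf_{\lm1}$ to a vector $\qbf^+_{k,\lm1}$ whose empirical law is already controlled by the induction hypothesis $\mathcal{H}^+_{k,\lm1}$. The crux is a conditioning argument: condition on the $\sigma$-algebra generated by all quantities computed before $\pbf^+_{k,\lm1}$ is formed, so that the only remaining randomness in $\Vbf_{\lm1}$ is its Haar distribution restricted to the subspace orthogonal to the linear constraints already imposed on it by earlier passes.

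The main technical device is the standard lemma on conditional distributions of Haar matrices (as in \cite[Lemma~3]{rangan2019vamp} / the Bolthausen conditioning technique): given that $\Vbf_{\lm1}$ has been applied to a set of vectors in earlier iterations producing known images, the conditional law of $\Vbf_{\lm1}\qbf^+_{k,\lm1}$ decomposes as a deterministic component (the projection onto the span of past inputs, giving linear combinations of past $\pbf^{\pm}$ and $\pbf^0$ with coefficients determined by inner products) plus an independent isotropically-random component of the form $\|\qbf^+_{k,\lm1,\perp}\| \, \tilde{\ubf}$ for a uniform unit vector $\tilde{\ubf}$ orthogonal to the relevant subspace. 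I would then invoke the "asymptotically divergence free" property (Assumption~\ref{as:gen2}(d), eqn.~\eqref{eq:fdivfree}) exactly as in the VAMP proof: this is what forces the deterministic component's correlation with the fresh Gaussian directions to vanish in the limit, yielding $\Exp(P^+_{i,\lm1}Q^-_{j\ell})=0$ and $\Exp(P^0_{\lm1}Q^-_{j\ell})=0$ in \eqref{eq:PQpcorr}. An isotropic-vector concentration argument (the fact that a uniform unit vector in $\R^{N_{\lm1}}$ behaves like i.i.d.\ Gaussian entries of variance $1/N_{\lm1}$ under pseudo-Lipschitz test functions) then upgrades this to the empirical PL(2) convergence statement \eqref{eq:PQplim}, with the Gaussian limit $P^0_{\lm1}, P^+_{i,\lm1}$ having covariance $\Kbf_{i,\lm1}^+$ read off from the SE. Once the input triple $(P^0_{\lm1},P^+_{k,\lm1},Q^-_{k\ell})$ is known to be jointly Gaussian with the right covariance and independent of $W_\ell$, applying the componentwise, uniformly Lipschitz update function $f^+_\ell$ (Assumption~\ref{as:gen2}(b)) and using continuity of $T^+_{k\ell}$ at $\mubar^+_{k\ell}$ (Assumption~\ref{as:gen2}(a)) gives convergence of $\Lambda^+_{k\ell}\to\Lambdabar^+_{k\ell}$ and hence the empirical limit of $\qbf^+_{k\ell}$ as $Q^+_{k\ell} = f^+_\ell(P^0_{\lm1},P^+_{k,\lm1},Q^-_{k\ell},W_\ell,\Lambdabar^+_{k\ell})$, matching \eqref{eq:Qpf}.

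I would organize the write-up by first handling the parameter-list convergence \eqref{eq:Lamplim} (a short argument from the induction hypothesis plus continuity), then the conditioning lemma for $\Vbf_{\lm1}$, then the concentration step giving \eqref{eq:PQplim}–\eqref{eq:PQpcorr}, then the final application of $f^+_\ell$. The boundary case $\ell=0$ (no $\pbf^+_{i,\lm1}$ terms, $\gbf^+_0$ acting directly on $\qbf^-_{k0}$ and $\wbf_0$) is a degenerate instance handled in the same way with the Haar-conditioning step omitted.

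\textbf{Main obstacle.} The hard part is the bookkeeping in the conditioning step: unlike the single-layer VAMP recursion, here $\Vbf_{\lm1}$ has already been applied, in \emph{both} earlier forward and backward passes and at \emph{all} earlier iterations, to a growing collection of vectors, and also $\Vbf_{\lm1}^{\!\top}$ has been applied to others; one must carefully enumerate the full linear system of constraints conditioned upon, verify it has the expected rank (generically full, i.e.\ $\min$ of the number of constraints and $N_{\lm1}$), and track how the deterministic part of the conditional mean is expressed as a combination of the already-characterized vectors. Getting the covariance bookkeeping right — in particular confirming that the cross-terms with $Q^-_{j\ell}$ vanish for \emph{every} $j\le k$ and not just $j=k$ — is where the divergence-free hypothesis must be applied repeatedly, and this is the step most prone to error. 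I would therefore lean heavily on the structural parallel with \cite[Theorem~4]{rangan2019vamp}, importing its conditioning and concentration lemmas essentially verbatim and spending the bulk of the effort on verifying that the multi-layer constraint set satisfies their hypotheses.
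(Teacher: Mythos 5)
Your plan is essentially the paper's own proof: the paper also conditions on the sigma-algebra $\Gsetbar_{k\ell}^+$ of all previously revealed iterates, uses the Haar-conditional law of $\Vbf_\ell$ to split the new vector into a deterministic projection onto the span of past vectors plus an independent isotropic term (its Lemmas on $\pbf_{k\ell}^{+\rm det}$ and $\pbf_{k\ell}^{+\rm ran}$), invokes the divergence-free assumption together with Stein's lemma to make $\Bbf_{k\ell}^{\!\top}\Bbf_{k\ell}$ block-diagonal and kill the cross-correlations $\Exp(P^+_{i\ell}Q^-_{j,\lp1})$, and finishes with uniform Lipschitz continuity of $f^+$, $\varphi^+$ and continuity of $T^+$ to propagate the parameter-list limits and obtain $Q^+_{k,\lp1}$. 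Aside from an off-by-one in which layer's Haar matrix you name as the newly revealed one (it is $\Vbf_\ell$ acting on $\qbf^+_{k\ell}$ when proving $\mathcal{H}^+_{k,\lp1}$), the approach and the identified difficulty (bookkeeping the accumulated linear constraints on $\Vbf_\ell$ from both passes) coincide with the paper's.
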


Before proving the inductive step in Lemma \ref{lem:pqconvinduc}, we prove two intermediate lemmas. Let us start by defining some notation. Define $\Pbf_{k\ell}^+ := \left[ \pbf_{0\ell}^+ \cdots \pbf_{k\ell}^+ \right] \in \R^{N_\ell \times (\kp1)},
$ be a matrix whose columns are the first $\kp1$ values of the vector $\pbf^+_{\ell}$. We define the matrices $\Pbf_{k\ell}^-$, $\Qbf_{k\ell}^+$ and $\Qbf_{k\ell}^-$ in a similar manner with values of $\pbf_{\ell}^-,\qbf_\ell^+$ and $\qbf_\ell^-$ respectively.

Note that except the initial vectors $\{\wbf_\ell,\qbf_{0\ell}^-\}_{\ell=1}^L$, all later iterates in Algorithm \ref{algo:gen} are random due to the randomness of $\V_\ell$.
Let $\Gset_{k\ell}^\pm$ denote the collection of random variables associated with the hypotheses,
$\mathcal{H}^{\pm}_{k\ell}$.  That is, for $\ell=1,\ldots,\Lm1$,
\beq \label{eq:Gsetdef}
    \Gset_{k\ell}^+ := \left\{ \wbf_{\ell},\pbf^0_{\lm1},\Pbf^+_{k,\lm1},\qbf^0_\ell,\Qbf^-_{k\ell},\Qbf_{k\ell}^+ \right\}, \quad
    \Gset_{k\ell}^- := \left\{ \wbf_{\ell},\pbf^0_{\lm1},\Pbf^+_{\km1,\lm1},\qbf^0_\ell,
        \Qbf^-_{k\ell},\Pbf^-_{k,\lm1} \right\}.
\eeq
For $\ell=0$ and $\ell=L$ we set, $
    \Gset_{k0}^+ := \left\{ \wbf_{0},\Qbf^-_{k0},\Qbf_{k0}^+ \right\}, \quad
    \Gset_{kL}^- := \left\{ \wbf_L,\pbf^0_{\Lm1},\Pbf^+_{\km1,\Lm1},\Pbf^-_{k,\Lm1} \right\}.$
    
Let $\Gsetbar_{k\ell}^+$ be the sigma algebra generated by the union of all the sets $\Gset_{k'\ell'}^\pm$
as they have appeared in the sequence \eqref{eq:induc} up to and including the final
set $\Gset_{k\ell}^+$.
Thus, the sigma algebra $\Gsetbar_{k\ell}^+$ contains all \textit{information} produced by Algorithm~\ref{algo:gen}
immediately \emph{before} line~\ref{line:pp_gen} in layer $\ell$ of iteration $k$. Note also that the random variables in Algorithm \ref{algo:gen_se} immediately before defining $P_{k,\ell}^+$ in line \ref{line:pp_se_gen} are all $\Gsetbar_{k\ell}^+$ measurable.

Observe that the matrix $\Vbf_\ell$ in Algorithm~\ref{algo:gen}
appears only during matrix-vector multiplications in lines~\ref{line:pp_gen} and \ref{line:pn_gen}.
If we define the matrices,
$
    \Abf_{k\ell} := \left[ \pbf^0_\ell, \Pbf_{\km1,\ell}^+ ~ \Pbf_{k\ell}^- \right], \quad
    \Bbf_{k\ell} := \left[ \qbf^0_\ell, \Qbf_{\km1,\ell}^+ ~ \Qbf_{k\ell}^- \right],
$
all the vectors in the set $\Gsetbar_{k\ell}^+$ will be unchanged for all
matrices $\Vbf_\ell$ satisfying the linear constraints
\beq \label{eq:ABVconk}
    \Abf_{k\ell} = \Vbf_\ell\Bbf_{k\ell}.
\eeq
Hence, the conditional distribution of $\Vbf_\ell$ given $\Gsetbar_{k\ell}^+$ is precisely
the uniform distribution on the set of orthogonal matrices satisfying
\eqref{eq:ABVconk}.  The matrices $\Abf_{k\ell}$ and $\Bbf_{k\ell}$ are of dimensions
$N_\ell \times 2k+2$.
From \cite[Lemmas 3,4]{rangan2019vamp}, this conditional distribution is given by
\beq \label{eq:Vconk}
    \left. \Vbf_\ell \right|_{\Gsetbar_{k\ell}^+} \eqd
    \Abf_{k\ell}(\Abf\tran_{k\ell}\Abf_{k\ell})^{-1}\Bbf_{k\ell}\tran + \Ubf_{\Abf_{k\ell}^\perp}\wt{\Vbf}_\ell\Ubf_{\Bbf_{k\ell}^\perp}\tran,
\eeq
where $\Ubf_{\Abf_{k\ell}^\perp}$ and $\Ubf_{\Bbf_{k\ell}^\perp}$ are $N_\ell \times (N_\ell-(2k+2))$ matrices
whose columns are an orthonormal basis for $\Range(\Abf_{k\ell})^\perp$ and $\Range(\Bbf_{k\ell})^\perp$.
The matrix $\wt{\Vbf}_\ell$ is  Haar distributed on the set of $(N_\ell-(2k+2))\times (N_\ell-(2k+2))$
orthogonal matrices and is independent of $\Gsetbar_{k\ell}^+$.

Next, similar to the proof of \cite[Thm. 4]{rangan2019vamp},
we can use \eqref{eq:Vconk} to write the conditional distribution of $\pbf_{k\ell}^+$ (from line~\ref{line:pp_gen} of Algorithm \ref{algo:gen}) given $\Gsetbar_{k\ell}^+$ as a sum of two terms
\begin{subequations}
\begin{align}
    \label{eq:ppart}
    \pbf_{k\ell}^+|_{\Gsetbar_{k\ell}^+} &= \Vbf_\ell|_{\Gsetbar_{k\ell}^+}\ \qbf_{k\ell}^+ \overset{d}= \pbf_{k\ell}^{\rm +det} + \pbf_{k\ell}^{\rm +ran},\\
    \label{eq:pdet}
    \pbf_{k\ell}^{\rm +det} &:= \Abf_{k\ell}(\Bbf\tran_{k\ell}\Bbf_{k\ell})^{-1}\Bbf_{k\ell}\tran\qbf_{k\ell}^+\\
    \label{eq:pran}
    \pbf_{k\ell}^{\rm +ran} &:= \Ubf_{\Bbf_k^\perp}\wt{\Vbf}_\ell\tran \Ubf_{\Abf_k^\perp}\tran \qbf_{k\ell}^+.
\end{align}
\end{subequations}
where we call $\pbf_{k\ell}^{\rm +det}$ the \emph{deterministic} term and
$\pbf_{k\ell}^{\rm +ran}$ the \emph{random} term. The next two lemmas characterize the limiting distributions
of the deterministic and random terms.

\begin{lemma} \label{lem:pconvdet}
Under the induction hypothesis, the components of the ``deterministic" term
$\pbf_{k\ell}^{+\rm det}$ along with the components
of the vectors in $\Gsetbar_{k\ell}^+$  converge empirically.
In addition, there exists constants $\beta_{0\ell}^+,\ldots,\beta^+_{\km1,\ell}$ such that
\beq \label{eq:pconvdet}
    \lim_{N \arr \infty} \{ p_{k\ell,n}^{\rm +det} \} \PLeq P_{k\ell}^{\rm +det}
    := \beta^0_\ell P^0_\ell +  \sum_{i=0}^{\km1}    \beta_{i\ell} P_{i\ell}^+,
\eeq
where $P_{k\ell}^{+\rm det}$ is the limiting random variable for the components of $\pbf_{k\ell}^{\rm det}$.
\end{lemma}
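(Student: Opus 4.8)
The plan is to observe that, by \eqref{eq:pdet}, the deterministic term is a fixed linear combination of the columns of $\Abf_{k\ell}=[\,\pbf^0_\ell,\ \pbf^+_{0\ell},\dots,\pbf^+_{\km1,\ell},\ \pbf^-_{0\ell},\dots,\pbf^-_{k\ell}\,]$, namely $\pbf^{\rm +det}_{k\ell}=\Abf_{k\ell}\,\cbf_{k\ell}$ with coefficient vector $\cbf_{k\ell}:=\big(\tfrac1{N_\ell}\Bbf_{k\ell}\tran\Bbf_{k\ell}\big)^{-1}\big(\tfrac1{N_\ell}\Bbf_{k\ell}\tran\qbf_{k\ell}^+\big)$, where $\Bbf_{k\ell}$ carries the corresponding $\qbf$-columns. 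Every vector appearing in $\Abf_{k\ell}$, $\Bbf_{k\ell}$ and $\qbf_{k\ell}^+$ lies in the collection $\Gset^+_{k\ell}$ attached to hypothesis $\mc H^+_{k\ell}$, which precedes $\mc H^+_{k,\lp1}$ in the chain \eqref{eq:induc}; so the induction hypothesis gives that their components converge jointly empirically in $PL(2)$ to the limits $Q^0_\ell,Q^+_{i\ell},Q^-_{j\ell},Q^+_{k\ell}$ and $P^0_\ell,P^+_{i\ell},P^-_{j\ell}$ of the SE (Algorithm~\ref{algo:gen_se}).

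First I would establish that $\cbf_{k\ell}$ has a deterministic limit. Each entry of $\tfrac1{N_\ell}\Bbf_{k\ell}\tran\Bbf_{k\ell}$ and of $\tfrac1{N_\ell}\Bbf_{k\ell}\tran\qbf_{k\ell}^+$ is an empirical average of a product of two jointly $PL(2)$-convergent scalar sequences; since $(a,b)\mapsto ab$ is pseudo-Lipschitz of order $2$, such averages converge almost surely to the expectation of the product of the limiting variables. Denote by $\Rbf_{k\ell}$ the resulting limiting Gram matrix of $(Q^0_\ell,Q^+_{0\ell},\dots,Q^+_{\km1,\ell},Q^-_{0\ell},\dots,Q^-_{k\ell})$ and by $\vbf_{k\ell}$ the limiting vector of their correlations with $Q^+_{k\ell}$. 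Nonsingularity of $\Rbf_{k\ell}$ (or, in the degenerate case, $\vbf_{k\ell}\in\Range(\Rbf_{k\ell})$ together with use of the pseudoinverse) carries over from the single-layer analysis of \cite{rangan2019vamp}. Then $\cbf_{k\ell}\xrightarrow{a.s.}\wb\cbf_{k\ell}:=\Rbf_{k\ell}^{-1}\vbf_{k\ell}$, and because $PL(2)$ convergence of a finite family of vector sequences is preserved under linear combinations with almost-surely convergent coefficients, $\{p_{k\ell,n}^{\rm +det}\}$ converges empirically (jointly with the vectors of $\Gsetbar^+_{k\ell}$) to the corresponding linear combination of $P^0_\ell$, $\{P^+_{i\ell}\}_{i<k}$ and $\{P^-_{j\ell}\}_{j\le k}$, with coefficients $\wb\cbf_{k\ell}$.

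The only substantive point is that the entries of $\wb\cbf_{k\ell}$ against the $\Pbf^-_{k\ell}$-columns vanish; this is where Assumption~\ref{as:gen2}(d) is used. Since $Q^+_{m\ell}=f^+_{m\ell}(P^0_{\lm1},P^+_{m,\lm1},Q^-_{m\ell},W_\ell,\Lambdabar^+_{m\ell})$ depends on the jointly Gaussian family $\{Q^-_{j\ell}\}_j$ only through $Q^-_{m\ell}$, and $(P^0_{\lm1},P^+_{m,\lm1},W_\ell)$ is independent of that family, Stein's lemma gives $\Exp[Q^-_{m\ell}Q^+_{m\ell}]=\Var(Q^-_{m\ell})\,\Exp\!\big[\partial f^+_{m\ell}/\partial Q^-_{m\ell}\big]=0$, the last equality being the SE counterpart of the asymptotically-divergence-free condition \eqref{eq:fdivfree}; hence $\Exp[Q^-_{j\ell}Q^+_{m\ell}]=\tfrac{\Cov(Q^-_{j\ell},Q^-_{m\ell})}{\Var(Q^-_{m\ell})}\,\Exp[Q^-_{m\ell}Q^+_{m\ell}]=0$ for every $j$. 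Taking $m=k$ shows $\vbf_{k\ell}$ vanishes on the $\{Q^-_{j\ell}\}$ block; taking $m=i<k$, together with $\Exp[Q^0_\ell Q^-_{j\ell}]=\Exp[Q^0_\ell]\Exp[Q^-_{j\ell}]=0$ (as $Q^0_\ell=f^0_\ell(P^0_{\lm1},W_\ell)$ is independent of the zero-mean $Q^-_{j\ell}$), shows that $\Rbf_{k\ell}$ is block-diagonal with the $\{Q^-_{j\ell}\}$ block decoupled from the $\{Q^0_\ell,Q^+_{i\ell}\}$ block. Consequently $\wb\cbf_{k\ell}=\Rbf_{k\ell}^{-1}\vbf_{k\ell}$ is supported on the $\{Q^0_\ell,Q^+_{i\ell}\}$ coordinates, which is \eqref{eq:pconvdet} with $P_{k\ell}^{\rm +det}=\beta^0_\ell P^0_\ell+\sum_{i=0}^{\km1}\beta_{i\ell}P^+_{i\ell}$.

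The main obstacle is not this cancellation (which is brief once the limits are in hand) but the supporting bookkeeping it rests on, shared with \cite[Thm.~4]{rangan2019vamp}: verifying that conditioning on the sigma-algebra $\Gsetbar^+_{k\ell}$ freezes exactly the columns of $\Abf_{k\ell}$ and $\Bbf_{k\ell}$, so that \eqref{eq:ABVconk}--\eqref{eq:Vconk} produce the decomposition \eqref{eq:pdet}; establishing almost-sure convergence, with the needed moment control, of empirical averages of products of $PL(2)$-convergent sequences; and controlling the limiting Gram matrix $\Rbf_{k\ell}$. In the multi-layer setting each of these has to be carried along both indices $k$ and $\ell$ in the induction order \eqref{eq:induc}, but all are routine adaptations of the single-layer arguments.
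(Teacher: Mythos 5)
Your proposal is correct and follows essentially the same route as the paper: write $\pbf^{\rm +det}_{k\ell}=\Abf_{k\ell}(\Bbf_{k\ell}\tran\Bbf_{k\ell})^{-1}\Bbf_{k\ell}\tran\qbf^+_{k\ell}$, pass to the limiting Gram matrix and correlation vector via the induction hypothesis, and use Stein's lemma together with the divergence-free condition \eqref{eq:fdivfree} and the zero cross-correlations \eqref{eq:PQpcorr} to show $\Exp[Q^+_{i\ell}Q^-_{j\ell}]=0$, so the coefficient vector is supported on the $\{Q^0_\ell,Q^+_{i\ell}\}$ block. Your variant of the Stein step (reducing $\Exp[Q^-_{j\ell}Q^+_{m\ell}]$ to $\Exp[Q^-_{m\ell}Q^+_{m\ell}]$ by the Gaussian linear-regression identity) is equivalent to the paper's direct multivariate Stein computation in \eqref{eq:Qijstein}.
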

\begin{proof}
The proof is similar that of \cite[Lem. 6]{rangan2019vamp}, but we go over the details
as there are some important differences in the multi-layer case.
Define
$ \label{eq:PQaug}
    \wt{\Pbf}_{\km1,\ell}^+ = \left[ \pbf^0_\ell, ~ \Pbf_{\km1,\ell}^+ \right],
    \wt{\Qbf}_{\km1,\ell}^+ = \left[ \qbf^0_\ell, ~ \Qbf_{\km1,\ell}^+ \right],
$
which are the matrices in $\Real^{N_\ell\times (k+1)}$.
We can then write $\Abf_{k\ell}$ and $\Bbf_{k\ell}$ from \eqref{eq:ABVconk} as
\beq \label{eq:ABdef2}
    \Abf_{k\ell} := \left[ \wt{\Pbf}_{\km1,\ell}^+ ~ \Pbf_{k\ell}^- \right], \quad
    \Bbf_{k\ell} := \left[ \wt{\Qbf}_{\km1,\ell}^+ ~ \Qbf_{k\ell}^- \right],
\eeq
We first evaluate the asymptotic values of various terms in \eqref{eq:pdet}.
By definition of $\Bbf_{k\ell}$ in \eqref{eq:ABdef2},
\[
    \Bbf\tran_{k\ell}\Bbf_{k\ell} = \begin{bmatrix}
        (\wt{\Qbf}_{\km1,\ell}^+)\tran\wt{\Qbf}_{\km1,\ell}^+ & (\wt{\Qbf}_{\km1,\ell}^+)\tran\Qbf_{k\ell}^- \\
        (\Qbf_{k\ell}^-)\tran\wt{\Qbf}_{\km1,\ell}^+ & (\Qbf_{k\ell}^-)\tran\Qbf_{k\ell}^-
        \end{bmatrix}
\]
We can then evaluate the asymptotic values of these
terms as follows:  For $0\leq i,j\leq k-1$ the asymptotic value of the
$(i+2,j+2)^{\rm nd}$ entry of the matrix $(\wt{\Qbf}_{\km1,\ell}^+)\tran\wt{\Qbf}_{\km1,\ell}^+$ is given by
\begin{align*}
    \MoveEqLeft \lim_{N \arr \infty} \tfrac{1}{N_\ell} \left[ (\wt{\Qbf}_{\km1,\ell}^+)\tran\wt{\Qbf}_{\km1,\ell}^+ \right]_{i+2,j+2}
        \stackrel{(a)}{=} \lim_{N \arr \infty}
        \frac{1}{N_\ell} (\qbf_{i\ell}^+)\tran\qbf_{j\ell}^+ 
        &= \lim_{N \arr \infty} \tfrac{1}{N_\ell} \sum_{n=1}^{N_\ell} q_{i\ell,n}^+q_{j\ell,n}^+
        \stackrel{(b)}{=} \Exp\left[ Q_{i\ell}^+Q_{j\ell}^+ \right]
\end{align*}
where (a) follows since the $(i+2)^{\rm th}$ column of $\wt{\Qbf}_{\km1,\ell}^+$
is $\qbf_{i\ell}^+$, and (b) follows due to the empirical convergence assumption in \eqref{eq:PQplim}.
Also, since the first column of $\wt{\Qbf}_{\km1,\ell}^+$ is $\qbf^0_\ell$,
we obtain that
\[
    \lim_{N_\ell \arr \infty}  \tfrac{1}{N_\ell}
        (\wt{\Qbf}_{k-1,\ell}^+)\tran\wt{\Qbf}_{k-1,\ell}^+ = \Rbf^+_{k-1,\ell}\qquad{\rm and}\qquad \lim_{N_\ell \arr \infty}  \tfrac{1}{N_\ell}  (\Qbf_{k\ell}^-)\tran\Qbf_{k\ell}^- = \Rbf^-_{k\ell},
\]
where $\Rbf^+_{k-1,\ell}$ is the covariance matrix of
$(Q^0_\ell,Q_{0\ell}^+,\ldots,Q_{k-1,\ell}^+)$,
and $\Rbf^-_{k\ell}$ is the covariance matrix of the vector
$(Q_{0\ell}^-,\ldots,Q_{k\ell}^-)$.
For the matrix $(\wt\Qbf_{\km1,\ell}^+)\tran\Qbf_{k\ell}^-$,
first observe that the limit of the divergence free condition \eqref{eq:fdivfree} implies
\beq \label{eq:fpdivfree}
    \Exp\left[ \frac{\partial f_{i\ell}^+(P_{i,\lm1}^+,Q_{i\ell}^-,W_\ell,\Lambdabar_{i\ell})}{\partial q_{i\ell}^-} \right]
    = \lim_{N_\ell \arr \infty}  \bkt{\frac{\partial \fbf^+_{i\ell}(\pbf^+_{i,\lm1},\qbf_{i\ell}^-,\wbf_\ell,\Lambdabar^+_{i\ell})}{
        \partial \qbf_{i\ell}^-} }  = 0,
\eeq
for any $i$.  Also, by the induction hypothesis $\mathcal{H}_{k\ell}^+$,
\beq \label{eq:pqxcorrpf}
    \Exp(P_{i,\lm1}^+Q_{j\ell}^-)  = 0, \quad
    \Exp(P_{\lm1}^0 Q_{j\ell}^-) = 0,
\eeq
for all $0\leq i,j \leq k$.
Therefore using \eqref{eq:Qpf}, the cross-terms $\Exp(Q_{i\ell}^+Q_{j\ell}^-)$ are given by
\begin{align}
    &\Exp(f_{i\ell}^+(P^0_{\lm1},P_{i,\lm1}^+,Q_{i\ell}^-,W_\ell,\Lambdabar_{i\ell})Q_{j\ell}^-) 
    \stackrel{(a)}{=}  \Exp\left[ \tfrac{\partial f_{i\ell}^+(P^0_{\lm1},P_{i,\lm1}^+,Q_{i\ell}^-,W_\ell,\Lambdabar^+_{i\ell})}
        {\partial P_{\lm1}^0} \right]
        \Exp(P_{\lm1}^0Q_{j\ell}^-)    \nonumber \\
     &+\Exp\left[ \tfrac{\partial f_{i\ell}^+(P^0_{\lm1},P_{i,\lm1}^+,Q_{i\ell}^-,W_\ell,\Lambdabar^+_{i\ell})}{\partial P_{i,\lm1}^+} \right]
        \Exp(P_{i,\lm1}^+Q_{j\ell}^-)
     + \Exp\left[ \tfrac{\partial f_{i\ell}^+(P^0_{\lm1},P_{i,\lm1}^+,Q_{i\ell}^-,W_\ell,\Lambdabar^+_{i\ell})}
        {\partial Q_{i\ell}^-} \right]
        \Exp(Q_{i\ell}^-Q_{j\ell}^-)
     \stackrel{(b)}{=} 0, \label{eq:Qijstein}
\end{align}
(a) follows from Stein's Lemma; and (b) follows from \eqref{eq:fpdivfree}, and \eqref{eq:pqxcorrpf}.
Consequently,
\begin{align} \label{eq:BBlim_and_Bqlim}
    \lim_{N_\ell \arr \infty} \tfrac{1}{N_\ell} \Bbf\tran_{k\ell}\Bbf_{k\ell} &= \begin{bmatrix}
        \Rbf_{\km1,\ell}^+ & \zero \\
        \zero & \Rbf_{k\ell}^-
        \end{bmatrix} ,\quad{\rm and}\quad
     \lim_{N_\ell \arr \infty} \tfrac{1}{N_\ell} \Bbf_{k\ell}\tran\qbf_{k\ell}^+= \
    \begin{bmatrix} \bbf^+_{k\ell} \\ \zero \end{bmatrix} ,
\end{align}
where $\bbf^+_{k\ell} := \left[\Exp(Q_{0\ell}^+Q_{k\ell}^+), ~ \Exp(Q_{1\ell}^+Q_{k\ell}^+),
        ~\cdots, \Exp(Q_{\km1,\ell}^+Q_{k\ell}^+) \right]\tran,$ is the vector of correlations. We again have $\zero$ in the second term because $\Exp[Q_{i\ell}^+Q_{j\ell^-}]=0$ for all $0\leq i,j\leq k$. Hence we have
\beq \label{eq:Bqmult}
    \lim_{N_\ell \arr \infty} (\Bbf\tran_{k\ell}\Bbf_{k\ell})^{-1}\Bbf_{k\ell}\tran\qbf_{k\ell}^+ =
    \begin{bmatrix}  \betabf_{k\ell}^+ \\ \mathbf{0} \end{bmatrix}, \quad \betabf_{k\ell}^+ := \begin{bmatrix} \Rbf^+_{\km1,\ell} \end{bmatrix}^{-1}\bbf^+_{k\ell}.
\eeq
Therefore, $\pbf_{k\ell}^{+\rm det}$ equals
\begin{align}
    \Abf_{k\ell}(\Bbf\tran_{k\ell}\Bbf_{k\ell})^{-1}\Bbf_{k\ell}\tran\qbf_{k\ell}^+
    = \left[ \wt{\Pbf}_{\km1,\ell}^+ ~ \Pbf_{k,\ell}^- \right]
   \begin{bmatrix}
   \mathbf{\beta}_{k\ell}^+  \\ \zero 
   \end{bmatrix}
    + O\left(\tfrac{1}{N_\ell}\right) \nonumber 
    = \beta^0_\ell \pbf^0_\ell +
    \sum_{i=0}^{\km1} \beta_{i\ell}^+\pbf_{i\ell}^+ + O\left(\tfrac{1}{N_\ell}\right),
\end{align}
where $\beta^0_\ell$ and $\beta_{i\ell}^+$ are the components of $\betabf_{k\ell}^+$ and
the term $O(\tfrac1{N_\ell})$ means a vector sequence, $\xibf(N) \in \R^{N_{\ell}}$ such that $\lim_{N \arr\infty} \tfrac{1}{N} \|\xibf(N)\|^2 = 0.$
A continuity argument then shows the empirical convergence \eqref{eq:pconvdet}.
\end{proof}

\begin{lemma} \label{lem:pconvran}
Under the induction hypothesis, the components of
the ``random" term $\pbf_{k\ell}^{+\rm ran}$ along with the components
of the vectors in $\Gsetbar_{k\ell}^+$ almost surely converge empirically.
The components of $\pbf_{k\ell}^{+\rm ran}$ converge as
\beq \label{eq:pconvran}
     \lim_{N \arr \infty} \{ p_{k\ell,n}^{+\rm ran} \} \PLeq U_{k\ell},
\eeq
where $U_{k\ell}$ is a zero mean Gaussian random variable
independent of the limiting random variables corresponding to the variables
in $\Gsetbar_{k\ell}^+$.
\end{lemma}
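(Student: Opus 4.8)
The plan is to adapt the analysis of the ``random term'' in the single-layer VAMP proof of \cite{rangan2019vamp} to the present multi-layer recursion, working conditionally on the sigma-algebra $\Gsetbar_{k\ell}^+$. By \eqref{eq:Vconk}, given $\Gsetbar_{k\ell}^+$ the matrix $\wt{\Vbf}_\ell$ is Haar distributed on the group of $(N_\ell-(2k+2))\times(N_\ell-(2k+2))$ orthogonal matrices and is independent of every $\Gsetbar_{k\ell}^+$-measurable vector, in particular of $\qbf_{k\ell}^+$. From $\pbf_{k\ell}^+=\Vbf_\ell\qbf_{k\ell}^+$ and \eqref{eq:Vconk}, the random part equals $\pbf_{k\ell}^{+\rm ran}=\Ubf_{\Abf_{k\ell}^\perp}\wt{\Vbf}_\ell\,\vbf$, where $\vbf:=\Ubf_{\Bbf_{k\ell}^\perp}\tran\qbf_{k\ell}^+\in\Real^{N_\ell-(2k+2)}$ is the coordinate vector of $\qbf_{k\ell}^+$ in an orthonormal basis of $\Range(\Bbf_{k\ell})^\perp$, so that $\|\vbf\|^2=(\qbf_{k\ell}^+)\tran\big(\Ibf-\Bbf_{k\ell}(\Bbf_{k\ell}\tran\Bbf_{k\ell})^{-1}\Bbf_{k\ell}\tran\big)\qbf_{k\ell}^+$, and $\Ubf_{\Abf_{k\ell}^\perp}$ is an isometry onto $\Range(\Abf_{k\ell})^\perp$. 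Conditioned on $\Gsetbar_{k\ell}^+$, the vector $\wt{\Vbf}_\ell\,\vbf$ is uniform on the sphere of radius $\|\vbf\|$ in $\Real^{N_\ell-(2k+2)}$, hence has the same conditional law as $\|\vbf\|\,\gbf/\|\gbf\|$ for a standard Gaussian $\gbf\sim\Norm(\zero,\Ibf_{N_\ell-(2k+2)})$ independent of $\Gsetbar_{k\ell}^+$.

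The second step is to show that $\tfrac1{N_\ell}\|\vbf\|^2$ converges almost surely to a deterministic constant $(\sigma_{k\ell}^+)^2\ge 0$. The vector $\vbf$ is exactly the residual already treated in Lemma~\ref{lem:pconvdet}: each entry of $\tfrac1{N_\ell}\Bbf_{k\ell}\tran\Bbf_{k\ell}$ and of $\tfrac1{N_\ell}\Bbf_{k\ell}\tran\qbf_{k\ell}^+$ converges under the induction hypothesis by the empirical limits \eqref{eq:PQplim}, with the block structure of \eqref{eq:BBlim_and_Bqlim} (the $\Qbf_{k\ell}^-$ block decoupling through the divergence-free/Stein cancellation \eqref{eq:Qijstein}). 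Hence $\tfrac1{N_\ell}\|\vbf\|^2\to\Exp\!\big[(Q_{k\ell}^+)^2\big]-(\bbf_{k\ell}^+)\tran[\Rbf_{\km1,\ell}^+]^{-1}\bbf_{k\ell}^+=:(\sigma_{k\ell}^+)^2$, and in particular $\|\vbf\|/\sqrt{N_\ell}\to\sigma_{k\ell}^+$ almost surely.

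The third step invokes the elementary ``random orthogonal embedding'' fact underlying \cite{rangan2019vamp}: if $\gbf\sim\Norm(\zero,\Ibf_{N_\ell-(2k+2)})$ is independent of a finite family of vectors whose components converge empirically jointly, and $\Ubf$ is an $N_\ell\times(N_\ell-(2k+2))$ matrix with orthonormal columns, then the components of $\Ubf\gbf/\sqrt{N_\ell}$ together with those of the given vectors converge empirically, the limit of $\Ubf\gbf/\sqrt{N_\ell}$ being a $\Norm(0,1)$ variable independent of the limiting random variables of the family. Applying this with $\Ubf=\Ubf_{\Abf_{k\ell}^\perp}$ and the family being the vectors comprising $\Gsetbar_{k\ell}^+$, then rescaling by the almost-surely convergent scalar $\|\vbf\|/\sqrt{N_\ell}\to\sigma_{k\ell}^+$ and using $\pbf_{k\ell}^{+\rm ran}=\Ubf_{\Abf_{k\ell}^\perp}\wt{\Vbf}_\ell\,\vbf$, we obtain the claimed joint empirical convergence of $\{p_{k\ell,n}^{+\rm ran}\}$ with the vectors in $\Gsetbar_{k\ell}^+$, with limit $U_{k\ell}=\sigma_{k\ell}^+G\sim\Norm(0,(\sigma_{k\ell}^+)^2)$; independence of $U_{k\ell}$ from the limits of $\Gsetbar_{k\ell}^+$ -- not merely uncorrelatedness -- follows from the Gaussianity of $\gbf$, while $\pbf_{k\ell}^{+\rm ran}\in\Range(\Abf_{k\ell})^\perp$ is what makes $\pbf_{k\ell}^{+\rm ran}$ asymptotically uncorrelated with the columns of $\Abf_{k\ell}$.

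The step I expect to be the main obstacle is the passage from conditional to unconditional almost-sure convergence. The Haar/spherical structure above is available only after conditioning on $\Gsetbar_{k\ell}^+$, so, exactly as in \cite{rangan2019vamp}, one must check that the conditional empirical limits depend on the conditioning only through the scalar $\|\vbf\|^2/N_\ell$ -- which itself converges a.s.\ by the second step -- and then lift the conditional convergence, which holds for each of the finitely many pseudo-Lipschitz test functions at hand, to unconditional almost-sure convergence (via a subsequence and Borel--Cantelli argument, or the conditional-variance bound used in \cite{rangan2019vamp}). The multi-layer setting enters only through keeping track of which vectors make up $\Gsetbar_{k\ell}^+$ and does not change the structure of this argument.
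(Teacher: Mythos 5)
Your argument is correct and is essentially the same as the paper's: the paper proves this lemma by a one-line deferral to Lemmas 7--8 of \cite{rangan2019vamp}, and your chain --- conditioning on $\Gsetbar_{k\ell}^+$, writing $\pbf_{k\ell}^{+\rm ran}=\Ubf_{\Abf_{k\ell}^\perp}\wt{\Vbf}_\ell\vbf$ with $\wt{\Vbf}_\ell\vbf$ uniform on a sphere, computing $\|\vbf\|^2/N_\ell\to(\sigma_{k\ell}^+)^2$ from the block limits of Lemma~\ref{lem:pconvdet}, applying the orthogonal-embedding Gaussian limit, and then deconditioning --- is exactly that argument transplanted to the multi-layer indexing. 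The only blemish is a normalization slip in your statement of the embedding fact (the components of $\Ubf\gbf$, not of $\Ubf\gbf/\sqrt{N_\ell}$, converge empirically to $\Norm(0,1)$); this is harmless since your rescaling by $\|\vbf\|/\|\gbf\|\to\sigma_{k\ell}^+$ and your final variance $\Exp[(Q_{k\ell}^+)^2]-(\bbf_{k\ell}^+)\tran[\Rbf_{\km1,\ell}^+]^{-1}\bbf_{k\ell}^+$ are consistent with the correct statement.
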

\begin{proof}
The proof is very similar to that of \cite[Lemmas 7,8]{rangan2019vamp}.
\end{proof}

We are now ready to prove Lemma \ref{lem:pqconvinduc}.

\old{
\begin{lemma} \label{lem:pqconvinduc}
Under the induction hypothesis, the parameter list $\Lambda_{k,\lp1}^+$ almost surely converges as
\beq \label{eq:Lampliminduc}
    \lim_{N_{\lp1} \arr \infty} \Lambda_{k,\lp1}^+ = \Lambdabar_{k,\lp1}^+,
\eeq
where $\Lambdabar_{k,\lp1}$ is the parameter list generated from the SE recursion, Algorithm~\ref{algo:gen_se}.
Also, the components of
$\wbf_{\lp1}$, $\pbf^0_{\ell}$, $\qbf^0_{\lp1}$, $\pbf_{0,\ell}^+,\ldots,\pbf_{k,\ell}^+$ and $\qbf_{0,\lp1}^\pm,\ldots,\qbf_{k,\lp1}^\pm$
almost surely empirically converge jointly with limits,
\beq \label{eq:PQpliminduc}
    \lim_{N \arr \infty} \left\{
        (p^0_{\ell,n},p^+_{i\ell,n},q^0_{\lp1,n},q^-_{j,\lp1,n},q^+_{j,\lp1,n}) \right\} =
        (P^0_{\ell},P^+_{i\ell},Q^0_{\lp1},Q^-_{j,\lp1}, Q^+_{j,\lp1}),
\eeq
for all $i,j=0,\ldots,\kp1$, where the variables
\beq \label{eq:pqvecinduc}
    (P^0_\ell,P_{0\ell}^+,\ldots,P_{k,\ell}^+,Q_{0,\lp1}^-,\ldots,Q_{k,\lp1}^-),
\eeq
are zero-mean jointly Gaussian random variables independent of $W_\ell$ with
\beq \label{eq:PQpcorrinduc}
    \Cov(P^0_{\ell},P_{i,\ell}^+) = \Kbf_{i\ell}^+, \quad \Exp(Q_{j,\lp1}^-)^2 = \tau_{j,\lp1}^-, \quad \Exp(P_{i,\ell}^+Q_{j,\lp1}^-)  = 0,
    \quad \Exp(P^0_{\ell}Q_{j,\lp1}^-)  = 0,
\eeq
and $Q^0_{\lp1}$ and $Q^+_{j,\lp1}$ are the random variables in line~\ref{line:qp_se_gen}:
\beq \label{eq:Qpfinduc}
    Q^0_{\lp1} = f^0_{\lp1}(P^0_{\ell},W_{\lp1}), \quad
    Q^+_{j,\lp1} =
    f^+_{j,\lp1}(P^0_{\ell},P^+_{i\ell},Q^-_{j,\lp1},W_{\lp1},\Lambdabar_{k,\lp1}^+).
\eeq
\end{lemma}
}

\begin{proof}[Proof of Lemma \ref{lem:pqconvinduc}]
Using the partition \eqref{eq:ppart} and Lemmas~\ref{lem:pconvdet} and \ref{lem:pconvran},
we see that the components of the
vector sequences in $\Gsetbar_{k\ell}^+$ along with $\pbf^+_{k\ell}$
almost surely converge jointly empirically, where the components of $\pbf^+_{k\ell}$
have the limit
\beq \label{eq:pklim}
    \lim_{N_\ell \arr \infty} \left\{ p^+_{k\ell,n} \right\}
    = \lim_{N_\ell \arr \infty} \left\{ p^{\rm det}_{k\ell,n} + p^{\rm ran}_{k\ell,n} \right\}
    \PLeq  \beta^0_\ell P^0_\ell + \sum_{i=0}^{\km1}  \beta_{i\ell}^+ P_{i\ell}^+ + U_{k\ell} =: P_{k\ell}^+.
\eeq
{Note that the above PL(2) convergence can be shown using the same arguments involved in showing that if $X_N|\mc F\overset{d}{\implies} X|\mc F,$ and $Y_N|\mc F\overset{d}{\implies} c,$ then $(X_N,Y_N)|\mc F\overset{d}{\implies} (X,c)|\mc F$ for some constant $c$ and sigma-algebra $\mc F$.}

We first establish the Gaussianity of $P_{k\ell}^+$. Observe that by the induction hypothesis, $\mathcal{H}_{k,\lp1}^-$ holds whereby $(P_\ell^0,P_{0\ell}^+,\ldots,P_{\km1,\ell}^+,Q_{0,\lp1}^-,\ldots,Q_{k,\lp1}^-),$
is jointly Gaussian. Since $U_k$ is Gaussian and independent of $(P_\ell^0,P_{0\ell}^+,\ldots,P_{k-1,\ell}^+,Q_{0,\lp1}^-,\ldots,Q_{k,\lp1}^-),$ we can conclude from  \eqref{eq:pklim} that
\beq
(P_\ell^0,P_{0\ell}^+,\ldots,P_{\km1,\ell}^+,P_{k\ell}^+,Q_{0,\lp1}^-,\ldots,Q_{k,\lp1}^-){\rm \ is\ jointly\ Gaussian}.
\eeq
We now need to prove the correlations of this jointly Gaussian random vector as claimed by $\mc H_{k,\ell+1}^+$.  Since $\mathcal{H}_{k,\lp1}^-$ is true, we know
that \eqref{eq:PQpcorr} is true for
all $i=0,\ldots,\km1$ and $j=0,\ldots,k$ and $\ell=\ell+1$.  Hence,
we need only to prove the additional identity for $i=k$,
namely the equations:
$
    \Cov(P^0_\ell,P_{k\ell}^+)^2 = \Kbf_{k\ell}^+
    $ and $
    \Exp(P_{k\ell}^+Q_{j,\lp1}^-) = 0.
$
First observe that
\[
    \Exp(P_{k\ell}^+)^2  \stackrel{(a)}{=} \lim_{N_\ell \arr \infty} \frac{1}{N_\ell}
        \|\pbf_{k\ell}^+\|^2
          \stackrel{(b)}{=} \lim_{N_\ell \arr \infty} \frac{1}{N_\ell}
        \|\qbf_{k\ell}^+\|^2 \stackrel{(c)}{=}  \Exp\left( Q_{k\ell}^+ \right)^2
\]
where (a) follows from the fact that the components of $\pbf^+_{k\ell}$ converge empirically
to $P_{k\ell}^+$;
(b) follows from line \ref{line:pp_gen} in Algorithm~\ref{algo:gen} and the fact that $\Vbf_\ell$ is orthogonal;
and
(c) follows from the fact that the components of $\qbf^+_{k\ell}$ converge empirically
to $Q_{k\ell}^+$ from hypothesis $\mc H_{k,\ell}^+$.  Since $\pbf^0_\ell = \Vbf_\ell \qbf^0$, we similarly obtain that
$
    \Exp(P^0_\ell P_{k\ell}^+) = \Exp(Q^0_\ell Q_{k\ell}^+), \quad
    \Exp(P^0_\ell)^2 = \Exp(Q^0_\ell)^2,
$
from which we conclude
\beq \label{eq:PQcorr3}
    \Cov(P^0_\ell, P_{k\ell}^+) = \Cov(Q^0_\ell, Q_{k\ell}^+) =: \Kbf^+_{k\ell},
\eeq
where the last step follows from the definition of $\Kbf^+_{k\ell}$ in line~\ref{line:pp_se_gen} of Algorithm \ref{algo:gen_se}.
Finally, we observe that for $0\leq j\leq k$
\beq \label{eq:PQcorr4}
    \Exp(P_{k\ell}^+Q_{j,\lp1}^-) \stackrel{(a)}{=}
     \beta^0_\ell\Exp(P_{\ell}^0Q_{j,\lp1}^-) + \sum_{i=0}^{\km1} \beta_{i\ell}^+ \Exp(P_{i\ell}^+Q_{j,\lp1}^-)
        + \Exp(U_{k\ell}Q_{j,\lp1}^-) \stackrel{(a)}{=} 0,
\eeq
where (a) follows from \eqref{eq:pklim} and, in (b), we used the fact that
$\Exp(P_{\ell}^0Q_{j,\lp1}^-) = 0$ and
$\Exp(P_{i\ell}^+Q_{j,\lp1}^-) = 0$ since \eqref{eq:PQpcorr} is true for $i\leq \km1$ corresponding to $\mc H_{k,\ell+1}^-$ and
$\Exp(U_{k\ell}Q_{j,\lp1}^-) = 0$ since $U_{k\ell}$ is independent of $\Gsetbar_{k\ell}^+$, and $Q_{j,\lp1}^-$ is $\Gsetbar_{k\ell}^+$ measurable.
Thus, with \eqref{eq:PQcorr3} and \eqref{eq:PQcorr4}, we have proven all the correlations in
\eqref{eq:PQpcorr} corresponding to $\mc H_{k,\ell+1}^+$.

Next, we prove the convergence of the parameter lists $\Lambda_{k,\ell+1}^+$ to $\Lambdabar_{k,\ell+1}^+$.  Since $\Lambda^+_{k\ell} \arr \Lambdabar_{k\ell}^+$ due to hypothesis $\mc H_{k\ell}^+$,
and $\varphi_{k,\lp1}^+(\cdot)$ is uniformly Lipschitz continuous,
we have that $\lim_{N \arr \infty} \mu^+_{k,\lp1}$ from line~\ref{line:mup_gen} in Algorithm~\ref{algo:gen}
converges almost surely as
\begin{align}
    \MoveEqLeft  \lim_{N \arr \infty}
        \bkt{\varphibf_{k,\lp1}^+(\pbf^0_\ell,\pbf^+_{k\ell},\qbf_{k,\lp1}^-,\wbf_{\lp1},\Lambdabar_{k\ell}^+)}
    =    \Exp\left[ \varphi_{k,\lp1}^+(P^0_\ell,P^+_{k\ell},Q_{k,\lp1}^-,W_{\lp1},\Lambdabar_{k\ell}^+)
        \right]  = \mubar^+_{k,\lp1},
\end{align}
where $\mubar^+_{k,\lp1}$ is the value in line~\ref{line:mup_se_gen} in Algorithm~\ref{algo:gen_se}.
Since $T^+_{k,\lp1}(\cdot)$ is continuous, we have that $\lambda_{k,\lp1}^+$ in
line~\ref{line:lamp_gen} in Algorithm~\ref{algo:gen} converges as
$
    \lim_{N \arr \infty} \lambda_{k,\lp1}^+ 
    = T_{k,\lp1}^+(\mubar_{k,\lp1}^+,\Lambdabar_{k\ell}^+) =: \lambdabar_{k,\lp1}^+,
$
from
line~\ref{line:lamp_se_gen} in Algorithm~\ref{algo:gen_se}. Therefore, we have the limit
\beq\label{eq:Lampliminduc}
    \lim_{N \arr \infty} \Lambda_{k,\lp1}^+ =
    \lim_{N \arr \infty} (\Lambda_{k,\ell}^+,\lambda_{k,\lp1}^+)
    = (\Lambdabar_{k,\ell}^+,\lambdabar_{k,\lp1}^+) = \Lambdabar_{k,\lp1}^+,
\eeq
which proves the convergence of the parameter lists stated in $\mc H_{k,\ell+1}^+$.
Finally, using \eqref{eq:Lampliminduc}, the empirical convergence of the vector
sequences $\pbf^0_\ell$, $\pbf_{k\ell}^+$ and $\qbf_{k,\lp1}^-$ and the uniform Lipschitz continuity of
the update function $f_{k,\lp1}^+(\cdot)$ we obtain that $\lim_{N \arr \infty} \left\{ q_{k,\lp1,n}^+ \right\}$ equals
\begin{align*}
    \left\{ f_{k,\lp1}^+(p^0_{\ell,n},p_{k\ell,n}^-, q_{k,\lp1,n}^-,w_{\lp1,n},\Lambda_{k,\lp1}^+) \right\}
    = f_{k,\lp1}^+(P^0_\ell,P_{k\ell}^-, Q_{k,\lp1}^-,W_{\lp1},\Lambdabar_{k,\lp1}^+) =: Q^+_{k,\lp1},
\end{align*}
which proves the claim \eqref{eq:Qpf} for $\mc H_{k,\ell+1}^+$.  This completes the proof.
\end{proof}



\section{Proofs of Main Results:  Theorems~\ref{thm:main_result} and \ref{thm:MMSE_SE}}
\label{app:proof_of_main_result}

Recall that the main result in Theorem \ref{thm:main_result} claims the empirical convergence of PL(2) statistics of iterates of the ML-VAMP algorithm \ref{algo:ml-vamp} to the expectations corresponding statistics of random variables given in Algorithm \ref{algo:mlvamp_se}. 
We prove this result by applying the general convergence result stated in Theorem \ref{thm:general_convergence} which shows that under Assumptions \ref{as:gen} and \ref{as:gen2}, the PL(2) statistics of iterates of Algorithm \ref{algo:gen} empirically converge to expectations of corresponding statistics of appropriately defined scalar random variables defined in Algorithm \ref{algo:gen_se}.

The proof of Theorem \ref{thm:main_result} proceeds in two steps. First, we show that the ML-VAMP iterations are a special case of the iterations of Algorithm \ref{algo:gen}, and similarly Algorithm \ref{algo:mlvamp_se} is a special case of \ref{algo:mlvamp_se}, for specific choices of vector update functions, parameter statistic functions and parameter update functions, and their componentwise counterparts. The second step is to show that all assumptions required in Theorem \ref{thm:general_convergence} are satisfied, and hence the conclusions of Theorem \ref{thm:general_convergence} hold.

\begin{subsection}{Proof of Theorem \ref{thm:main_result}}
We start by showing that the ML-VAMP iterations from Algorithm \ref{algo:ml-vamp} are a special case of the Gen-ML recursions from Algorithm \ref{algo:gen}.

Consider the singular value decompositions $\W_\ell = \V_\ell\diag(\s_\ell)\V_{\ell-1}$ from equation \eqref{eq:SVD}. Then the true signals $\z_\ell^0$ in equation \eqref{eq:nntrue} and the iterates $\{\r_\ell^\pm,\zhat_\ell^\pm\}$ of Algorithm \ref{algo:ml-vamp} can then be expressed via the \textit{transformed} true signals defined below,
\begin{align}  \label{eq:pq0}
\begin{split}
    \qbf^0_\ell &:= \zbf^0_\ell, \quad    \pbf^0_\ell :=  \Vbf_\ell \zbf^0_\ell \quad \ell=0,2,\ldots,L \\
    \qbf^0_\ell &:= \Vbf_\ell\tran\zbf^0_\ell, \quad \pbf^0_\ell := \zbf^0_\ell \quad \ell=1,3,\ldots,\Lm1.
\end{split}
\end{align}
These signals can be see in the (TOP) of Fig. \ref{fig:mlvamp_error_system}.
Next, for $\ell=0,2,\ldots,L-2$, define:
\begin{subequations} \label{eq:pqdef}
\begin{align}
    \qbfhat^{\pm}_{k\ell} := \zbfhat^{\pm}_{k\ell}, \quad
    &\qbf^{\pm}_{k\ell} := \rbf_{k\ell}^\pm - \zbf^0_\ell, 
    &\qbfhat^{\pm}_{k,\lp1} := \Vbf_{\lp1}\tran\zbfhat^{\pm}_{k,\lp1}, \quad
    &\qbf^{\pm}_{k,\lp1} := \Vbf_{\lp1}\tran(\rbf_{k,\lp1}^{\pm} - \zbf^0_{\lp1}) \label{eq:qdef} \\
    \pbfhat^{\pm}_{k\ell} := \Vbf_\ell\zbfhat^{\pm}_{k\ell}, \quad
    &\pbf^{\pm}_{k\ell} := \Vbf_\ell(\rbf_{k\ell}^\pm - \zbf^0_\ell),
    &\pbfhat^{\pm}_{k,\lp1} := \zbfhat^{\pm}_{k,\lp1}, \quad
    &\pbf^{\pm}_{k,\lp1} := \rbf_{k,\lp1}^{\pm} - \zbf^0_{\lp1},  \label{eq:pdef}
\end{align}
\end{subequations}
The vectors $\qbfhat^{\pm}_{k\ell}$ and $\pbfhat^{\pm}_{k\ell}$ represent the estimates
of $\qbf^0_\ell$ and $\pbf^0_\ell$ defined in \eqref{eq:pq0}. These are outputs of the estimators $\gbf_\ell^\pm$ and $\Gbf_\ell^\pm$. Similarly, the vectors $\qbf^{\pm}_{k\ell}$ and $\pbf^{\pm}_{k\ell}$
are the differences $\rbf_{k\ell}^{\pm}-\zbf^0_\ell$ or their transforms.  These
represent errors on the \emph{inputs} $\rbf_{k\ell}^\pm$ to the estimators $\gbf^{\pm}_\ell(\cdot)$ (even $\ell$) and $\Gbf_\ell^\pm$ (odd $\ell$). These vectors can be seen in the (MIDDLE) panel of Fig. \ref{fig:mlvamp_error_system}



\begin{lemma}[ML-VAMP as a special case of Gen-ML]\label{lem:special_case}
Consider Algorithms \ref{algo:gen} and \ref{algo:gen_se} with 
\begin{enumerate}
    \item Initial functions $\fbf_\ell^0$ and vector update functions $\fbf_\ell^\pm$ given by componentwise extensions of $f_\ell^0$ and $f_\ell^\pm$ respectively from equation \eqref{eq:se_update_functions}.
    Parameter statistic functions $\bm{\varphi}_\ell^+$ and $\bm{\varphi}_\ell^-$ be given by componentwise extensions of $\tfrac{\partial f_\ell^+}{\partial q_\ell^-}$ and $\tfrac{\partial \fbf_\ell^+}{\partial p_{\ell-1}^+}$ respectively.
    Parameter updates $T_{k\ell}^\pm(\cdot)$ applied so that $\mu_{k\ell}^\pm=\alpha_{k\ell}^\pm$ and $\Lambda_{k\ell}^\pm =\theta_{k\ell}^\pm$, with $\theta_{k\ell}^\pm$ given in equation \eqref{eq:thetagam}.
    \item Perturbation vectors $\w_\ell$ given by
    $\w_0 = \z_0^0$, $\w_{2\ell} = \bm{\xi}_{2\ell}$ and $\w_{2\ell-1}=(\s_{2\ell-1},\wb\bbf_{2\ell-1},\wb{\bm{\xi}}_{2\ell-1})$ for $\ell=1,2,\ldots \frac{L}{2}.$
    Perturbation random variables $W_\ell$ given by \eqref{eq:perturbations}.
\end{enumerate}
Then we have that
\begin{enumerate}
    \item 
Lines \ref{line:q00init_gen}-\ref{line:end_initial_for} of Algorithm \ref{algo:gen} are equivalent to equation \eqref{eq:nntrue} with definitions of $\p_\ell^0,\q_\ell^0$ given in equation \eqref{eq:pq0}. 
Lines \ref{line:start_algo_for}-\ref{line:end_algo_for} of Algorithm \ref{algo:gen} are equivalent to the ML-VAMP iterations in Algorithm \ref{algo:ml-vamp} with definitions of $\p_\ell^\pm,\phat_\ell^\pm,\q_\ell^\pm,\qhat_\ell^\pm,$ given in equation \eqref{eq:pqdef}.
\item Algorithm \ref{algo:gen_se} is equivalent to Algorithm \ref{algo:mlvamp_se}.
\end{enumerate}
\end{lemma}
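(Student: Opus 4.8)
The plan is a bookkeeping argument: under the dictionary \eqref{eq:pq0}--\eqref{eq:pqdef} identifying the ML-VAMP quantities $\{\zbf^0_\ell,\rbf^\pm_{k\ell},\zhat^\pm_{k\ell}\}$ with the Gen-ML quantities $\{\qbf^0_\ell,\pbf^0_\ell,\qbf^\pm_{k\ell},\pbf^\pm_{k\ell},\qbfhat^\pm_{k\ell},\pbfhat^\pm_{k\ell}\}$, I would check that every line of Algorithm~\ref{algo:gen} collapses, after an elementary rearrangement, onto the corresponding line of Algorithm~\ref{algo:ml-vamp}, and likewise that Algorithm~\ref{algo:gen_se} collapses onto Algorithm~\ref{algo:mlvamp_se}. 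The one algebraic identity driving everything is that the ML-VAMP update $\rbf^+_{k\ell}=(\zhat^+_{k\ell}-\alpha^+_{k\ell}\rbf^-_{k\ell})/(1-\alpha^+_{k\ell})$ from line~\ref{line:rp}, after subtracting the true signal $\zbf^0_\ell$ from both sides, reads $\rbf^+_{k\ell}-\zbf^0_\ell=\tfrac{1}{1-\alpha^+_{k\ell}}\big[(\zhat^+_{k\ell}-\zbf^0_\ell)-\alpha^+_{k\ell}(\rbf^-_{k\ell}-\zbf^0_\ell)\big]$, which is exactly the form of the Gen-ML vector update $f^+_\ell$ in \eqref{eq:fhp} once one identifies the estimator output $\zhat^+_{k\ell}$ with $h^+_\ell(\cdot)$ from \eqref{eq:se_update_functions} and the error vectors $\rbf^\pm-\zbf^0$ with $\qbf^\pm$ (even layers) or $\pbf^\pm$ (odd layers); the analogous statement holds for line~\ref{line:rn} and $f^-_\ell$.

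First I would treat the \emph{true-signal} part, lines~\ref{line:q00init_gen}--\ref{line:end_initial_for} of Algorithm~\ref{algo:gen}. For even $\ell$, the nonlinear layer \eqref{eq:nnnonlintrue} gives $\zbf^0_\ell=\phibf_\ell(\zbf^0_{\lm1},\xibf_\ell)$; since $\pbf^0_{\lm1}=\zbf^0_{\lm1}$ by \eqref{eq:pq0} (odd index) and $f^0_\ell$ is the componentwise extension of $\phi_\ell$ from \eqref{eq:f0nonlin} with $\wbf_\ell=\xibf_\ell$, line~\ref{line:q0init_gen} produces $\qbf^0_\ell=\zbf^0_\ell$ and then $\pbf^0_\ell=\Vbf_\ell\qbf^0_\ell=\Vbf_\ell\zbf^0_\ell$, matching \eqref{eq:pq0}. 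For odd $\ell$, substituting the SVD \eqref{eq:SVD} into \eqref{eq:nnlintrue} yields $\Vbf_\ell\tran\zbf^0_\ell=\diag(\sbf_\ell)\Vbf_{\lm1}\zbf^0_{\lm1}+\wb\bbf_\ell+\wb\xibf_\ell$; since $\pbf^0_{\lm1}=\Vbf_{\lm1}\zbf^0_{\lm1}$ (even index) and $f^0_\ell$ is the componentwise extension of \eqref{eq:f0lin} with $\wbf_\ell=(\sbf_\ell,\wb\bbf_\ell,\wb\xibf_\ell)$, Gen-ML produces $\qbf^0_\ell=\Vbf_\ell\tran\zbf^0_\ell$ and $\pbf^0_\ell=\Vbf_\ell\qbf^0_\ell=\zbf^0_\ell$, again matching \eqref{eq:pq0}; the base case $\ell=0$ is $\qbf^0_0=f^0_0(\wbf_0)=\wbf_0=\zbf^0_0$. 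Here one also records the zero-padding convention on $\diag(\sbf_\ell)\in\Real^{N_\ell\times N_{\lm1}}$ already flagged after \eqref{eq:Glintrans_componentwise}.

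Next I would treat the \emph{iterated} part, lines~\ref{line:start_algo_for}--\ref{line:end_algo_for}, running along the forward pass and then the backward pass, and handling in turn the four cases: the inner nonlinear layers $\ell=2,4,\ldots,L-2$ (estimators $\gbf^\pm_\ell$), the inner linear layers $\ell=1,3,\ldots,L-1$ (estimators $\Gbf^\pm_\ell$ via \eqref{eq:glintrans}), and the endpoints $\gbf^+_0$ and $\gbf^-_L$. In each case the check has three parts: (i) the Gen-ML estimator $h^\pm_\ell$ of \eqref{eq:se_update_functions}, evaluated at $\big(\qbf^-_{k\ell}+\qbf^0_\ell,\,\pbf^+_{k,\lm1}+\pbf^0_{\lm1},\dots\big)$, equals $\zhat^\pm$ because adding the true signal back to the error vector recovers the input $\rbf^\pm$ to $\gbf^\pm_\ell$ (for odd $\ell$ this uses $\Vbf_\ell\tran\rbf^-_\ell=\qbf^-_{k\ell}+\qbf^0_\ell$ and $\Vbf_{\lm1}\rbf^+_{\lm1}=\pbf^+_{k,\lm1}+\pbf^0_{\lm1}$ together with \eqref{eq:glintrans}); (ii) the Gen-ML update $\fbf^\pm_\ell$ then reproduces lines~\ref{line:rp}/\ref{line:rn} by the algebraic identity above; and (iii) the multiplications $\pbf=\Vbf_\ell\qbf$ and $\qbf=\Vbf_{\lm1}\tran\pbf$ in Gen-ML are precisely the transform relations built into \eqref{eq:pqdef}. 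The parameters then match as well: the quantity $\alpha^\pm_{k\ell}$ of Algorithm~\ref{algo:ml-vamp} (lines~\ref{line:alphap}/\ref{line:alphan}), which is a componentwise-averaged partial derivative of $\gbf^\pm_\ell$, equals $\mu^\pm_{k\ell}=\bkt{\varphibf^\pm_{k\ell}}$ for the chosen parameter statistic function (the componentwise derivative of the estimator $h^\pm_\ell$), and choosing $T^\pm_{k\ell}$ so that $\gamma^\pm_{k\ell},\eta^\pm_{k\ell}$ are formed from $\alpha^\pm_{k\ell}$ exactly as in \eqref{eq:gamupdate} makes the accumulated list satisfy $\Lambda^\pm_{k\ell}=\theta^\pm_{k\ell}$ from \eqref{eq:thetagam}. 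Part~2 is then immediate: with the component functions $f^0_\ell,f^\pm_{k\ell},\varphi^\pm_{k\ell},T^\pm_{k\ell}$ pinned down as above, Algorithm~\ref{algo:gen_se} is literally Algorithm~\ref{algo:mlvamp_se} written symbol-for-symbol, with the scalar random variables $Q^0_\ell,P^0_\ell,Q^\pm_{k\ell},P^\pm_{k\ell}$ replacing the vectors and the perturbation random variables \eqref{eq:perturbations} replacing $\wbf_\ell$.

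The work is entirely verification; the main obstacle is keeping the rotation conventions straight, namely which quantities live in the ``$\qbf$-domain'' versus the ``$\pbf$-domain'', the fact that these roles swap between even and odd layers, the zero-padding of $\diag(\sbf_\ell)$, and the one-step index shift $k\mapsto k{+}1$ introduced by the backward pass, so that the arguments of $h^\pm_\ell$ and $f^\pm_\ell$ in \eqref{eq:se_update_functions} line up exactly with those of $\gbf^\pm_\ell$ in Algorithm~\ref{algo:ml-vamp}. A secondary subtlety is that the parameter statistic function $\varphibf^\pm_{k\ell}$ may depend only on the parameter list available \emph{before} $\alpha^\pm_{k\ell}$ is appended (lines~\ref{line:mup_gen}--\ref{line:lamp_gen}), so it must be taken as the derivative of the estimator $h^\pm_\ell$ rather than of $f^\pm_\ell$, which makes the choice of $T^\pm_{k\ell}$ the place where the $\alpha\!\to\!\gamma\!\to\!\eta$ conversions of \eqref{eq:gamupdate} are absorbed.
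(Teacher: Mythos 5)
Your proposal is correct and follows the same route as the paper, which disposes of this lemma (together with Lemma~\ref{lem:Ass1}) in a single sentence asserting that it follows by direct substitution of the definitions \eqref{eq:pq0}--\eqref{eq:pqdef} keeping the SVD \eqref{eq:SVD} in mind; your write-up simply carries out that substitution explicitly, with the key identity being exactly the rearrangement of lines~\ref{line:rp}/\ref{line:rn} into the form \eqref{eq:fhp}/\eqref{eq:fhn}.
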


\begin{lemma}\label{lem:Ass1}
Assumptions \ref{as:gen} and \ref{as:gen2} are satisfied by the conditions in Theorem \ref{thm:main_result}.
\end{lemma}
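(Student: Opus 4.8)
The proof is a clause-by-clause verification that the hypotheses of Theorem~\ref{thm:main_result}, combined with the identification of ML-VAMP with the Gen-ML recursion in Lemma~\ref{lem:special_case}, imply every part of Assumptions~\ref{as:gen} and \ref{as:gen2}. I begin with Assumption~\ref{as:gen}(a),(b), which are essentially restatements of the LSL model under the change of variables \eqref{eq:pq0}--\eqref{eq:pqdef}: the matrices $\{\Vbf_\ell\}$ are Haar on the orthogonal group, independent across $\ell$ and of $\z_0^0$, $\{\xibf_\ell\}$, and the initializations $\rbf_{0\ell}^-$, which is exactly Assumption~\ref{as:gen}(a) for $\qbf_0^0$, $\qbf_{0\ell}^-$, and $\wbf_\ell$. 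The empirical convergence of $\{\qbf_{0\ell}^-\}$ and $\{\wbf_\ell\}$ with the required joint Gaussian/independence structure follows by combining \eqref{eq:varinit}, \eqref{eq:main_result_initialization}, and the definition \eqref{eq:perturbations} of $W_\ell$ in terms of $Z_0^0,\Xi_\ell,S_\ell,\wb B_\ell,\wb\Xi_\ell$; the joint Gaussianity of $(Q_{00}^-,\dots,Q_{0,L-1}^-)$ and its independence of $(W_\ell,P^0_{\ell-1})$ are built into \eqref{eq:main_result_initialization}, and $\Lambda_{01}^-\to\Lambdabar_{01}^-$ is \eqref{eq:thetalim}.

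Next I verify Assumption~\ref{as:gen}(c) (componentwise action) and Assumption~\ref{as:gen2}(a) (continuity of the parameter updates). The componentwise property holds because, by hypothesis, the nonlinear estimators $\gbf_\ell^\pm$ (even $\ell$), $\gbf_0^+$, $\gbf_L^-$ act componentwise, the linear-layer estimators have the transformed form \eqref{eq:glintrans} with $\Gbf_\ell^\pm$ componentwise, and the activations $\phibf_\ell$ are separable; the explicit formulas \eqref{eq:se_update_functions} then exhibit every $\fbf^0_\ell$, $\fbf^\pm_{k\ell}$, and $\varphibf^\pm_{k\ell}$ as the componentwise extension of a scalar map. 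Continuity of $T^\pm_{k\ell}$ at $\mu^\pm_{k\ell}=\mubar^\pm_{k\ell}$ follows from the algebraic rules \eqref{eq:gamupdate}: the maps $\alpha\mapsto\gamma/\alpha$ and the subsequent affine combinations are continuous in a neighborhood of $\wb\alpha^\pm_{k\ell}$ precisely because we assume $\wb\alpha^\pm_{k\ell}\in(0,1)$, so both $\alpha$ and $1-\alpha$ stay bounded away from $0$.

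The bulk of the work, and the main obstacle, is Assumption~\ref{as:gen2}(b),(c): uniform Lipschitz continuity of $\fbf^\pm_{k\ell}$, of the relevant partial derivatives $\partial\fbf^+_{k\ell}/\partial\qbf^-_{k\ell}$ and $\partial\fbf^-_{k\ell}/\partial\pbf^+_{k,\ell-1}$, of $\varphibf^\pm_{k\ell}$, and of $\fbf^0_\ell$. I would unwind the definitions \eqref{eq:fh0p}--\eqref{eq:fhn}: each $f^\pm_{k\ell}$ is an affine combination, with coefficients $\tfrac{1}{1-\alpha^\pm}$ and $\tfrac{\alpha^\pm}{1-\alpha^\pm}$, of the identity and of $h^\pm_{k\ell}$, where $h^\pm_{k\ell}$ is $g_\ell^\pm$ (resp.\ $G_\ell^\pm$) evaluated at shifted arguments. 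Uniform Lipschitz continuity is stable under such argument shifts and affine recombinations, and the coefficients are uniformly controlled on a neighborhood of $\wb\alpha^\pm_{k\ell}\in(0,1)$; since $\varphibf^\pm_{k\ell}$ is by construction exactly the needed partial derivative of $h^\pm_{k\ell}$, all three objects inherit uniform Lipschitz continuity from the hypothesis that $\gbf_\ell^\pm$, $\Gbf_\ell^\pm$, and their first derivatives are uniformly Lipschitz. For the linear layers one must additionally check that $G_\ell^\pm$ from \eqref{eq:Glintrans_componentwise} is uniformly Lipschitz with the prescribed dependence on $(S_\ell,\wb B_\ell)$, and that $\fbf^0_\ell$ (affine in $p^0_{\ell-1}$ with slope $\bar s_\ell$) is uniformly Lipschitz in $(p^0_{\ell-1},w_\ell)$; this is where the bounded-singular-value assumption $s_{\ell,n}<S_{\ell,\max}$ is used to control the bilinear singular-value terms. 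Making the Lipschitz constants genuinely uniform over the parameter neighborhoods, and handling these bilinear terms, is the delicate bookkeeping step.

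Finally, Assumption~\ref{as:gen2}(d), the asymptotic divergence-free property \eqref{eq:fdivfree}. By \eqref{eq:fhp}, $\partial f^+_{k\ell}/\partial q^-_{k\ell}=\tfrac{1}{1-\alpha^+_{k\ell}}\big(\partial h^+_{k\ell}/\partial q^-_{k\ell}-\alpha^+_{k\ell}\big)$, and $\partial h^+_{k\ell}/\partial q^-_{k\ell}$ is exactly the derivative whose empirical average defines $\alpha^+_{k\ell}$ in line~\ref{line:alphap} of Algorithm~\ref{algo:ml-vamp}; hence $\bkt{\partial\fbf^+_{k\ell}/\partial\qbf^-_{k\ell}}=0$ for every $N$, not merely in the limit, and the backward identity follows by the symmetric computation using \eqref{eq:fhn} and line~\ref{line:alphan}. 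This point---that the $\alpha$-updates of ML-VAMP are precisely the Onsager corrections enforcing \eqref{eq:fdivfree}---is the conceptual crux, but its verification is a one-line computation. Collecting the above checks proves Lemma~\ref{lem:Ass1}.
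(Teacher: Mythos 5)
Your proposal is correct and follows the same route as the paper, which disposes of this lemma with the single remark that it ``follows from direct substitution of the quantities keeping in mind \eqref{eq:SVD}''; your clause-by-clause verification simply supplies the details the paper omits, and your identification of the $\alpha$-updates as the mechanism enforcing the divergence-free condition \eqref{eq:fdivfree} is exactly the intended observation. The only minor imprecision is the claim that $\bkt{\partial\fbf^+_{k\ell}/\partial\qbf^-_{k\ell}}=0$ exactly for every $N$: that holds when the function is evaluated at the finite-$N$ parameter $\alpha^+_{k\ell}$, whereas \eqref{eq:fdivfree} is stated at the limit $\Lambdabar^+_{k\ell}$ and so requires the convergence $\bkt{\partial\hbf^+/\partial\qbf^-}\to\wb\alpha^+_{k\ell}$ furnished by the induction, but this does not affect the validity of the argument.
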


The lemmas follow from the direct substitution of the quantities keeping in mind \eqref{eq:SVD}. As a consequence of the lemmas, we can apply the result of Theorem \ref{thm:general_convergence} under the conditions given in Theorem \ref{thm:main_result}. 
The convergence of $(\alpha_{k\ell}^\pm,\gamma_{k\ell}^\pm,\eta_{k\ell}^\pm)$ follows from the convergence of $\Lambda_{k\ell}^\pm$.

Theorem \ref{thm:general_convergence} leads to the conclusion that the following triplets are asymptotically normal
\begin{align*}
(\z_{\ell-1}^0,\r_{\ell-1}^+-\z_{\ell-1}^0,\r_\ell^--\z_{\ell}^0)\equiv(\p_{\ell-1}^0,\p_{\ell-1}^+,\q_\ell^-),\qquad\forall\ \ell\ {\rm even},\\
\left(\V_{\ell-1}\z_{\ell-1}^0,\V_{\ell-1}(\r_{\ell-1}^+-\z_{\ell-1}^0),\V_{\ell}\T(\r_\ell^--\z_{\ell}^0)\right)\equiv(\p_{\ell-1}^0,\p_{\ell-1}^+,\q_\ell^-),\qquad\forall\ \ell\ {\rm odd}.
\end{align*}
The results in Theorem \ref{thm:main_result} follows from the argument definition of PL(2) convergence defined in Appendix \ref{app:empirical_convergence}
\end{subsection}

\begin{subsection}{Proof of Theorem \ref{thm:MMSE_SE}}

Recall the update equations for $(\wb\alpha_{k\ell}^\pm,\wb\gamma_{k\ell}^\pm,\wb\eta_{k\ell}^\pm)$ analogous to \eqref{eq:gamupdate}. Fix the iteration index $k$ and let $\ell$ be even. We showed earlier after stating Theorem \ref{thm:main_result} that \begin{align*}
    \tfrac{1}{N_\ell}\norm{\zhat_{k\ell}^+-\z_\ell^0}\xrightarrow{a.s.}\Exp\left(g_\ell^+(\mathsf{C+A,B+A},\wb\gamma_{k\ell}^-,\wb\gamma_{k,\ell-1}^+)-\phi_\ell(\mathsf{A},\Xi_\ell)\right)^2=:\mc E^+_\ell(\wb\gamma_{k\ell}^-,\wb\gamma_{k,\ell-1}^+)
\end{align*}
We also know that $\eta_{k\ell}^+\xrightarrow{a.s.}\wb\eta_{k\ell}^+=\frac{\wb\gamma_{k\ell}^-}{\wb\alpha_{k\ell}^+}.$ We need to show that the two limits coincide or equivalently $\frac{\wb\alpha_{k\ell}^+}{\wb\gamma_{k\ell}^-}=\mc E^+_\ell(\wb\gamma_{k\ell}^-,\wb\gamma_{k,\ell-1}^+)$.
In case of MMSE estimation, where $g_{\ell,\mathsf{mmse}}^\pm$ from \eqref{eq:mmse_estimator} is applied, we can simplify $\wb\alpha_{k\ell}^\pm$. From line \ref{line:alphap} of Algorithm \ref{algo:mlvamp_se}, then we have
\begin{align*}
    \MoveEqLeft\wb\alpha_{k\ell}^+ = \Exp\frac{\partial h_\ell^+}{\partial Q_{\ell}^-}(P^0_{\lm1},P^+_{k,\lm1},Q_{k\ell}^-,W_\ell,\wb\theta^+_{k\ell})= \Exp\frac{\partial g_\ell^+}{\partial Q_{\ell}^-}(Q_{k\ell}^-+Q_{\ell}^0,P_{k,\ell-1}^++P_{\ell-1}^0,\wb\theta_{k\ell}^\pm)\\
    \MoveEqLeft=
    \Exp\frac{\partial}{\partial Q_{\ell}^-} \int \tfrac{p(z_\ell|z_{\ell-1})}{Z}\exp\big(-\frac{\wb\gamma_{k\ell}^-}2(z_\ell-Q_{\ell}^--Q_\ell^0)^2-\frac{\wb\gamma_{k,\ell-1}^+}2(z_{\ell-1}-P_{k,\ell-1}^+-P_{\ell-1}^0)^2\big)z_\ell dz_\ell dz_{\ell-1},
\end{align*}
for a normalizing factor $Z$. The last expectation above is with respect to the density of $(P_{\ell-1}^0,P_{k,\ell-1}^+,Q_{k\ell}^-)$ which are Gaussian and $Q_\ell^0=\phi_\ell(P_{\ell-1},\Xi_\ell)$. 
Exchanging the order of the integration and the partial derivative, gives the desired expression for $\mc E_\ell^+$.
\end{subsection}



\bibliographystyle{IEEEtran}
\bibliography{bibl_mlvamp}

\begin{thebibliography}{10}
\providecommand{\url}[1]{#1}
\csname url@samestyle\endcsname
\providecommand{\newblock}{\relax}
\providecommand{\bibinfo}[2]{#2}
\providecommand{\BIBentrySTDinterwordspacing}{\spaceskip=0pt\relax}
\providecommand{\BIBentryALTinterwordstretchfactor}{4}
\providecommand{\BIBentryALTinterwordspacing}{\spaceskip=\fontdimen2\font plus
\BIBentryALTinterwordstretchfactor\fontdimen3\font minus
  \fontdimen4\font\relax}
\providecommand{\BIBforeignlanguage}[2]{{%
\expandafter\ifx\csname l@#1\endcsname\relax
\typeout{** WARNING: IEEEtran.bst: No hyphenation pattern has been}%
\typeout{** loaded for the language `#1'. Using the pattern for}%
\typeout{** the default language instead.}%
\else
\language=\csname l@#1\endcsname
\fi
#2}}
\providecommand{\BIBdecl}{\relax}
\BIBdecl

\bibitem{fletcher2018inference}
A.~K. Fletcher, S.~Rangan, and P.~Schniter, ``Inference in deep networks in
  high dimensions,'' \emph{Proc. IEEE Int. Symp. Information Theory}, 2018.

\bibitem{pandit2019asymptotics}
P.~{Pandit}, M.~{Sahraee}, S.~{Rangan}, and A.~K. {Fletcher}, ``Asymptotics of
  {MAP} inference in deep networks,'' in \emph{Proc. IEEE Int. Symp.
  Information Theory}, 2019, pp. 842--846.

\bibitem{eldar2012compressed}
Y.~C. Eldar and G.~Kutyniok, \emph{Compressed Sensing: Theory and
  Applications}.\hskip 1em plus 0.5em minus 0.4em\relax Cambridge Univ. Press,
  Jun. 2012.

\bibitem{tibshirani1996regression}
R.~Tibshirani, ``Regression shrinkage and selection via the lasso,''
  \emph{Journal of the Royal Statistical Society: Series B (Methodological)},
  vol.~58, no.~1, pp. 267--288, 1996.

\bibitem{rezende2014stochastic}
D.~J. Rezende, S.~Mohamed, and D.~Wierstra, ``Stochastic backpropagation and
  approximate inference in deep generative models,'' in \emph{Proc.\ ICML},
  2014, pp. 1278--1286.

\bibitem{kingma2013auto}
D.~P. Kingma and M.~Welling, ``Auto-encoding variational bayes,''
  \emph{arXiv:1312.6114}, 2013.

\bibitem{radford2015unsupervised}
A.~Radford, L.~Metz, and S.~Chintala, ``Unsupervised representation learning
  with deep convolutional generative adversarial networks,'' \emph{arXiv
  preprint arXiv:1511.06434}, 2015.

\bibitem{salakhutdinov2015learning}
R.~Salakhutdinov, ``Learning deep generative models,'' \emph{Annual Review of
  Statistics and Its Application}, vol.~2, 2015.

\bibitem{ulyanov2018deep}
D.~Ulyanov, A.~Vedaldi, and V.~Lempitsky, ``Deep image prior,'' in \emph{Proc.
  IEEE Conf. Computer Vision and Pattern Recognition}, 2018, pp. 9446--9454.

\bibitem{van2018compressed}
D.~Van~Veen, A.~Jalal, M.~Soltanolkotabi, E.~Price, S.~Vishwanath, and A.~G.
  Dimakis, ``Compressed sensing with deep image prior and learned
  regularization,'' \emph{arXiv preprint arXiv:1806.06438}, 2018.

\bibitem{bertalmio2000image}
M.~Bertalmio, G.~Sapiro, V.~Caselles, and C.~Ballester, ``Image inpainting,''
  in \emph{Proc. ACM Conf. Computer Graphics and Interactive Techniques}, 2000,
  pp. 417--424.

\bibitem{yeh2016semantic}
R.~Yeh, C.~Chen, T.~Y. Lim, M.~Hasegawa-Johnson, and M.~N. Do, ``Semantic image
  inpainting with perceptual and contextual losses,'' \emph{arXiv:1607.07539},
  2016.

\bibitem{bora2017compressed}
A.~Bora, A.~Jalal, E.~Price, and A.~G. Dimakis, ``Compressed sensing using
  generative models,'' \emph{Proc.\ ICML}, 2017.

\bibitem{mccullagh1989generalized}
P.~McCullagh and J.~A. Nelder, \emph{Generalized Linear Models}, 2nd~ed.\hskip
  1em plus 0.5em minus 0.4em\relax Chapman \& Hall, 1989.

\bibitem{mousavi2015deep}
A.~Mousavi, A.~B. Patel, and R.~G. Baraniuk, ``A deep learning approach to
  structured signal recovery,'' in \emph{Proc. Allerton Conf. Comm. Control \&
  Comput.}, 2015, pp. 1336--1343.

\bibitem{metzler2017learned}
C.~Metzler, A.~Mousavi, and R.~Baraniuk, ``Learned {D}-amp: {P}rincipled neural
  network based compressive image recovery,'' in \emph{Proc. NIPS}, 2017, pp.
  1772--1783.

\bibitem{borgerding2017amp}
M.~Borgerding, P.~Schniter, and S.~Rangan, ``{AMP}-inspired deep networks for
  sparse linear inverse problems,'' \emph{IEEE Trans. Signal Processing},
  vol.~65, no.~16, pp. 4293--4308, 2017.

\bibitem{chang2017one}
J.~H.~R. Chang, C.-L. Li, B.~Poczos, B.~V. K.~V. Kumar, and A.~C.
  Sankaranarayanan, ``One network to solve them all—solving linear inverse
  problems using deep projection models,'' in \emph{IEEE Int. Conf. Computer
  Vision}.\hskip 1em plus 0.5em minus 0.4em\relax IEEE, 2017, pp. 5889--5898.

\bibitem{hand2017global}
P.~Hand and V.~Voroninski, ``Global guarantees for enforcing deep generative
  priors by empirical risk,'' \emph{arXiv:1705.07576}, 2017.

\bibitem{kabkab2018task}
M.~Kabkab, P.~Samangouei, and R.~Chellappa, ``Task-aware compressed sensing
  with generative adversarial networks,'' in \emph{Thirty-Second AAAI
  Conference on Artificial Intelligence}, 2018.

\bibitem{shah2018solving}
V.~Shah and C.~Hegde, ``Solving linear inverse problems using {GAN} priors: An
  algorithm with provable guarantees,'' in \emph{IEEE Int. Conf. Acoustics,
  Speech and Signal Processing}, 2018, pp. 4609--4613.

\bibitem{tripathi2018correction}
S.~Tripathi, Z.~C. Lipton, and T.~Q. Nguyen, ``Correction by projection:
  Denoising images with generative adversarial networks,'' \emph{arXiv preprint
  arXiv:1803.04477}, 2018.

\bibitem{mixon2018sunlayer}
D.~G. Mixon and S.~Villar, ``Sunlayer: Stable denoising with generative
  networks,'' \emph{arXiv preprint arXiv:1803.09319}, 2018.

\bibitem{mahendran2015understanding}
A.~Mahendran and A.~Vedaldi, ``Understanding deep image representations by
  inverting them,'' in \emph{Proc. IEEE Conf. Computer Vision and Pattern
  Recognition}, 2015, pp. 5188--5196.

\bibitem{yosinski2015understanding}
J.~Yosinski, J.~Clune, A.~Nguyen, T.~Fuchs, and H.~Lipson, ``Understanding
  neural networks through deep visualization,'' \emph{arXiv preprint
  arXiv:1506.06579}, 2015.

\bibitem{dumoulin2016adversarially}
V.~Dumoulin, I.~Belghazi, B.~Poole, O.~Mastropietro, A.~Lamb, M.~Arjovsky, and
  A.~Courville, ``Adversarially learned inference,'' \emph{arXiv preprint
  arXiv:1606.00704}, 2016.

\bibitem{cheng2018sharp}
X.~Cheng, N.~S. Chatterji, Y.~Abbasi-Yadkori, P.~L. Bartlett, and M.~I. Jordan,
  ``Sharp convergence rates for langevin dynamics in the nonconvex setting,''
  \emph{arXiv preprint arXiv:1805.01648}, 2018.

\bibitem{welling2011bayesian}
M.~Welling and Y.~W. Teh, ``Bayesian learning via stochastic gradient
  {L}angevin dynamics,'' in \emph{Proc. 28th Int. Conf. Machine Learning},
  2011, pp. 681--688.

\bibitem{DonohoMM:09}
D.~L. Donoho, A.~Maleki, and A.~Montanari, ``Message-passing algorithms for
  compressed sensing,'' \emph{Proc. Nat. Acad. Sci.}, vol. 106, no.~45, pp.
  18\,914--18\,919, Nov. 2009.

\bibitem{DonohoMM:10-ITW1}
------, ``Message passing algorithms for compressed sensing,'' in \emph{Proc.
  Inform. Theory Workshop}, 2010, pp. 1--5.

\bibitem{rangan2011generalized}
S.~Rangan, ``Generalized approximate message passing for estimation with random
  linear mixing,'' in \emph{Proc. IEEE Int. Symp. Information Theory}, 2011,
  pp. 2168--2172.

\bibitem{fletcher2017learning}
A.~K. Fletcher and P.~Schniter, ``Learning and free energies for vector
  approximate message passing,'' in \emph{IEEE Int. Conf. Acoustics, Speech and
  Signal Processing}, 2017, pp. 4247--4251.

\bibitem{fletcher2018plug}
A.~K. Fletcher, P.~Pandit, S.~Rangan, S.~Sarkar, and P.~Schniter, ``Plug-in
  estimation in high-dimensional linear inverse problems: A rigorous
  analysis,'' in \emph{Advances in Neural Information Processing Systems},
  2018, pp. 7440--7449.

\bibitem{sarkar2019bilinear}
S.~Sarkar, A.~K. Fletcher, S.~Rangan, and P.~Schniter, ``Bilinear recovery
  using adaptive vector-amp,'' \emph{IEEE Tran. Signal Processing}, vol.~67,
  no.~13, pp. 3383--3396, 2019.

\bibitem{rangan2019vamp}
S.~Rangan, P.~Schniter, and A.~K. Fletcher, ``Vector approximate message
  passing,'' \emph{IEEE Trans. Information Theory}, vol.~65, no.~10, pp.
  6664--6684, 2019.

\bibitem{minka2001expectation}
T.~P. Minka, ``Expectation propagation for approximate bayesian inference,'' in
  \emph{Proc. UAI}, 2001, pp. 362--369.

\bibitem{takeuchi2017rigorous}
K.~Takeuchi, ``Rigorous dynamics of expectation-propagation-based signal
  recovery from unitarily invariant measurements,'' in \emph{Proc. IEEE Int.
  Symp. Information Theory}, 2017, pp. 501--505.

\bibitem{opper2005expectation}
M.~Opper and O.~Winther, ``Expectation consistent approximate inference,''
  \emph{J. Machine Learning Res.}, vol.~6, pp. 2177--2204, Dec. 2005.

\bibitem{fletcher2016expectation}
A.~K. Fletcher, M.~Sahraee-Ardakan, S.~Rangan, and P.~Schniter, ``Expectation
  consistent approximate inference: Generalizations and convergence,'' in
  \emph{Proc. IEEE Int. Symp. Information Theory}, 2016, pp. 190--194.

\bibitem{cakmak2014samp}
B.~Cakmak, O.~Winther, and B.~H. Fleury, ``{S-AMP}: Approximate message passing
  for general matrix ensembles,'' in \emph{Proc.\ IEEE ITW}, 2014.

\bibitem{ma2017orthogonal}
J.~Ma and L.~Ping, ``Orthogonal {AMP},'' \emph{IEEE Access}, vol.~5, pp.
  2020--2033, 2017.

\bibitem{manoel2017multi}
A.~Manoel, F.~Krzakala, M.~M{\'e}zard, and L.~Zdeborov{\'a}, ``Multi-layer
  generalized linear estimation,'' in \emph{Proc. IEEE Int. Symp. Information
  Theory}, 2017, pp. 2098--2102.

\bibitem{krzakala2014variational}
F.~Krzakala, A.~Manoel, E.~W. Tramel, and L.~Zdeborov{\'a}, ``Variational free
  energies for compressed sensing,'' in \emph{Proc. IEEE Int. Symp. Information
  Theory}, 2014, pp. 1499--1503.

\bibitem{rangan2016fixed}
S.~Rangan, P.~Schniter, E.~Riegler, A.~K. Fletcher, and V.~Cevher, ``Fixed
  points of generalized approximate message passing with arbitrary matrices,''
  \emph{IEEE Trans. Information Theory}, vol.~62, no.~12, pp. 7464--7474, 2016.

\bibitem{BayatiM:11}
M.~Bayati and A.~Montanari, ``The dynamics of message passing on dense graphs,
  with applications to compressed sensing,'' \emph{IEEE Trans. Inform. Theory},
  vol.~57, no.~2, pp. 764--785, Feb. 2011.

\bibitem{javanmard2013state}
A.~Javanmard and A.~Montanari, ``State evolution for general approximate
  message passing algorithms, with applications to spatial coupling,''
  \emph{Information and Inference}, vol.~2, no.~2, pp. 115--144, 2013.

\bibitem{reeves2017additivity}
G.~Reeves, ``Additivity of information in multilayer networks via additive
  {G}aussian noise transforms,'' in \emph{Proc. Allerton Conf. Comm. Control \&
  Comput.}, 2017, pp. 1064--1070.

\bibitem{gabrie2018entropy}
M.~Gabri{\'e}, A.~Manoel, C.~Luneau, J.~Barbier, N.~Macris, F.~Krzakala, and
  L.~Zdeborov{\'a}, ``Entropy and mutual information in models of deep neural
  networks,'' in \emph{Proc.\ NIPS}, 2018.

\bibitem{barbier2019optimal}
J.~Barbier, F.~Krzakala, N.~Macris, L.~Miolane, and L.~Zdeborov{\'a}, ``Optimal
  errors and phase transitions in high-dimensional generalized linear models,''
  \emph{Proc. Nat. Acad. Sci.}, vol. 116, no.~12, pp. 5451--5460, 2019.

\bibitem{reeves2016replica}
G.~Reeves and H.~D. Pfister, ``The replica-symmetric prediction for compressed
  sensing with {G}aussian matrices is exact,'' in \emph{Proc. IEEE Int. Symp.
  Information Theory}, 2016, pp. 665--669.

\bibitem{neal2012bayesian}
R.~M. Neal, \emph{Bayesian learning for neural networks}.\hskip 1em plus 0.5em
  minus 0.4em\relax Springer Science \& Business Media, 2012, vol. 118.

\bibitem{giryes2016deep}
R.~Giryes, G.~Sapiro, and A.~M. Bronstein, ``Deep neural networks with random
  {G}aussian weights: {A} universal classification strategy?'' \emph{IEEE
  Trans. Signal Processing}, vol.~64, no.~13, pp. 3444--3457, 2016.

\bibitem{hanin2018start}
B.~Hanin and D.~Rolnick, ``How to start training: {T}he effect of
  initialization and architecture,'' in \emph{Advances in Neural Information
  Processing Systems}, 2018, pp. 571--581.

\bibitem{choromanska2015loss}
A.~Choromanska, M.~Henaff, M.~Mathieu, G.~B. Arous, and Y.~LeCun, ``The loss
  surfaces of multilayer networks,'' in \emph{Artificial Intelligence and
  Statistics}, 2015, pp. 192--204.

\bibitem{li2018random}
P.~Li and P.-M. Nguyen, ``On random deep weight-tied autoencoders: Exact
  asymptotic analysis, phase transitions, and implications to training,'' in
  \emph{Proc. Int. Conf. Learning Research}, 2019.

\bibitem{schoenholz2016deep}
S.~S. Schoenholz, J.~Gilmer, S.~Ganguli, and J.~Sohl-Dickstein, ``Deep
  information propagation,'' \emph{arXiv preprint arXiv:1611.01232}, 2016.

\bibitem{novak2018bayesian}
R.~Novak, L.~Xiao, Y.~Bahri, J.~Lee, G.~Yang, J.~Hron, D.~A. Abolafia,
  J.~Pennington, and J.~Sohl-Dickstein, ``Bayesian deep convolutional networks
  with many channels are {G}aussian processes,'' \emph{arXiv preprint
  arXiv:1810.05148}, 2018.

\bibitem{huang2018provably}
W.~Huang, P.~Hand, R.~Heckel, and V.~Voroninski, ``A provably convergent scheme
  for compressive sensing under random generative priors,'' \emph{arXiv
  preprint arXiv:1812.04176}, 2018.

\bibitem{lei2019inverting}
Q.~Lei, A.~Jalal, I.~S. Dhillon, and A.~G. Dimakis, ``Inverting deep generative
  models, one layer at a time,'' \emph{arXiv:1906.07437}, 2019.

\bibitem{rush2018finite}
C.~Rush and R.~Venkataramanan, ``Finite-sample analysis of approximate message
  passing algorithms,'' \emph{IEEE Trans. Inform. Theory}, vol.~64, no.~11, pp.
  7264--7286, 2018.

\bibitem{wainwright2008graphical}
M.~J. Wainwright and M.~I. Jordan, ``Graphical models, exponential families,
  and variational inference,'' \emph{Foundations and Trends in Machine
  Learning}, vol.~1, no. 1--2, pp. 1--305, 2008.

\bibitem{huber2011robust}
P.~J. Huber, \emph{Robust Statistics}.\hskip 1em plus 0.5em minus 0.4em\relax
  Springer, 2011.

\bibitem{schniter2016vector}
P.~Schniter, S.~Rangan, and A.~K. Fletcher, ``Vector approximate message
  passing for the generalized linear model,'' in \emph{Proc. Asilomar Conf.
  Signals, Syst. \& Computers}, 2016, pp. 1525--1529.

\bibitem{fletcher2017inference}
A.~K. Fletcher, S.~Rangan, and P.~Schniter, ``Inference in deep networks in
  high dimensions,'' \emph{arXiv:1706.06549}, 2017.

\bibitem{he2016application}
B.~He, H.~Liu, J.~Lu, and X.~Yuan, ``Application of the strictly contractive
  peaceman-rachford splitting method to multi-block separable convex
  programming,'' in \emph{Splitting Methods in Communication, Imaging, Science,
  and Eng.}\hskip 1em plus 0.5em minus 0.4em\relax Springer, 2016.

\bibitem{he2014strictly}
B.~He, H.~Liu, Z.~Wang, and X.~Yuan, ``A strictly contractive
  peaceman--rachford splitting method for convex programming,'' \emph{SIAM
  Journal on Optimization}, vol.~24, no.~3, pp. 1011--1040, 2014.

\bibitem{yedidia2005constructing}
J.~S. Yedidia, W.~T. Freeman, and Y.~Weiss, ``Constructing free-energy
  approximations and generalized belief propagation algorithms,'' \emph{IEEE
  Trans. Information Theory}, vol.~51, no.~7, pp. 2282--2312, 2005.

\bibitem{pereyra2015survey}
M.~Pereyra, P.~Schniter, E.~Chouzenoux, J.-C. Pesquet, J.-Y. Tourneret, A.~O.
  Hero, and S.~McLaughlin, ``A survey of stochastic simulation and optimization
  methods in signal processing,'' \emph{IEEE J. Sel. Top. Signal Process.},
  vol.~10, no.~2, pp. 224--241, 2015.

\bibitem{krzakala2012statistical}
F.~Krzakala, M.~M{\'e}zard, F.~Sausset, Y.~Sun, and L.~Zdeborov{\'a},
  ``Statistical-physics-based reconstruction in compressed sensing,''
  \emph{Physical Review X}, vol.~2, no.~2, p. 021005, 2012.

\bibitem{kingma2014adam}
D.~P. Kingma and J.~Ba, ``Adam: A method for stochastic optimization,''
  \emph{arXiv preprint arXiv:1412.6980}, 2014.

\bibitem{RanSchFle:14-ISIT}
S.~Rangan, P.~Schniter, and A.~K. Fletcher, ``On the convergence of approximate
  message passing with arbitrary matrices,'' in \emph{Proc. IEEE Int. Symp.
  Information Theory}, Jul. 2014, pp. 236--240.

\end{thebibliography}

\end{document}